\newcommand{\thickhline}{%
    \noalign {\ifnum 0=`}\fi \hrule height 1pt
    \futurelet \reserved@a \@xhline
}
\newcolumntype{"}{@{\hskip\tabcolsep\vrule width 1pt\hskip\tabcolsep}}
\def\grayOne{\color{black!10}1\color{black}}
\def\0{\color{white}1\color{black}}
\def\grayBlock{\cellcolor{black!10}\grayOne}
\newtheorem{thm}{Theorem}
\newtheorem{observation}{Observation}
\newtheorem{mydef}{Definition}
\newtheorem{corollary}{Corollary}
\newtheorem{lemma}{Lemma}
\def\pp{p_{\mathrm{in}}}
\def\qp{p_{\mathrm{out}}}
\def\one{\mathbf{1}}
\def\diag{\mathrm{diag}}
\def\x{\mathbf x} \def\y{\mathbf y}  
\def\u{\mathbf u} \def\v{\mathbf v}  
 \def\e{\mathbf e}
\def\sym{\mathrm{sym}}
\def\PowerMeanLaplacian{L_{p}}
\def\R{\mathbb{R}}
\newcommand{\pin}{p_{in}}
\newcommand{\pout}{p_{out}}
\newcommand{\ones}{\one}
\newcommand{\sign}{\operatorname{sign}}
\newcommand{\Wt}{\mathcal{W}^{(t)}}
\newcommand{\bchi}{\boldsymbol{\chi}}
\newcommand{\lsymn}[1]{\tilde{\mathcal{L}}_{\operatorname{sym}}^{(#1)}}
\newcommand{\Lp}[1]{\mathsf{L}_{#1}}
\newcommand{\Lnp}[1]{\mathsf{\tilde L}_{#1}}
\newcommand{\PowerMeanLaplacianCommand}[1]{L_{#1}}
\newcommand{\PowerMeanLaplacianInExpectationCommand}[1]{\mathcal{L}_{#1}}
\newcommand{\abs}[1]{\left|#1\right|}
\newcommand{\vectornorm}[1]{\left|\left|#1\right|\right|}
\newcommand{\multiLayerGraph}[1]{\mathbb{#1}}
 \newenvironment{talign*}
 {\csname align*\endcsname}
 {\endalign}
   \newcommand{\myComment}[1]{}  
\def\pedro#1{\textcolor{black}{#1}}
\renewcommand{\L}[1]{\mathcal{L}_{#1}}
\newcommand{\Ln}[1]{\mathcal{\tilde L}_{#1}}
\newcommand{\lsym}[1]{L_{sym}^{(#1)}}
\begin{document}

%

%
\runningauthor{Pedro Mercado, Antoine Gautier, Francesco Tudisco, Matthias Hein}

\twocolumn[

\aistatstitle{The Power Mean Laplacian for Multilayer Graph Clustering}

\aistatsauthor{ Pedro Mercado$^1$ \And Antoine Gautier$^1$ \And  Francesco Tudisco$^2$ \And Matthias Hein$^1$ }

\aistatsaddress{ 
$^1$Department of Mathematics and Computer Science, Saarland University, Germany \\
$^2$Department of Mathematics and Statistics, University of Strathclyde, G11XH Glasgow, UK
} 
]

\begin{abstract}

Multilayer graphs encode different kind of interactions between the same set of entities.
When one wants to cluster such a multilayer graph, the natural question arises how one should merge
the information \pedro{from} different layers. We introduce in this paper a one-parameter family of matrix power means
for merging the Laplacians from different layers and analyze it in expectation in the stochastic block model. We show that
this family allows to recover ground truth clusters under different settings and verify this in real world data.
%
%
While computing the matrix power mean can be very expensive for large graphs, we introduce a numerical scheme to efficiently compute its eigenvectors for the case of large sparse graphs.
\end{abstract}

\section{Introduction}\label{sec:introduction}

Multilayer graphs have received an increasing amount of attention due to their capability to encode different kinds of interactions between
the same set of entities~\cite{Boccaletti:2014a,Kivela:2016:Multilayer}.  
This kind of graphs arise naturally in diverse applications such as
transportation networks~\cite{gallotti2015multilayer},
financial-asset markets~\cite{Bazzi:2016:Community},
temporal dynamics~\cite{taylor:2016b, Taylor:2016a},
semantic world clustering~\cite{sedoc:2017a},
multi-video face analysis~\cite{Cao:2015a},
mobile phone networks~\cite{kiukkonen:2010a},
social balance~\cite{Cartwright:1956:Structural},
citation analysis~\cite{Tang:2009:CMG:1674659.1677085}, and many others.
The extension of clustering techniques to multilayer graphs 
is a challenging task and several approaches have been proposed so far.
See 
\cite{Kim:2015:CDM,Sun:2013a,Xu:2013a,zhao:2017b}
for an overview.
For instance, 
\cite{Dong:2012:Clustering,Dong:2014:Grassmann,Tang:2009:CMG:1674659.1677085,zhao:2017a}
rely on matrix factorizations, whereas
\cite{Bacco:2017,Paul:2016b,Peixoto:2015a,Schein:2015:BPT:2783258.2783414,schein:2016}
take a Bayesian inference approach,
and~\cite{Kumar:2011,NIPS2011_4360} enforce consistency among layers in the resulting clustering assignment.
In 
\cite{Mucha:2010a,Paul:2016a,wilson2017community}
Newman's modularity~\cite{Newman:2006a} is extended to multilayer graphs.
Recently 
\cite{Domenico:2015a,Stanley:2016a}
proposed to compress a multilayer graph by combining sets of similar layers (called \textquoteleft strata\textquoteright) to later identify the corresponding communities. 
Of particular interest to our work is the popular 
approach~\cite{Argyriou:2005,Chen:2017a,huang:2012a,taylor:2016b,zhou2007spectral} that first blends the information of a multilayer graph by
finding a suitable weighted arithmetic mean of the layers and then apply standard clustering methods to the resulting mono-layer graph.

In this paper we focus on extensions of spectral clustering to multilayer graphs.
Spectral clustering is a well established method for one-layer graphs which, based on the first eigenvectors of the graph Laplacian, embeds nodes of the graphs in $\R^k$  and then uses $k$-means to find the partition. 
We propose to blend the information of a multilayer graph by taking certain matrix power means of Laplacians of the layers.

The power mean of scalars is a general family of means that includes as special cases,
the arithmetic, geometric and harmonic means. The arithmetic mean of Laplacians has been used before \pedro{in the case of signed networks}~\cite{Kunegis:2010:spectral}
and thus our family of matrix power means, see Section \ref{subsec:matrix_power_means_for_multilayer_graph_clustering}, is a natural extension of this approach. One of our main contributions is to show that the arithmetic mean is actually suboptimal to merge information from different
layers.  

We analyze the family of matrix power means in the Stochastic Block Model (SBM) for multilayer graphs in two settings, see Section \ref{sec:stochastic_block_model}. In the first one all the layers are informative, whereas in the second setting none of the individual layers contains the full information but only if one considers them all together.  We show that as the parameter of the family of Laplacian means tends to $-\infty$,  in expectation one can recover perfectly
the clusters in both situations. We provide extensive experiments which show that this behavior is stable when
one samples sparse graphs from the SBM.
Moreover, in Section \ref{sec:experiment}, we provide additional experiments on real world graphs which confirm our finding in the SBM.

A main challenge for our approach is that the matrix power mean of sparse matrices is in general dense and thus does not scale to large sparse networks in a straightforward fashion. 
Thus a further contribution of this paper in Section \ref{sec:PowerMethod} is to show that the first few eigenvectors of the matrix power mean  can be computed efficiently.  Our algorithm combines the power method with a Krylov subspace approximation technique  and allows to  compute the extremal eigenvalues and eigenvectors of the power mean of matrices without ever computing the matrix itself.


\section{Spectral clustering of multilayer graphs using matrix power means of Laplacians}\label{sec:multilayer_graph_clustering}
Let $V=\{v_1,\ldots,v_n\}$ be a set of nodes and let $T$ the number layers,
represented by adjacency  matrices $\multiLayerGraph{W} = \{ W^{(1)},\ldots,W^{(T)} \}$.
For each non-negative weight matrix $W^{(t)} \in \R^{n\times n}_+$ we have a graph $G^{(t)} = (V,W^{(t)})$  and a multilayer graph is the set ${\multiLayerGraph{G}=\{G^{(1)}, \ldots,G^{(T)}\} }$.  
%
%
%
%
%
In this paper our main focus are assortative graphs. This kind of graphs are the most common in the literature (see f.i.~\cite{Luxburg:2007:tutorial}) and are used to model the situation where edges carry \textit{similarity} information of pairs of vertices and thus are indicative for vertices being in the same cluster. For an assortative graph $G=\!(V,W)$ spectral clustering is typically based on the Laplacian matrix and its normalized version, defined respectively as

$\,$\hfill  $L = D - W \quad\qquad\,\, L_\sym = D^{-1/2}LD^{-1/2}$ \hfill $\,$

where $D_{ii} = \sum_{j=1}^n w_{ij}$ is the diagonal matrix of the degrees of $G$.
Both Laplacians are symmetric positive semidefinite and the multiplicity of eigenvalue $0$ is
equal to the number of connected components in $G$.

Given a multilayer graph with all assortative layers $G^{(1)}, \dots, G^{(T)}$, our goal is to come up with a clustering of the vertex set $V$.  We point out that in this paper a clustering is a partition of $V$, that is each vertex is uniquely assigned to one cluster.

\subsection{Matrix power mean of Laplacians for multilayer graphs}\label{sec:power_mean_of_laplacians}

Let us briefly recall the scalar power mean of a set of non-negative scalars $x_1 , \ldots , x_T$. This is a general one-parameter family of means defined for $p\in\mathbb{R}$ as $m_{p}(x_1 , \ldots , x_T)=({\frac  {1}{T}}\sum _{{i=1}}^{T}x_{i}^{p})^{{{1/p}}}$. 
It includes some well-known means as special cases:
\begin{talign*}
 \lim\limits_{p\to\infty} m_p(x_1,\dots,x_T) &= \max \{x_1,\dots,x_T\}   \\
                   m_1(x_1,\dots,x_T) &= (x_1 + \dots + x_T)/T   \\
 \lim\limits_{p\to0} m_p(x_1,\dots,x_T)      &= \sqrt[T]{x_1\cdot\dots\cdot x_T} \\
                m_{-1}(x_1,\dots,x_T) &= T\, (\frac{1}{x_1}+\dots+\frac{1}{x_T})^{-1} \\ 
\lim\limits_{p\to-\infty} m_p(x_1,\dots,x_T) &= \min \{x_1,\dots,x_T\} 
\end{talign*}
corresponding to the maximum, arithmetic, geometric, harmonic mean and minimum, respectively.


Since matrices do not commute, the scalar power mean can be extended to positive definite matrices in a number of different ways, all of them coinciding when applied to  commuting matrices. In this work we use the following matrix power mean.  
\begin{mydef}[\cite{bhagwat_subramanian_1978}]\label{definition:MatrixPowerMean}
Let $A_1 , \ldots , A_T$ be symmetric positive definite matrices, and $p\in\mathbb{R}$.
The matrix power mean of $A_1 , \ldots , A_T$ with exponent $p$ is
\begin{equation}\label{eq:MatrixPowerMean}
 \textstyle{M_{p}(A_1 , \ldots , A_T)=\left({\frac  {1}{T}}\sum _{{i=1}}^{T}A_{i}^{p}\right)^{ 1/p  } }
\end{equation}
where $A^{1/p}$ is the unique positive definite solution of the matrix Equation $X^p = A$.
\end{mydef}
\pedro{
The previous definition can be extended to positive semi-definite matrices.
For $p\!>\!0$, $M_{p}(A_1 , \ldots , A_T)$ exists for positive semi-definite matrices,
whereas for $p\!\leq\!0$ it is necessary to add a suitable diagonal shift to $A_1 , \ldots , A_T$ to enforce them to be positive definite (see~\cite{bhagwat_subramanian_1978} for details).
}

We call the matrix above \textit{matrix power mean} and we recover for $p=1$ the standard arithmetic mean of the matrices. Note that for $p\to 0$, the power mean \eqref{eq:MatrixPowerMean} converges to the Log-Euclidean matrix  mean \cite{arsigny2007geometric}

$\,$\hfill  $M_{0}(A_1 , \ldots , A_T) = \exp \left({\frac  {1}{T}}\sum _{{i=1}}^{T}\log A_{i}\right)\, ,$ \hfill $\,$

{which is a popular form of matrix geometric mean used, for instance, in diffusion tensor imaging or quantum information theory (see f.i. \cite{arsigny2006log, petz2007quantum}).  

Based on the Karcher mean, a different one-parameter family of matrix power means has been discussed for instance in \cite{lim2012matrix}. When the parameter goes to zero, the Karcher-based power mean of two matrices $A$ and $B$ converges to the geometric mean $A\# B = A^{1/2}(A^{-1/2}BA^{-1/2})^{1/2}A^{-1/2}$. 
The  mean $A\# B$ has been used for instance in \cite{fasi:2016:computing,Mercado:2016:Geometric} for clustering in signed networks,
for metric learning~\cite{zadeh2016geometric} and geometric optimization~\cite{sra2016geometric}.
However, when more than two matrices are considered, the Karcher-based power mean is defined as the  solution of a set of nonlinear matrix equations with no known closed-form solution and thus is not suitable for multilayer graphs.

The matrix power mean \eqref{eq:MatrixPowerMean} is symmetric positive definite and is independent of the labeling of the vertices in the sense that the matrix power mean of relabeled matrices is the same as relabeling the matrix power mean of the original matrices.
The latter property is a necessary requirement for any clustering method.
The following lemma illustrates the relation to the scalar power mean and is frequently used in the proofs.
\begin{lemma}\label{lemma:eigenvalues_and_eigenvectors_of_generalized_mean_V2}
Let $\u$ be an eigenvector of $A_1 , \ldots , A_T$ with corresponding eigenvalues $\lambda_1 , \ldots , \lambda_T$.
Then $\u$ is an eigenvector of $M_p(A_1 , \ldots , A_T)$ with eigenvalue $m_p(\lambda_1 , \ldots , \lambda_T)$.
\end{lemma}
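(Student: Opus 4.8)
The plan is to exploit the common eigenvector $\u$ to simultaneously diagonalize (partially) all the matrices and then track what happens under the functional calculus that defines $M_p$. First I would normalize and write $A_i \u = \lambda_i \u$ with $\|\u\| = 1$, and note that since each $A_i$ is symmetric positive definite, each $\lambda_i > 0$ (or $\geq 0$ after the diagonal shift in the semidefinite case, which is why the $A_i$ are assumed positive definite here). The key observation is that $\u$ is an eigenvector of every power $A_i^p$ with eigenvalue $\lambda_i^p$: for integer $p$ this is immediate, and for general real $p$ it follows from the spectral definition of $A_i^p$, since $A_i^p$ is the unique positive definite matrix with $(A_i^p)^p = A_i$, and the matrix obtained by applying $t \mapsto t^p$ on the spectrum of $A_i$ has $\u$ in the $\lambda_i^p$-eigenspace. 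Concretely, if $A_i = \sum_j \mu_j P_j$ is the spectral decomposition, then $\u$ lies in the range of the projector $P_j$ corresponding to $\mu_j = \lambda_i$, and $A_i^p = \sum_j \mu_j^p P_j$, so $A_i^p \u = \lambda_i^p \u$.

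Next I would push this through the sum: $\bigl(\tfrac{1}{T}\sum_{i=1}^T A_i^p\bigr)\u = \bigl(\tfrac{1}{T}\sum_{i=1}^T \lambda_i^p\bigr)\u$, so $\u$ is an eigenvector of $S := \tfrac{1}{T}\sum_{i=1}^T A_i^p$ with eigenvalue $\sigma := \tfrac{1}{T}\sum_{i=1}^T \lambda_i^p$, which is strictly positive. Finally, applying the map $t \mapsto t^{1/p}$ again through the spectral calculus, $\u$ is an eigenvector of $S^{1/p} = M_p(A_1,\ldots,A_T)$ with eigenvalue $\sigma^{1/p} = \bigl(\tfrac{1}{T}\sum_{i=1}^T \lambda_i^p\bigr)^{1/p} = m_p(\lambda_1,\ldots,\lambda_T)$, which is exactly the claim. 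The case $p \to 0$ (the Log-Euclidean mean) can either be treated by the same argument applied to $\log A_i$ and $\exp$, giving eigenvalue $\exp(\tfrac1T\sum_i \log\lambda_i) = \sqrt[T]{\lambda_1\cdots\lambda_T} = m_0$, or obtained by continuity from the $p \neq 0$ case.

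The only subtlety — and the one point worth stating carefully rather than the main obstacle — is justifying that a common eigenvector of $A$ survives under the non-polynomial functional calculus $A \mapsto A^r$ for real $r$; this is really the statement that any function of $A$ (defined via the spectral theorem) commutes with $A$ and preserves its eigenspaces, combined with the uniqueness of the positive definite $r$-th power. Everything else is a one-line propagation of a scalar identity through a sum and two applications of this fact. There is no genuine difficulty: the lemma is essentially the observation that on the joint $\u$-line all the $A_i$ act as scalars, so $M_p$ acts as the scalar power mean of those scalars.
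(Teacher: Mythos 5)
Your proposal is correct and follows essentially the same route as the paper's own proof: establish $A_i^p\u=\lambda_i^p\u$ via the spectral calculus, sum to get that $\u$ is an eigenvector of $M_p^p$ with eigenvalue $m_p^p(\lambda_1,\ldots,\lambda_T)$, and then pass back through the $1/p$ power using the fact that a positive (semi-)definite matrix and its real powers share eigenvectors. The extra care you take over the functional-calculus step and the $p\to 0$ case is a slightly more detailed version of the paper's one-line justification, not a different argument.
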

\myComment{
\begin{proof}
Observe that for any positive definite matrix $M$, if $Mx = \lambda(M)x$, then $M^p=\lambda(M)^px$.
 Thus, we can see that as
 $A_i u = \lambda_i u$ for $i=1,\ldots,T$.
 then,
 $A_i^p u = \lambda_i^p u$.
 Hence,
 \begin{equation*}
   M_p^p(A_1 , \ldots , A_T)u = \left( {\frac  {1}{T}} \sum_{{i=1}}^{T} A_i^p \right) u =  \left( {\frac  {1}{T}} \sum_{{i=1}}^{T} \lambda_i^p \right)u = m_p^p(\lambda_1 , \ldots , \lambda_T)u
 \end{equation*}
Thus $u$ is an eigenvector of $M_p(A_1 , \ldots , A_T)$ with eigenvalue $m_p(\lambda_1 , \ldots , \lambda_T)$. 
\end{proof}
}


\subsection{Matrix power means for multilayer spectral clustering}\label{subsec:matrix_power_means_for_multilayer_graph_clustering}
We consider the multilayer graph $\multiLayerGraph{G}=(G^{(1)},\ldots,G^{(T)})$ and 
define the \textbf{power mean Laplacian} $\PowerMeanLaplacianCommand{p}$ of $\multiLayerGraph{G}$ as
\begin{equation}\label{eq:powerMeanOperator}
 \PowerMeanLaplacian= M_p(L_{\sym}^{(1)}, \ldots, L_{\sym}^{(T)})
\end{equation}
where $L_{\sym}^{(t)}$ is the normalized Laplacian of the  graph $G^{(t)}$. Note that Definition~\ref{definition:MatrixPowerMean} of the matrix power mean $M_p(A_1 , \ldots , A_T)$ requires $A_1 , \ldots , A_T$ to be positive definite. As the normalized Laplacian is positive semi-definite, in the following, \pedro{for $p\leq 0$}
we add  to $L_{\sym}^{(t)}$ in Equation~\eqref{eq:powerMeanOperator} a small diagonal shift which ensures positive definiteness, that is we consider 
$L_{\sym}^{(t)} + \varepsilon I$ 
throughout the paper. 
\pedro{
For all numerical experiments we set $\epsilon\!=\!\log(1\!+\!\abs{p})$ for $p\!<\!0$ and $\epsilon=10^{-6}$ for $p=0$.
}
Abusing notation slightly, we always mean the shifted versions in the following, unless the shift is explicitly stated.

Similar to spectral clustering for a single graph, we propose Alg.~\ref{alg:powerMean} for the spectral clustering of multilayer graphs based on the matrix power mean of Laplacians. 
\begin{algorithm}[t]
  \DontPrintSemicolon
	\caption{Spectral clustering with $\PowerMeanLaplacian$ on multilayer networks }\label{alg:powerMean}
	\KwIn{Symmetric matrices $W^{(1)},\ldots,W^{(T)}$, number $k$ of clusters to construct. }
	\KwOut{Clusters $C_1,\ldots,C_k$.}
	 Compute eigenvectors $\u_1, \ldots,\u_k$ corresponding to the $k$ smallest eigenvalues of~$\PowerMeanLaplacian$.\;
	 Set $U=(\u_1, \ldots,\u_k)$ and cluster the rows of $U$ with $k$-means into clusters $C_1,\ldots,C_k$.	\;
\end{algorithm}
As in standard spectral clustering, see \cite{Luxburg:2007:tutorial}, our Algorithm~\ref{alg:powerMean} uses the eigenvectors corresponding to the $k$ smallest eigenvalues of the power mean Laplacian $\PowerMeanLaplacian$.
Thus  the relative ordering of the eigenvalues of $\PowerMeanLaplacian$ is of utmost importance.
By Lemma~\ref{lemma:eigenvalues_and_eigenvectors_of_generalized_mean_V2} we know that
if $A_i \u = \lambda(A_i)\u$, for $i=1,\ldots,n$,
then the corresponding eigenvalue of the matrix power mean is $m_p\left(\lambda(A_1), \ldots, \lambda(A_T) \right)$.
Hence, the ordering of eigenvalues strongly depends on the choice of the parameter $p$.
In the next Section we study the effect of the parameter $p$ on the ordering of the eigenvectors of $\PowerMeanLaplacian$ for multilayer graphs following the stochastic block model.

\section{Stochastic block model on multilayer graphs}\label{sec:stochastic_block_model}
In this Section we present an analysis of the eigenvectors and eigenvalues of the power mean Laplacian under the Stochastic Block Model (SBM) for multilayer graphs. 
The SBM is a widespread random graph model for single-layer networks having a prescribed clustering structure \cite{rohe2011spectral}.
Studies of community detection for multilayer networks following the SBM can be found 
in~\cite{Han:2015:CED:3045118.3045279,heimlicher2012community,jog:2015a,xu:2014a,xu:2017a,NIPS2016_6196}.

In order to grasp how different methods identify communities in multilayer graphs following the SBM we will analyze three different settings.
In the first setting all layers follow the same node partition (see f.i.~\cite{Han:2015:CED:3045118.3045279}) and we study the robustness of the spectrum of the power mean Laplacian when the first
layer is informative and the other layers are noise or even contain contradicting information.  In the second setting we consider the
particularly interesting situation where multilayer-clustering is superior over each individual clustering. More specifically, we consider
the case where we are searching for three clusters but each layer contains only information about one of them and only considering
all of the layers together reveals the information about the underlying cluster structure.
In a third setting we go beyond the standard SBM and consider the case where we have a graph partition for each layer, but this partition changes from layer to layer according to a generative model (see f.i.\cite{Bazzi:2016:Community}). However, for the last setting we only provide an empirical study, whereas for the first two settings we analyze the spectrum also analytically. 
For brevity, all the proofs are moved to the Appendix.


In the following we denote by  $\mathcal{C}_1, \ldots, \mathcal{C}_k$ the ground truth clusters that we aim to recover. All the $\mathcal C_i$ are assumed to have the  same size $\abs{\mathcal{C}}$. Calligraphic letters are used for the expected matrices in the SBM. In particular, for a layer $G^{(t)}$ we denote by $\mathcal W^{(t)}$ its expected adjacency matrix, by $\mathcal D^{(t)} = \mathrm{diag}(\mathcal W^{(t)}\one)$ the exptected degree matrix and by $\mathcal L_\sym^{(t)}=I- (\mathcal D^{(t)})^{-1/2}\mathcal W^{(t)} (\mathcal D^{(t)})^{-1/2}$ the expected normalized Laplacian.

\subsection{Case 1: Robustness to noise where all layers have the same cluster structure}\label{sec:sbm_setting1}
The case where all layers follow a given node partition is a natural extension of the mono-layer SBM  to the multilayer setting.
This is done by having different edge probabilities for each layer~\cite{Han:2015:CED:3045118.3045279}, while fixing the same node partition in all layers. We denote by $\pp^{(t)}$ (resp.\ $\qp^{(t)}$) the probability that there exists an edge in layer $G^{(t)}$ between nodes that belong to the same (resp.\ different) clusters.
Then
$\mathcal{W}^{(t)}_{ij}\!\! = \!\pp^{(t)}$ if $v_i,v_j$ belong to the same cluster and $\mathcal{W}^{(t)}_{ij}\!\! =\! \qp^{(t)}$ if $v_i,v_j$ belong to different clusters.
}
Consider the following $k$~vectors:
%
\begin{itemize}[topsep=-3pt,leftmargin=*]\setlength\itemsep{-5pt}
 \item[] \centering $\boldsymbol \chi_1  = \one, \qquad \boldsymbol \chi_i = (k-1)\one_{\mathcal{C}_i}-\one_{\overline{\mathcal{ C}_i}}$\, . 
\end{itemize}
The use of $k$-means on the embedding induced by the vectors $\{\boldsymbol \chi_i\}_{i=1}^{k}$ identifies the ground truth communities $\{\mathcal C_i\}_{i=1}^k$. It turns out that in expectation $\{\boldsymbol \chi_i\}_{i=1}^{k}$ are eigenvectors
of the power mean Laplacian $\PowerMeanLaplacianCommand{p}$. We look for conditions so that they correspond to the 
$k$ smallest eigenvalues as this implies that our spectral clustering Algorithm~\ref{alg:powerMean} recovers the
ground truth.

Before addressing the general case, we discuss the case of two layers. 
For this case we want to illustrate the effect of the power mean by simply studying the extreme limit cases
$$
	\PowerMeanLaplacianInExpectationCommand{\infty}:=\lim_{p\to\infty} \PowerMeanLaplacianInExpectationCommand{p} \quad \text{and} \quad  \PowerMeanLaplacianInExpectationCommand{-\infty}:=\lim_{p\to-\infty} \PowerMeanLaplacianInExpectationCommand{p}\, .
$$
where $\PowerMeanLaplacianInExpectationCommand{p} = M_p(\mathcal L^{(1)}_\sym , \mathcal L^{(2)}_\sym)$. The next Lemma shows that $\PowerMeanLaplacianInExpectationCommand{\infty}$ and $\PowerMeanLaplacianInExpectationCommand{-\infty}$
are related to the logical operators \texttt{AND} and \texttt{OR}, respectively,
in the sense that in expectation $\PowerMeanLaplacianInExpectationCommand{\infty}$
recovers the clusters if and only if $G^{(1)}$ \textbf{and} $G^{(2)}$ have both clustering structure, whereas in expectation
$\PowerMeanLaplacianInExpectationCommand{-\infty}$ recovers the clusters if and only if $G^{(1)}$ \textbf{or} $G^{(2)}$ has clustering structure.
\begin{lemma}\label{lemma:L_limitCases2GraphsOnlyLaplacians}
Let $\PowerMeanLaplacianInExpectationCommand{p} = M_p(\mathcal L^{(1)}_\sym , \mathcal L^{(2)}_\sym)$.
\begin{itemize}[topsep=-3pt,leftmargin=*]
 \item $\{ \boldsymbol \chi_i \}_{i=1}^{k}$ correspond to the $k$ smallest eigenvalues of $\mathcal{L}_{\infty}$ if and only if 
 $\pp^{(1)} > \qp^{(1)}$ \textbf{and} $\pp^{(2)} > \qp^{(2)}$.
  \item $\{ \boldsymbol \chi_i \}_{i=1}^{k}$ correspond to the $k$ smallest eigenvalues of $\mathcal{L}_{-\infty}$ if and only if 
 $\pp^{(1)} > \qp^{(1)}$ \textbf{or} $\pp^{(2)} > \qp^{(2)}$.
\end{itemize}
\end{lemma}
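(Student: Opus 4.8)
The plan is to determine the full spectrum of the expected normalized Laplacians $\mathcal L^{(1)}_{\sym},\mathcal L^{(2)}_{\sym}$, observe that the two layers share a common eigenbasis that contains all the $\boldsymbol\chi_i$, and then push this through Lemma~\ref{lemma:eigenvalues_and_eigenvectors_of_generalized_mean_V2} together with the scalar identities $\lim_{p\to\infty}m_p=\max$ and $\lim_{p\to-\infty}m_p=\min$ recalled in Section~\ref{sec:power_mean_of_laplacians}.

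First I would exploit that all clusters have size $\abs{\mathcal C}$ and $\mathcal W^{(t)}_{ij}$ depends only on whether $v_i,v_j$ lie in the same cluster, so the expected degree is constant: $\mathcal D^{(t)}=d^{(t)}I$ with $d^{(t)}=\abs{\mathcal C}\big(\pp^{(t)}+(k-1)\qp^{(t)}\big)$, hence $\mathcal L^{(t)}_{\sym}=I-\tfrac{1}{d^{(t)}}\mathcal W^{(t)}$. Writing $\mathcal W^{(t)}=(\pp^{(t)}-\qp^{(t)})ZZ^{\top}+\qp^{(t)}\one\one^{\top}$ with $Z$ the cluster‑indicator matrix, a direct computation gives $\mathcal L^{(t)}_{\sym}\one=0$, $\mathcal L^{(t)}_{\sym}\boldsymbol\chi_i=\lambda^{(t)}\boldsymbol\chi_i$ for $i=2,\dots,k$ with
\[
 \lambda^{(t)}=\frac{k\,\qp^{(t)}}{\pp^{(t)}+(k-1)\qp^{(t)}}\,,
\]
and $\mathcal L^{(t)}_{\sym}v=v$ for every $v$ with $Z^{\top}v=0$. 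So each layer has eigenvalue $0$ on $\mathrm{span}\{\one\}$, eigenvalue $\lambda^{(t)}$ on $\mathrm{span}\{\boldsymbol\chi_2,\dots,\boldsymbol\chi_k\}$, and eigenvalue $1$ on the $(n-k)$‑dimensional ``block‑sum‑zero'' space — and crucially these three eigenspaces do not depend on $t$, so $\mathcal L^{(1)}_{\sym}+\varepsilon I$ and $\mathcal L^{(2)}_{\sym}+\varepsilon I$ admit a common orthonormal eigenbasis made of $\one$, the $\boldsymbol\chi_i$, and any basis of the block‑sum‑zero space. The point to record is that $\lambda^{(t)}\ge 0$ always and $\lambda^{(t)}<1\iff \pp^{(t)}>\qp^{(t)}$ (with $=1$ and $>1$ corresponding to $\pp^{(t)}=\qp^{(t)}$ and $\pp^{(t)}<\qp^{(t)}$).

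Applying Lemma~\ref{lemma:eigenvalues_and_eigenvectors_of_generalized_mean_V2} to this common eigenbasis, $\mathcal L_p=M_p(\mathcal L^{(1)}_{\sym}+\varepsilon I,\mathcal L^{(2)}_{\sym}+\varepsilon I)$ has the same eigenvectors, with eigenvalue $\varepsilon$ on $\one$, $m_p(\lambda^{(1)}+\varepsilon,\lambda^{(2)}+\varepsilon)$ on each $\boldsymbol\chi_i$, and $1+\varepsilon$ on the block‑sum‑zero space. Letting $p\to\pm\infty$, the $\boldsymbol\chi$‑eigenvalue becomes $\max(\lambda^{(1)},\lambda^{(2)})+\varepsilon$ for $\mathcal L_{\infty}$ and $\min(\lambda^{(1)},\lambda^{(2)})+\varepsilon$ for $\mathcal L_{-\infty}$; the shift $\varepsilon$ is common to all eigenvalues and therefore irrelevant for their ordering (and for $p>0$ one takes $\varepsilon=0$). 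Since $\varepsilon\le m_{\pm\infty}(\lambda^{(1)}+\varepsilon,\lambda^{(2)}+\varepsilon)$ holds automatically, $\{\boldsymbol\chi_i\}_{i=1}^k$ are exactly the $k$ eigenvectors of smallest eigenvalue iff the $\boldsymbol\chi$‑eigenvalue is \emph{strictly} below $1+\varepsilon$, i.e. iff $\max(\lambda^{(1)},\lambda^{(2)})<1$ for $\mathcal L_{\infty}$ and $\min(\lambda^{(1)},\lambda^{(2)})<1$ for $\mathcal L_{-\infty}$; rewriting $\lambda^{(t)}<1\iff\pp^{(t)}>\qp^{(t)}$ yields exactly the \texttt{AND} and \texttt{OR} conditions in the statement. (If $\lambda^{(t)}=1$ for one layer, the smallest‑$k$ eigenspace ties $\boldsymbol\chi$'s with block‑sum‑zero vectors and the $\boldsymbol\chi_i$ no longer ``correspond to'' the $k$ smallest eigenvalues, which is why the inequalities are strict.)

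I do not expect a genuine obstacle; the only places needing care are (i) checking the expected degree matrix is a scalar multiple of the identity so that $\mathcal L^{(t)}_{\sym}$ has the clean three‑value spectrum, (ii) verifying that the block‑sum‑zero eigenspace is common to both layers, so that Lemma~\ref{lemma:eigenvalues_and_eigenvectors_of_generalized_mean_V2} captures the \emph{entire} spectrum of $\mathcal L_{\pm\infty}$ and not merely its action on $\one$ and the $\boldsymbol\chi_i$, and (iii) handling the boundary case $\pp^{(t)}=\qp^{(t)}$ to land on strict inequalities. The diagonal shift $\varepsilon I$ contributes nothing to the ordering and can be carried along harmlessly.
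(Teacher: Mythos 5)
Your proposal is correct and follows essentially the same route as the paper: the paper proves the general $T$-layer statement (Theorem~\ref{thm:MatrixPowerMeanLaplacian} and its corollaries in the appendix) by exactly this computation --- constant expected degree, the common eigenbasis $\{\one,\boldsymbol\chi_2,\dots,\boldsymbol\chi_k\}$ plus the block-sum-zero space with eigenvalue levels $\varepsilon$, $1-\rho_t+\varepsilon$, $1+\varepsilon$, Lemma~\ref{lemma:eigenvalues_and_eigenvectors_of_generalized_mean_V2}, and the limits $m_{\pm\infty}=\max/\min$ --- and then obtains this lemma as the special case $T=2$. Your value $\lambda^{(t)}=k\,\qp^{(t)}/(\pp^{(t)}+(k-1)\qp^{(t)})$ coincides with the paper's $1-\rho_t$, and your handling of the shift and of the boundary case $\pp^{(t)}=\qp^{(t)}$ matches the paper's strict-inequality criterion $m_p(\boldsymbol\mu+\epsilon\ones)<1+\epsilon$.
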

\myComment{ 
\begin{proof}
 Let $T=2$ and $r=2$. The result follows directly from Theorem~\ref{theorem:limitCases}.
 \end{proof}
}
\begin{figure}[t]
\begin{subfigure}[]{0.9\linewidth}
\centering
 \includegraphics[width=1.1\linewidth, clip,trim=150 160 160 435]{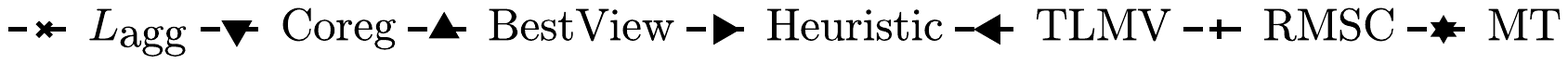}

\end{subfigure}%

\begin{subfigure}[]{1\linewidth}\includegraphics[width=1\linewidth, clip,trim=168 159 180 435]{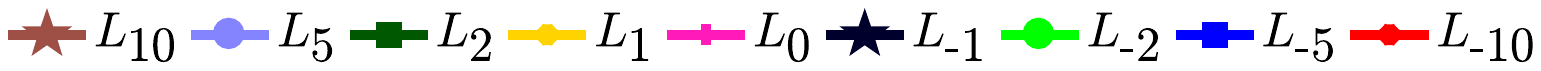}
\end{subfigure}%

\begin{subfigure}[]{0.53\linewidth}
 \includegraphics[width=1\linewidth, clip,trim=120 30 125 20]{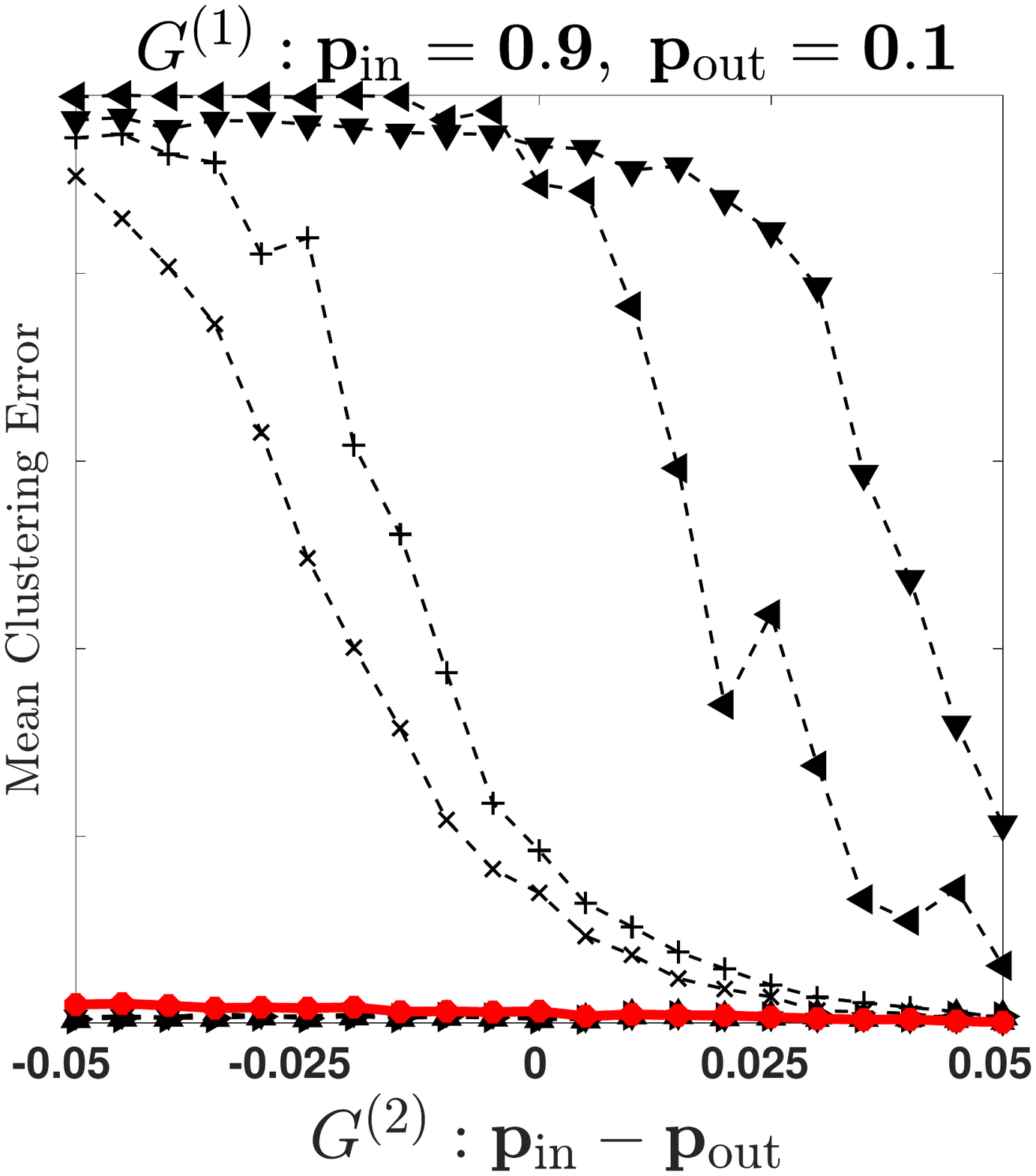}\hspace*{\fill}\hspace{-1.5em}
\caption{$L_p = M_p(L_{\textrm{sym}}^{(1)},L_{\textrm{sym}}^{(2)})$}
\label{fig:SBM2ClustersLL}
\end{subfigure}%
\begin{subfigure}[]{0.53\linewidth}
\includegraphics[width=0.97\linewidth, clip,trim=160 30 100 20]
{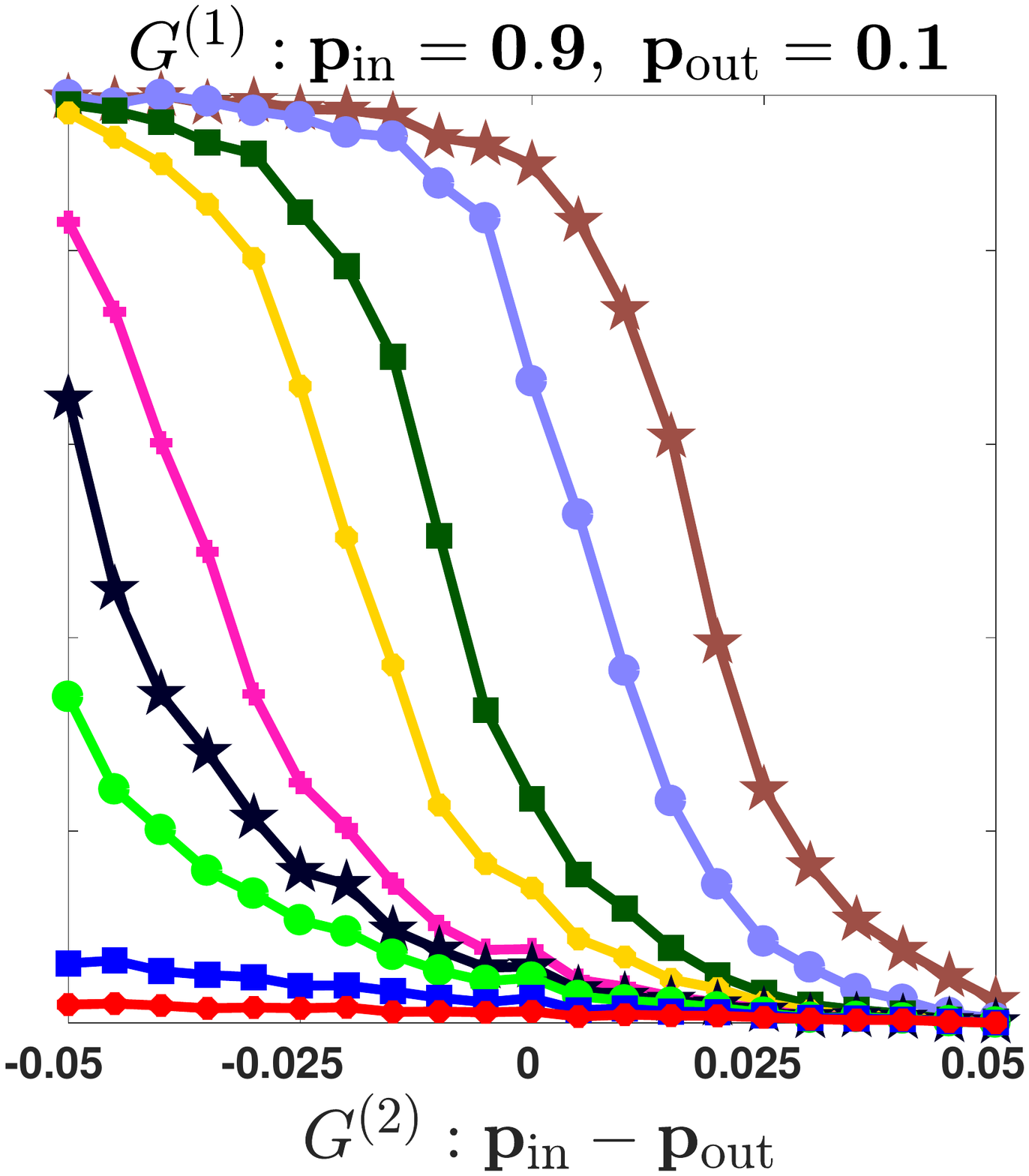}\hspace*{\fill}\hspace{-1.0em}
\caption{$L_p = M_p(L_{\textrm{sym}}^{(1)},L_{\textrm{sym}}^{(2)})$}
\label{fig:SBM2ClustersQQ}
\end{subfigure}%
\caption{
Mean Clustering Error under the SBM with two clusters.
First layer $G^{(1)}$ is \textit{assortative} and ${L_p = M_p(L_{\textrm{sym}}^{(1)},L_{\textrm{sym}}^{(2)})}$. 
Second layer $G^{(2)}$ transitions from disassortative to assortative.
Fig.~\ref{fig:SBM2ClustersLL}:  Comparison of $L_{-10}$ with state of art.
Fig.~\ref{fig:SBM2ClustersQQ}: Performance of $L_{p}$ with ${p\in\{0,\pm1,\pm2,\pm5,\pm10\}}$.
}
\label{fig:SBM2Clusters}
\end{figure}


%
%
The following theorem gives general conditions on the recovery of the ground truth clusters in dependency on $p$ and the size of the shift in  $\L{p}$, see Section \ref{subsec:matrix_power_means_for_multilayer_graph_clustering}. Note that, in analogy with Lemma \ref{lemma:L_limitCases2GraphsOnlyLaplacians}, as $p\rightarrow -\infty$ 
the recovery of the ground truth clusters is achieved
if 
at least
one of the layers is informative, whereas if 
$p\rightarrow \infty$ all of them have to be informative in order to recover the ground truth.

\begin{figure*}[t]
\centering
\frame{
\includegraphics[width=0.5\linewidth, clip,trim=150 160 160 435]{figure1a.pdf}
 \includegraphics[width=0.48\linewidth, clip,trim=168 159 180 435]{figure1b.pdf}
}
\vskip.6em

$\,$
\hfill
\begin{subfigure}[]{0.245\linewidth}
 \includegraphics[width=1\linewidth, clip,trim=120 40 100 40]{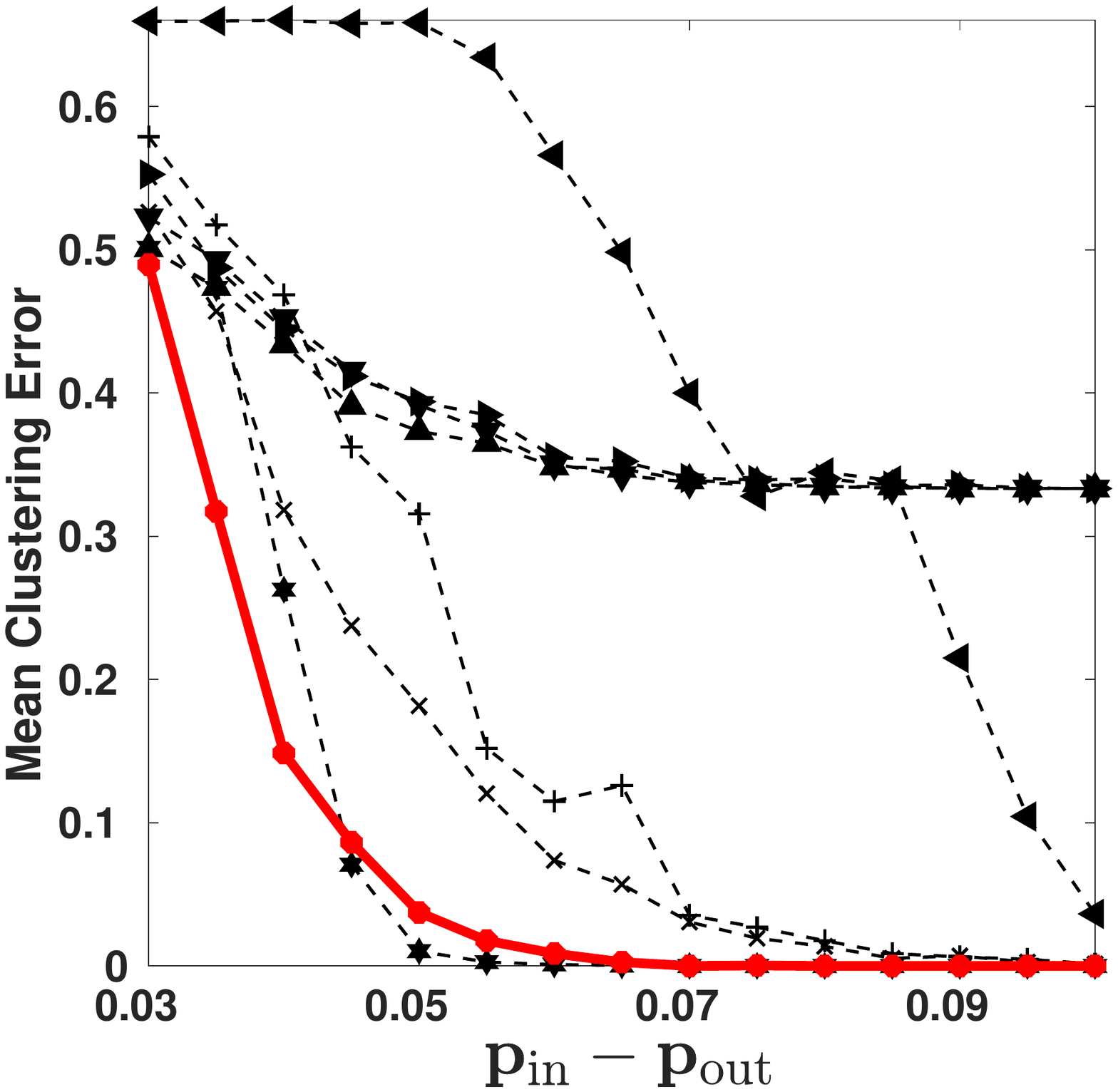}\hspace*{\fill}
\caption{}
\label{subfig:3LayersErrorA}
\end{subfigure}%
\hfill
\begin{subfigure}[]{0.245\linewidth}
 \includegraphics[width=1\linewidth, clip,trim=120 40 100 40]{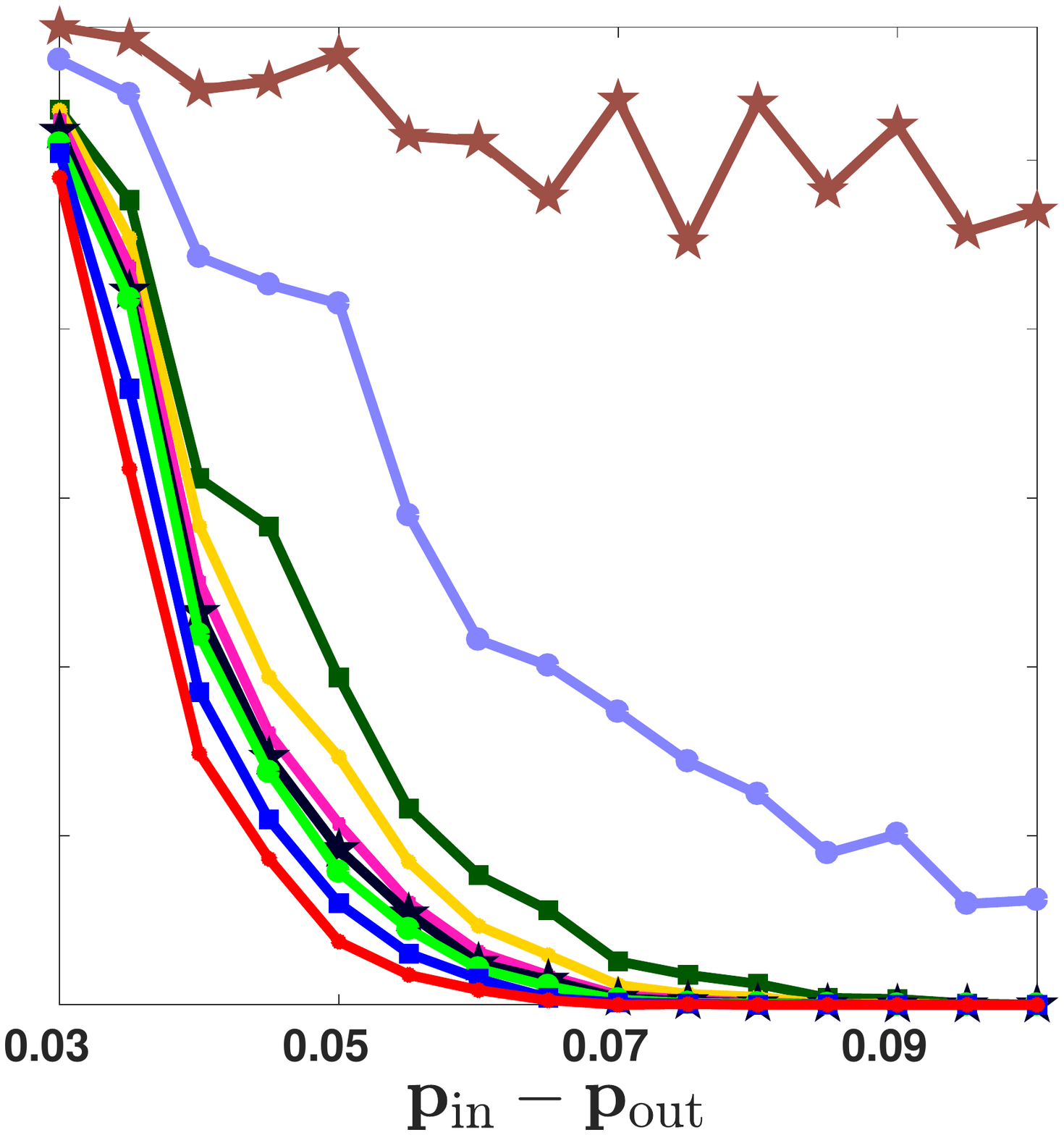}\hspace*{\fill}
\caption{}
\label{subfig:3LayersError}
\end{subfigure}%
\hfill
\begin{subfigure}[]{0.245\linewidth}
\includegraphics[width=1\linewidth, clip,trim=120 40 100 40]{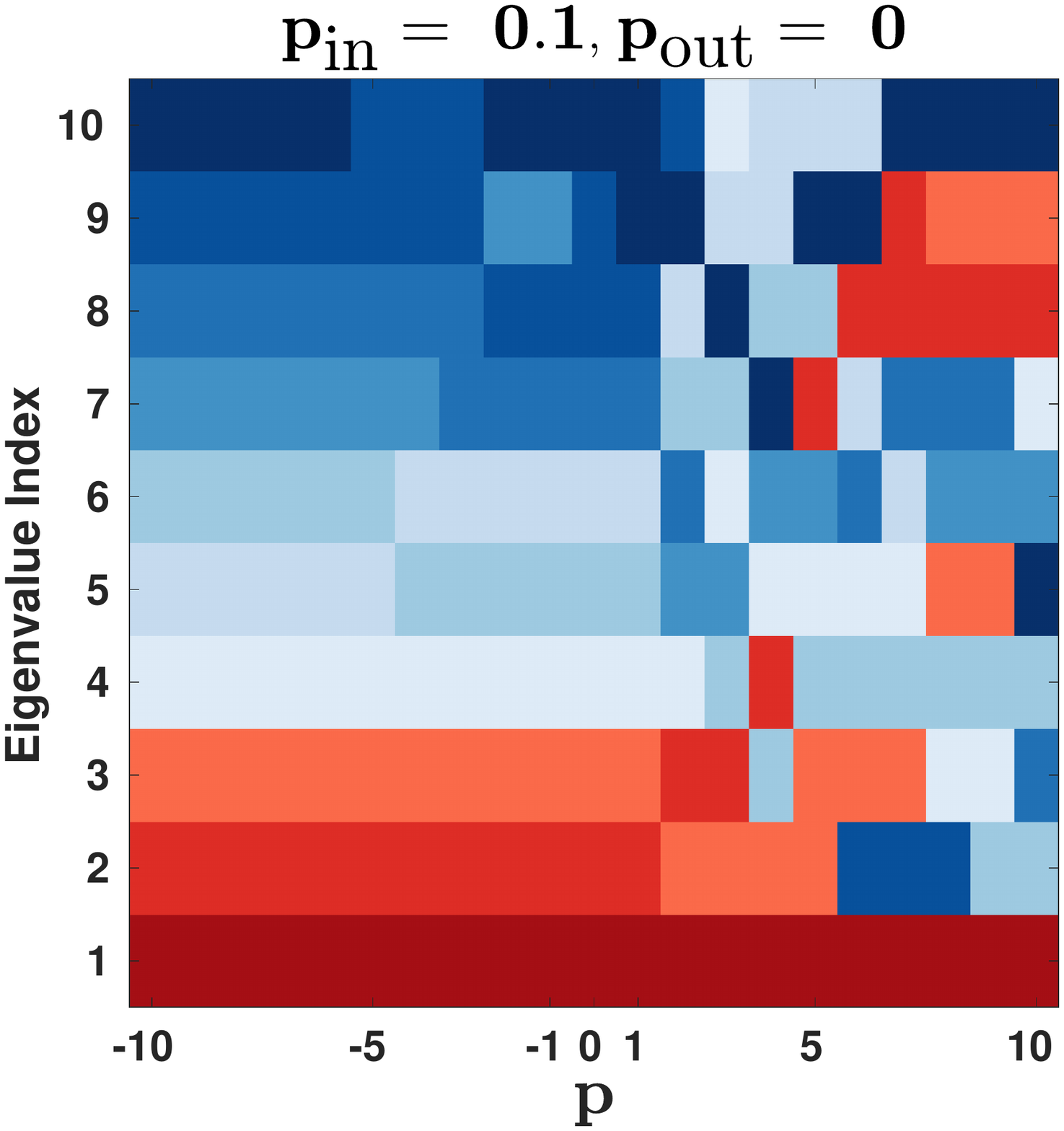}\hspace*{\fill}
\caption{}
\label{subfig:3LayersSpaguetti}
\end{subfigure}%
\hfill
\begin{subfigure}[]{0.245\linewidth}
\includegraphics[width=1\linewidth, clip,trim=120 40 100 40]
{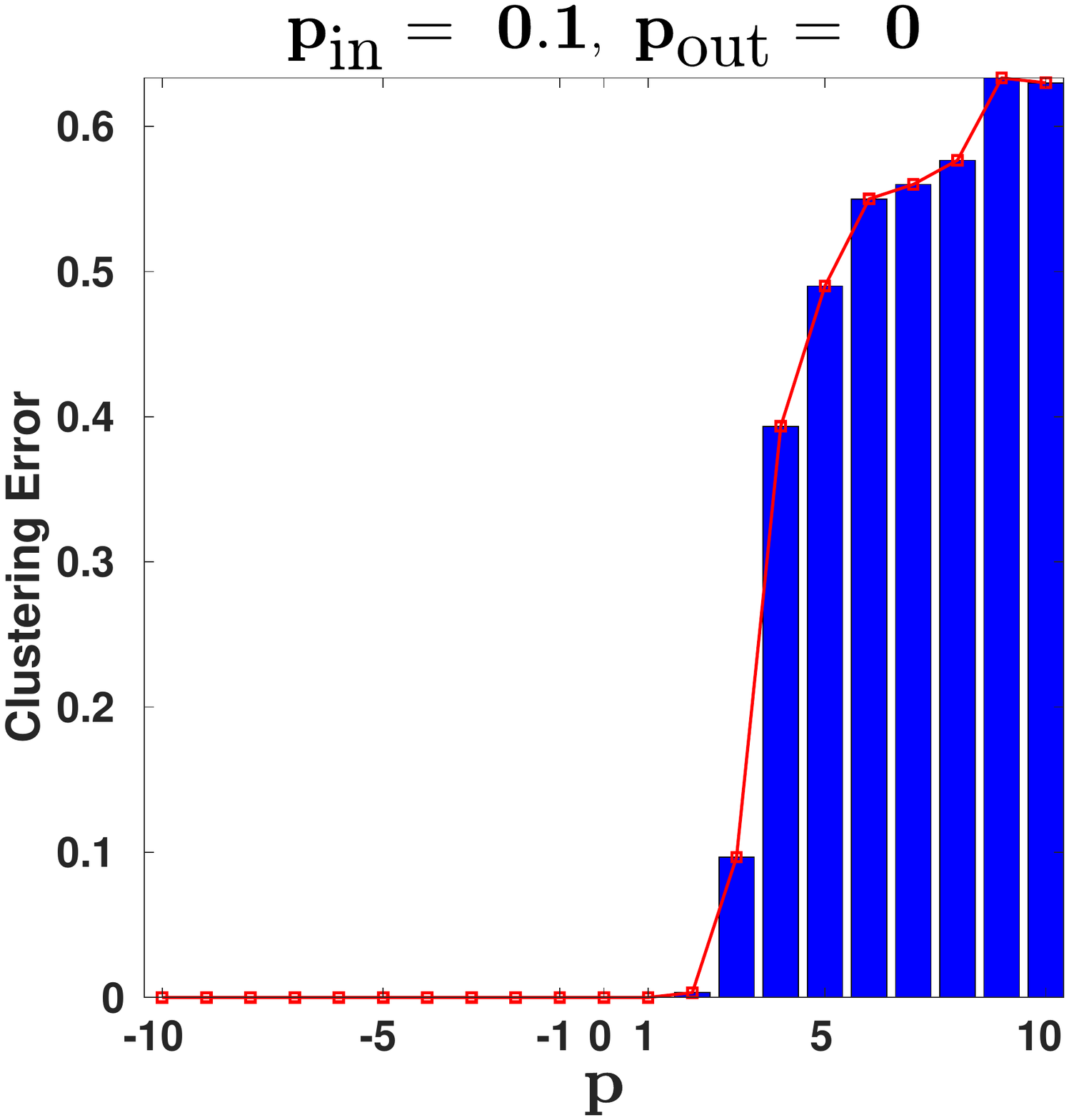}
\caption{}
\label{subfig:3LayersBars}
\end{subfigure}%
\hfill
$\,$
\caption{
SBM experiments with three layers. Each layer is informative with respect to one cluster. 
\ref{subfig:3LayersErrorA}: Comparison of $L_{-10}$ with state of art.
\ref{subfig:3LayersError}: Performance of $L_{p}$ with ${p\in\{0,\pm1,\pm2,\pm5,\pm10\}}$.
\ref{subfig:3LayersSpaguetti}: Eigenvalue ordering of power mean Laplacian $L_p$ across different powers. The ordering clearly changes for powers $p\geq2$, inducing non-informative eigenvectors to the bottom of the spectrum.
\ref{subfig:3LayersBars}: Clustering error of the power mean Laplacian $L_p$. Clustering error increases with $p\geq2$, as suggested by ordering changes depicted in ~\ref{subfig:3LayersSpaguetti}. 
}
\label{fig:3Layers}

\end{figure*}



\begin{thm}\label{thm:MatrixPowerMeanLaplacian}
Let $p\!\in\![-\infty,\infty]$, then $\bchi_1,\ldots,\bchi_k$ correspond to the $k$-smallest eigenvalues of $\L{p}$ if and only if $m_p(\boldsymbol{\mu}+\epsilon\ones)\!<\!1+\epsilon$, where $\boldsymbol{\mu}=(1-\rho_1,\ldots,1-\rho_T)$, and 
$\rho_t\!=\!(\pp^{(t)}\!-\!\qp^{(t)})/(\pp^{(t)}\!+\!(k\!-\!1)\qp^{(t)})$.

In particular, for $p\to \pm\infty$, we have
\begin{enumerate}[topsep=-3pt,leftmargin=*,resume]\setlength\itemsep{-3pt}
\item $\bchi_1,\ldots,\bchi_k$ correspond to the $k$-smallest eigenvalues of $\L{\infty}$ if and only if 
all layers are informative, i.e.
$\pp^{(t)} > \qp^{(t)}$ holds for all $t\in\{1,\ldots,T\}$.
\item $\bchi_1,\ldots,\bchi_k$ correspond to the $k$-smallest eigenvalues of $\L{-\infty}$ if and only if 
there is at least one informative layer, i.e.
 there exists a $t\in\{1,\ldots,T\}$ such that $\pp^{(t)} > \qp^{(t)}$. 
\end{enumerate}
\end{thm}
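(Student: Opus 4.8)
The plan is to diagonalize the expected normalized Laplacians $\mathcal L_\sym^{(t)}$ simultaneously on the relevant invariant subspace and then invoke Lemma~\ref{lemma:eigenvalues_and_eigenvectors_of_generalized_mean_V2} to transfer everything to the scalar power mean. First I would compute the spectrum of $\mathcal W^{(t)}$ in the balanced SBM: since $\mathcal W^{(t)}$ is a block matrix with blocks of size $\abs{\mathcal C}$ taking value $\pp^{(t)}$ on the diagonal blocks and $\qp^{(t)}$ off-diagonal, it has rank $k$ on the cluster-indicator space. The expected degree is constant, $\mathcal D^{(t)} = \delta_t I$ with $\delta_t = \abs{\mathcal C}(\pp^{(t)} + (k-1)\qp^{(t)})$, so $\mathcal L_\sym^{(t)} = I - \tfrac{1}{\delta_t}\mathcal W^{(t)}$. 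A direct check shows that $\one = \bchi_1$ is an eigenvector with eigenvalue $0$, that each $\bchi_i$ for $i\ge 2$ is an eigenvector with eigenvalue $1-\rho_t$ where $\rho_t = (\pp^{(t)}-\qp^{(t)})/(\pp^{(t)}+(k-1)\qp^{(t)})$, and that on the orthogonal complement of $\mathrm{span}\{\bchi_1,\dots,\bchi_k\}$ the matrix $\mathcal W^{(t)}$ vanishes, so $\mathcal L_\sym^{(t)}$ acts as the identity there. Crucially, this eigendecomposition is the \emph{same} for every layer $t$ — the $\bchi_i$ and the complement eigenvectors do not depend on $t$, only the eigenvalues do — so all $\mathcal L_\sym^{(t)}+\epsilon I$ are simultaneously diagonalizable.

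Next I would apply Lemma~\ref{lemma:eigenvalues_and_eigenvectors_of_generalized_mean_V2} to this common eigenbasis. Each $\bchi_i$ with $i\ge 2$ is a common eigenvector with layer-eigenvalues $(1-\rho_1+\epsilon,\dots,1-\rho_T+\epsilon) = \boldsymbol\mu + \epsilon\ones$, hence an eigenvector of $\L p$ with eigenvalue $m_p(\boldsymbol\mu+\epsilon\ones)$; the vector $\bchi_1=\one$ has layer-eigenvalues all equal to $\epsilon$, giving eigenvalue $m_p(\epsilon,\dots,\epsilon)=\epsilon$; and every complement eigenvector has layer-eigenvalues all equal to $1+\epsilon$, giving eigenvalue $m_p(1+\epsilon,\dots,1+\epsilon) = 1+\epsilon$. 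Thus the spectrum of $\L p$ consists of the single value $\epsilon$ (from $\bchi_1$), the value $m_p(\boldsymbol\mu+\epsilon\ones)$ with multiplicity $k-1$ (from $\bchi_2,\dots,\bchi_k$), and the value $1+\epsilon$ with multiplicity $n-k$. Since $\epsilon < 1+\epsilon$ and $\epsilon < m_p(\boldsymbol\mu+\epsilon\ones)$ automatically (each $\mu_t = 1-\rho_t \in (0,1)$ when edges are present, so $m_p(\boldsymbol\mu+\epsilon\ones) > \epsilon$), the $k$ smallest eigenvalues are exactly those belonging to $\bchi_1,\dots,\bchi_k$ precisely when $m_p(\boldsymbol\mu+\epsilon\ones) < 1+\epsilon$. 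This gives the if-and-only-if characterization. One should also record the tie-breaking convention: if $m_p(\boldsymbol\mu+\epsilon\ones) = 1+\epsilon$ the $k$-th eigenvalue is not isolated and recovery of the $\bchi_i$ cannot be guaranteed, which is why the inequality is strict.

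For the two limiting cases I would specialize $m_p$. As $p\to\infty$, $m_p(\boldsymbol\mu+\epsilon\ones)\to \max_t(1-\rho_t)+\epsilon$, so the condition $m_\infty(\boldsymbol\mu+\epsilon\ones) < 1+\epsilon$ becomes $\max_t(1-\rho_t) < 1$, i.e. $\rho_t > 0$ for every $t$, i.e. $\pp^{(t)} > \qp^{(t)}$ for all $t$ (here I use that $\epsilon$ drops out of a $\max$ of a shifted vector, and that $\rho_t>0 \iff \pp^{(t)}>\qp^{(t)}$ since the denominator $\pp^{(t)}+(k-1)\qp^{(t)}$ is positive). As $p\to-\infty$, $m_p(\boldsymbol\mu+\epsilon\ones)\to \min_t(1-\rho_t)+\epsilon$, so the condition becomes $\min_t(1-\rho_t) < 1$, i.e. $\rho_t > 0$ for at least one $t$, i.e. there exists $t$ with $\pp^{(t)} > \qp^{(t)}$. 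These are exactly statements~(2) and~(3).

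The main obstacle is not any single hard estimate — everything reduces to linear algebra on an explicit $k\times k$ block structure — but rather being careful about two bookkeeping points: first, verifying that the $\bchi_i$ really are eigenvectors of each $\mathcal W^{(t)}$ and identifying the eigenvalue correctly (the $\bchi_i$ are not orthonormal and one must compute $\mathcal W^{(t)}\bchi_i$ block by block, checking that the $(k-1)$ on cluster $\mathcal C_i$ and the $-1$ elsewhere produce a scalar multiple of $\bchi_i$); and second, handling the diagonal shift $\epsilon = \epsilon(p)$ consistently — since $\epsilon$ depends on $p$ in the actual algorithm, one must make sure the monotonicity/limit arguments for $m_p$ still go through, which they do because shifting a vector by a constant commutes with $\lim_{p\to\pm\infty} m_p$ (yielding $\max$/$\min$ of the shifted entries) and because $m_p(\boldsymbol\mu+\epsilon\ones) < 1+\epsilon \iff m_p(\boldsymbol\mu+\epsilon\ones) - \epsilon$ compares suitably, though for finite $p$ the shift does not simply cancel and the condition must be left in the stated form $m_p(\boldsymbol\mu+\epsilon\ones)<1+\epsilon$.
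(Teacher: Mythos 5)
Your proposal is correct and follows essentially the same route as the paper: simultaneous diagonalization of the shifted expected Laplacians on the common eigenbasis $\{\bchi_1,\dots,\bchi_k\}$ plus its orthogonal complement, transfer to the scalar power mean via Lemma~\ref{lemma:eigenvalues_and_eigenvectors_of_generalized_mean_V2}, and then the $\max/\min$ specialization for $p\to\pm\infty$. The only slip is the parenthetical claim that $\mu_t = 1-\rho_t \in (0,1)$ — in a disassortative layer $\rho_t<0$ so $\mu_t>1$ — but the conclusion you need there, $m_p(\boldsymbol\mu+\epsilon\ones)>\epsilon$, still holds since every entry exceeds $\epsilon$ (as $\rho_t<1$ whenever $\qp^{(t)}>0$), so this does not affect the argument.
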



\myComment{
\begin{proof}
The proof is in Section~\ref{proof:MatrixPowerMeanLaplacian}.
\end{proof}
}
Theorem~\ref{thm:MatrixPowerMeanLaplacian} shows that the informative eigenvectors of $\mathcal L_p$ are at the bottom of the spectrum if and only if the scalar power mean of the corresponding eigenvalues is small enough.
Since the scalar power mean is monotonically decreasing with respect to $p$, 
this explains why the limit case $p\to\infty$ is more restrictive than $p\to \!-\infty$. 
The corollary below shows that
the coverage of parameter settings in the SBM for which one recovers the ground truth becomes smaller as $p$~grows.

\begin{corollary}\label{corollary:contention}
Let $q\leq p$.
If $\bchi_1,\ldots,\bchi_k$ correspond to the $k$-smallest eigenvalues of $\L{p}$, then
$\bchi_1,\ldots,\bchi_k$ correspond to the $k$-smallest eigenvalues of $\L{q}$.
\end{corollary}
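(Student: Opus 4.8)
The plan is to push all the real content onto Theorem~\ref{thm:MatrixPowerMeanLaplacian} and then invoke the classical power mean inequality for scalars. Fix the diagonal shift $\epsilon$ (the same one entering the definition of both $\L{p}$ and $\L{q}$), and set $\boldsymbol{\mu}=(1-\rho_1,\ldots,1-\rho_T)$ with $\rho_t=(\pp^{(t)}-\qp^{(t)})/(\pp^{(t)}+(k-1)\qp^{(t)})$, exactly as in the theorem. By Theorem~\ref{thm:MatrixPowerMeanLaplacian}, the hypothesis ``$\bchi_1,\ldots,\bchi_k$ are the $k$ smallest eigenvalues of $\L{p}$'' is equivalent to the single scalar inequality $m_p(\boldsymbol{\mu}+\epsilon\ones)<1+\epsilon$, and the conclusion we want for $\L{q}$ is equivalent to $m_q(\boldsymbol{\mu}+\epsilon\ones)<1+\epsilon$. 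Hence it suffices to prove $m_q(\boldsymbol{\mu}+\epsilon\ones)\le m_p(\boldsymbol{\mu}+\epsilon\ones)$ whenever $q\le p$.

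First I would record that the vector $\x:=\boldsymbol{\mu}+\epsilon\ones$ has admissible entries for the scalar power mean: since $\rho_t\in[-\tfrac{1}{k-1},1]$ one has $1-\rho_t\ge 0$, so each coordinate $x_i=1-\rho_t+\epsilon$ is non-negative and strictly positive when $\epsilon>0$ (for $p>0$, where the theorem applies with $\epsilon=0$, the entries $1-\rho_t$ are non-negative, which is all that $m_p$ with $p>0$ needs). Then $m_q(\x)\le m_p(\x)$ for $q\le p$ is exactly the generalized (power) mean inequality: for a fixed tuple of non-negative reals the map $r\mapsto m_r(\x)=(\tfrac1T\sum_i x_i^r)^{1/r}$ is monotonically non-decreasing on $[-\infty,\infty]$, with endpoints $m_{-\infty}(\x)=\min_i x_i$ and $m_{\infty}(\x)=\max_i x_i$. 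This is precisely the monotonicity already invoked in the discussion after Theorem~\ref{thm:MatrixPowerMeanLaplacian}.

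Chaining the two observations closes the argument: from the hypothesis and Theorem~\ref{thm:MatrixPowerMeanLaplacian}, $m_p(\x)<1+\epsilon$; by monotonicity of the scalar power mean, $m_q(\x)\le m_p(\x)<1+\epsilon$; applying Theorem~\ref{thm:MatrixPowerMeanLaplacian} once more with exponent $q$ gives that $\bchi_1,\ldots,\bchi_k$ correspond to the $k$ smallest eigenvalues of $\L{q}$, which is the claim. The statement also includes the extended values $q\le p$ in $[-\infty,\infty]$, which is covered since the power mean inequality remains valid there with $\min$ and $\max$ at the endpoints.

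I do not expect a genuine obstacle: the only point deserving a line of care is the bookkeeping of the shift $\epsilon$, namely ensuring that the same $\epsilon$ is used in the two instances of Theorem~\ref{thm:MatrixPowerMeanLaplacian} so that we are comparing $m_q$ and $m_p$ of one and the same vector $\x$. If instead the shift is allowed to depend on the exponent (as in the experimental choice $\epsilon=\log(1+\abs{p})$), one additionally needs that enlarging the shift only relaxes the inequality $m_r(\boldsymbol{\mu}+\epsilon\ones)<1+\epsilon$, which again follows from elementary monotonicity of $m_r$ in each of its arguments together with $m_r(c\ones)=c$; I would add this as a short remark rather than a separate lemma.
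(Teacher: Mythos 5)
Your argument is correct and is essentially identical to the paper's proof: both reduce the claim to the characterization $m_p(\boldsymbol{\mu}+\epsilon\ones)<1+\epsilon$ from Theorem~\ref{thm:MatrixPowerMeanLaplacian} and then invoke the monotonicity of the scalar power mean in its exponent. Your extra remark on keeping the shift $\epsilon$ fixed across the two applications of the theorem is a sensible clarification but does not change the substance.
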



\myComment{
\begin{proof}
 By Lemma~\ref{lemma:eigenvalues_and_eigenvectors_of_generalized_mean_V2} we see that
 \begin{align*}
 \lambda_s (\mathcal L_p) &= \left({\frac  {1}{T}} \left( \sum_{{t=1}}^{r} \lambda_s^p(\mathcal{L}^{(t)}_\sym) + \sum_{{t=r+1}}^{T}\lambda_s^p(\mathcal{Q}^{(t)}_\sym) \right) \right)^{{{\frac  {1}{p}}}} \\
			  &= m_p\left( \lambda_s( \mathcal L_\sym^{(1)} ), \ldots,\lambda_s( \mathcal L_\sym^{(r)} ), \lambda_s( \mathcal Q_\sym^{(r+1)} ),\ldots, \lambda_s( \mathcal Q_\sym^{(T)} )  \right)
\end{align*}
Further, by Theorem~\ref{thm:MatrixPowerMeanLaplacian} we know that eigenvectors $\boldsymbol \chi_1, \ldots, \boldsymbol \chi_k$ correspond to the $k$ smallest eigenvalues of $\mathcal L_p$ if and only if 
$\lambda_s (\mathcal L_p)<1$ for $s=1,\ldots,k$.
Moreover, by Lemma~\ref{lemma:GeneralizedMeanInequality} we know that if $u>v$ then $m_u \geq m_v$, thus
\begin{align*}
 \lambda_s (\mathcal L_u) &= m_u\left( \lambda_s( \mathcal L_\sym^{(1)} ), \ldots,\lambda_s( \mathcal L_\sym^{(r)} ), \lambda_s( \mathcal Q_\sym^{(r+1)} ),\ldots, \lambda_s( \mathcal Q_\sym^{(T)} )  \right)\\
			  &\geq m_v\left( \lambda_s( \mathcal L_\sym^{(1)} ), \ldots,\lambda_s( \mathcal L_\sym^{(r)} ), \lambda_s( \mathcal Q_\sym^{(r+1)} ),\ldots, \lambda_s( \mathcal Q_\sym^{(T)} )  \right)\\
			  &= \lambda_s (\mathcal L_v)
\end{align*}

Thus, $\lambda_s (\mathcal L_u) \geq \lambda_s (\mathcal L_v)$.
%
By Definition, $\left( \pp^{(t)}, \qp^{(t)}  \right) _{t=1}^{T} \in \Gamma_v \iff \lambda_s (\mathcal L_v)<1$ for $s=1,\ldots,k$.
Hence from the inequality $1>{\lambda_s (\mathcal L_u) \geq \lambda_s (\mathcal L_v)}$ 
we can see that if 
$\left( \pp^{(t)}, \qp^{(t)}  \right) _{t=1}^{T} \in \Gamma_u$ 
then 
$\left( \pp^{(t)}, \qp^{(t)}  \right) _{t=1}^{T} \in \Gamma_v$ 
which implies $\Gamma_u \subseteq \Gamma_v$.

\end{proof}
}
The previous results hold in expectation.  The following experiments show that these findings generalize to the case 
where one samples from the SBM. 
In Fig.~\ref{fig:SBM2Clusters} we present experiments on sparse sampled multilayer graphs from the SBM.
We consider two clusters of size $\abs{\mathcal C} = 100$ and show the mean of clustering error of $50$ runs. 
We evaluate the power mean Laplacian $\PowerMeanLaplacianCommand{p}$ with 
$p\in\{0,\pm1,\pm2,\pm5,\pm10\}$ and compare with other methods described in Section~\ref{sec:experiment}.

In Fig.~\ref{fig:SBM2Clusters} we fix the first layer $G^{(1)}$ to be strongly assortative and let the second layer $G^{(2)}$
run from a disassortative to an assortative configuration. 
In Fig.\ref{fig:SBM2ClustersLL} we can see that the power mean Laplacian $L_{-10}$ returns the smallest clustering error,
together with the multitensor method, the best single view and the heuristic approach across all parameter settings. The latter two work  well by construction in this setting. However, we will see that they fail for the second setting we consider next. All the other competing methods
fail as the second graph $G^{(2)}$ becomes non-informative resp.\ even violates the assumption to be assortative.
In Fig.~\ref{fig:SBM2ClustersQQ} we can see that the smaller the value of $p$, the smaller the clustering
error of the power mean Laplacian $L_p$,
as stated in Corollary~\ref{corollary:contention}.
%


\subsection{Case 2: No layer contains full information on the clustering structure}\label{sec:sbm_setting2}



We consider a multilayer SBM setting where each individual layer contains only information about one of the clusters and only
considering all the layers together reveals the complete cluster structure. For this particular instance, all power mean Laplacians
$\L{p}$ allow to recover the ground truth for any non-zero integer $p$.


For the sake of simplicity, we limit ourselves to the case of three layers and three clusters, showing an assortative behavior in expectation. Let the expected adjacency matrix $\mathcal W^{(t)}$ of layer $G^{(t)}$ be defined~by
\begin{equation}\label{eq:3Clusters}
 \mathcal W^{(t)}_{i,j} = 
\begin{cases} 
      \pp, & \quad v_i,v_j\in \mathcal C_t \text{ or }  v_i,v_j\in \overline{\mathcal C_t}\\
      \qp, & \quad \textrm{else} \\
   \end{cases} 
\end{equation}
for ${t=1,2,3}$. Note that, up to a node relabeling, the three expected adjacency matrices have the form
$$
\underbrace{\left(
\begin{array}{c|c|c}
    \grayBlock & \0         & \0         \\  \hline
    \0         & \grayBlock & \grayBlock \\  \hline
    \0         & \grayBlock & \grayBlock \\
\end{array}
\right)}_{\mathcal W^{(1)}}, \quad
\underbrace{\left(
\begin{array}{c|c|c}
    \grayBlock & \0 & \grayBlock         \\  \hline
    \0 & \grayBlock & \0         \\  \hline
    \grayBlock        & \0         & \grayBlock \\
\end{array}
\right)}_{\mathcal W^{(2)}}, \quad
\underbrace{\left(
\begin{array}{c|c|c}
    \grayBlock & \grayBlock & \0         \\  \hline
    \grayBlock & \grayBlock & \0         \\  \hline
    \0         & \0         & \grayBlock \\
\end{array}
\right)}_{\mathcal W^{(3)}}\, ,
$$
where
each (block) row and column corresponds to a cluster $\mathcal C_i$ and gray blocks correspond to nodes whose probability of connections is $\pp$, whereas white blocks correspond to nodes whose probability of connections is $\qp$. 
Let us assume an assortative behavior on all the layers, that is $\pp>\qp$. In this case spectral clustering applied on a single layer ${\mathcal W^{(t)}}$ would return cluster $\mathcal C_t$ and 
a random partition of the complement,  
failing to recover the ground truth clustering $\mathcal C_1,\mathcal C_2,\mathcal C_3$. 
This is shown in the following Theorem.
\begin{thm}\label{thm:spectrum_L}
If $\pp>\qp$, then for any $t=1,2,3$, there exist scalars $\alpha>0$ and $\beta>0$ such that the eigenvectors of $\mathcal L_\sym^{(t)}$ corresponding to the two smallest eigenvalues are
$$
\boldsymbol \chi_1 = \alpha \one_{\mathcal C_t}+\one_{\overline{\mathcal C_t}} \quad \text{and} \quad
\boldsymbol \chi_2 =-\beta \one_{\mathcal C_t}+\one_{\overline{\mathcal C_t}}
$$
whereas any vector orthogonal to both $\boldsymbol \chi_1$ and $\boldsymbol \chi_2$ is an eigenvector for the third smallest eigenvalue. 
\end{thm}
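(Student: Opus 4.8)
The plan is to diagonalise the expected normalised Laplacian $\mathcal{L}_\sym^{(t)}$ of a single layer exactly, exploiting that by~\eqref{eq:3Clusters} the matrix $\mathcal W^{(t)}$ is block--constant with respect to the coarse partition $\{\mathcal C_t,\overline{\mathcal C_t}\}$, which has $\abs{\mathcal C_t}=\abs{\mathcal C}$ and $\abs{\overline{\mathcal C_t}}=2\abs{\mathcal C}$, and that such a matrix has rank two.

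First I would record the two degree values: a direct count gives $\mathcal D^{(t)}_{ii}=\abs{\mathcal C}(\pp+2\qp)=:d_1$ for $v_i\in\mathcal C_t$ and $\mathcal D^{(t)}_{ii}=\abs{\mathcal C}(2\pp+\qp)=:d_2$ for $v_i\in\overline{\mathcal C_t}$, both positive since $\pp,\qp>0$, so $\mathcal{L}_\sym^{(t)}$ is well defined. Writing $\mathcal W^{(t)}=(\pp-\qp)\big(\one_{\mathcal C_t}\one_{\mathcal C_t}^{\!\top}+\one_{\overline{\mathcal C_t}}\one_{\overline{\mathcal C_t}}^{\!\top}\big)+\qp\,\one\one^{\!\top}$ shows that the range of $\mathcal W^{(t)}$, and hence that of $M:=(\mathcal D^{(t)})^{-1/2}\mathcal W^{(t)}(\mathcal D^{(t)})^{-1/2}$ (because $(\mathcal D^{(t)})^{-1/2}$ is block--constant and therefore preserves it), is exactly the two--dimensional space $S:=\operatorname{span}\{\one_{\mathcal C_t},\one_{\overline{\mathcal C_t}}\}$. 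Since $M$ is symmetric, $S^\perp=\ker M$, so $\mathcal{L}_\sym^{(t)}=I-M$ acts as the identity on $S^\perp$; this already yields the last assertion of the theorem, namely that every vector orthogonal to $S$ is an eigenvector with eigenvalue $1$. It also shows that $S$ is $\mathcal{L}_\sym^{(t)}$--invariant, so it only remains to diagonalise the symmetric restriction $\mathcal{L}_\sym^{(t)}|_S$ on the plane $S$.

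Inside $S$ I would first invoke the standard fact that $(\mathcal D^{(t)})^{1/2}\one=\sqrt{d_1}\,\one_{\mathcal C_t}+\sqrt{d_2}\,\one_{\overline{\mathcal C_t}}$ spans $\ker\mathcal{L}_\sym^{(t)}$, which gives the first eigenvector $\boldsymbol\chi_1\propto\alpha\,\one_{\mathcal C_t}+\one_{\overline{\mathcal C_t}}$ with $\alpha=\sqrt{d_1/d_2}>0$ and eigenvalue $0$. Because $\mathcal{L}_\sym^{(t)}|_S$ is symmetric, the orthogonal complement of $\boldsymbol\chi_1$ inside $S$ is one--dimensional and consists of eigenvectors; imposing $\langle\boldsymbol\chi_2,\boldsymbol\chi_1\rangle=0$ on the ansatz $\boldsymbol\chi_2=a\,\one_{\mathcal C_t}+\one_{\overline{\mathcal C_t}}$ forces $a\alpha\abs{\mathcal C}+2\abs{\mathcal C}=0$, i.e.\ $a=-2/\alpha=:-\beta$ with $\beta=2\sqrt{d_2/d_1}>0$, which is precisely the claimed second eigenvector. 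Finally I would compute its eigenvalue by applying $I-M$ directly to $\boldsymbol\chi_2$ (or, equivalently, from the trace of $\mathcal{L}_\sym^{(t)}$ together with the eigenvalue $0$ and the $(n-2)$--fold eigenvalue $1$): a short computation gives
\[ \lambda_2=1-\frac{2(\pp-\qp)(\pp+\qp)}{(\pp+2\qp)(2\pp+\qp)}=\frac{\qp(5\pp+4\qp)}{(\pp+2\qp)(2\pp+\qp)}, \]
and $\pp>\qp>0$ makes this lie strictly between $0$ and $1$. Hence the spectrum of $\mathcal{L}_\sym^{(t)}$ consists of $0$ (eigenvector $\boldsymbol\chi_1$), $\lambda_2\in(0,1)$ (eigenvector $\boldsymbol\chi_2$), and $1$ with multiplicity $n-2$ and eigenspace $S^\perp=(\operatorname{span}\{\boldsymbol\chi_1,\boldsymbol\chi_2\})^\perp$, which is exactly the statement.

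The only point requiring care is that the coarse partition $\{\mathcal C_t,\overline{\mathcal C_t}\}$ is unbalanced, so one cannot simply quote the balanced two--block formula (in terms of the $\rho_t$ of Theorem~\ref{thm:MatrixPowerMeanLaplacian}); the degree normalisation has to be tracked through, and it is precisely this size asymmetry that produces the nontrivial weights $\alpha$ and $\beta$ on the block $\mathcal C_t$. Everything else is a routine $2\times2$ eigencomputation, and positivity of $\alpha,\beta$ and the strict inequality $\lambda_2<1$ are immediate from $\pp,\qp>0$ and $\pp>\qp$.
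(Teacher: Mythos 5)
Your proof is correct, and it reaches the same eigenpairs as the paper ($\alpha=\sqrt{(\pp+2\qp)/(2\pp+\qp)}$ and $\beta=2/\alpha$, with $\lambda_2\in(0,1)$ strictly between the simple eigenvalue $0$ and the eigenvalue $1$ of multiplicity $3\abs{\mathcal C}-2$), but by a genuinely different route. The paper first reduces to the case $\abs{\mathcal C}=1$ and writes $\mathcal L_\sym^{(t)}$ as a Kronecker product of a $3\times 3$ matrix with the all-ones block $E$; it then diagonalises the $3\times3$ matrix $\tilde{\mathcal M}=\tilde{\mathcal D}^{-1/2}\tilde{\mathcal W}\tilde{\mathcal D}^{-1/2}$ explicitly by solving the quadratic $bs^2+(2c-a)s-2b=0$ for the slopes of the eigenvectors $(s_\pm,1,1)^\top$, and invokes Perron--Frobenius to identify the Perron eigenvector and to conclude $s_-<0$ and the eigenvalue ordering; finally it lifts everything back with the theorem on eigenpairs of Kronecker products. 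You instead exploit the coarser bipartition $\{\mathcal C_t,\overline{\mathcal C_t}\}$ directly on the full matrix: the rank-two decomposition of $\mathcal W^{(t)}$ gives $S^\perp\subseteq\ker M$ (hence the eigenvalue $1$ on $S^\perp$), the kernel vector $(\mathcal D^{(t)})^{1/2}\one$ gives $\boldsymbol\chi_1$ for free, orthogonality inside $S$ gives $\boldsymbol\chi_2$ without solving any quadratic, and a trace computation pins down $\lambda_2$ and the inequality $0<\lambda_2<1$. For this theorem in isolation your argument is the more economical one (no Perron--Frobenius, no Kronecker lifting); the paper's heavier setup pays off only because the same $3\times3$ reduction and permutation structure are reused in the proof of the subsequent theorem on $\mathcal L_p$, where the three layers' Laplacians no longer commute. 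One small point worth making explicit: your parenthetical appeal to $\pp,\qp>0$ is genuinely needed (the appendix version of the statement assumes $\qp>0$), both for the degrees to be positive and for $\lambda_2>0$, i.e.\ for the eigenvalue $0$ to be simple.
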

%
%
%
On the other hand, it turns out that the power mean Laplacian $L_p$ is able to merge the information of each layer, obtaining the ground truth clustering, for all integer powers different from zero.
This is formally stated in the following. 

\begin{table*}\scriptsize
\parbox{.45\linewidth}{
\centering                      
\begin{tabular}{|c|c|c|c|c|c|c|}   
\multicolumn{7}{c}{\,\,\,\,\,$\bm{\tilde p}$} \\
\hline        
  & 0.5 & 0.6 & 0.7 & 0.8 & 0.9 & 1.0 \\  
\specialrule{1.5pt}{.0pt}{0pt} 
$L_\text{agg}$ & 0.3 & 1.3 & 3.0 & 8.0 & 22.3 & 100.0 \\     
\text{Coreg} & 0.3 & 0.0 & 0.3 & 0.0 & 0.0 & 64.7 \\        
\text{BestView} & 9.7 & 1.0 & 0.3 & 0.0 & 0.7 & 77.3 \\     
\text{Heuristic} & 0.0 & 0.0 & 0.0 & 0.0 & 0.3 & 59.3 \\    
\text{TLMV} & 0.7 & 0.7 & 4.0 & 6.0 & 24.7 & 100.0 \\       
\text{RMSC} & 1.0 & 1.7 & 4.0 & 7.0 & 19.7 & 100.0 \\       
\text{MT} & 1.3 & 0.3 & 0.7 & 3.0 & 17.0 & 100.0 \\
\specialrule{0.75pt}{.0pt}{0pt}  
$L_{10}$ & 0.0 & 0.0 & 0.0 & 0.0 & 1.0 & 100.0 \\          
$L_{5}$ & 0.0 & 0.0 & 0.0 & 0.0 & 5.0 & 100.0 \\           
$L_{2}$ & 0.0 & 0.0 & 0.3 & 2.3 & 18.3 & 100.0 \\          
$L_{1}$ & 1.0 & 1.0 & 3.0 & 7.0 & 30.3 & 100.0 \\          
$L_{0}$ & 4.3 & 4.3 & 9.7 & 15.3 & 38.3 & 100.0 \\         
$L_{-1}$ & 6.7 & 7.7 & 15.7 & 16.3 & 42.3 & 100.0 \\       
$L_{-2}$ & 8.0 & 13.0 & 20.3 & 20.7 & 42.7 & 100.0 \\      
$L_{-5}$ & 22.3 & 23.0 & 36.3 & 37.7 & 50.0 & 100.0 \\     
$L_{-10}$ & \textbf{69.0} & \textbf{76.3} & \textbf{68.0} & \textbf{67.3} & \textbf{59.7} & 100.0 \\   
\hline                                                  
\end{tabular}                           
}
\hspace{10mm}
\parbox{.45\linewidth}{
\centering
\begin{tabular}{|c|c|c|c|c|c|c|}   
\multicolumn{7}{c}{\,\,\,\,\,$\bm{\mu}$} \\ 
\hline                                                       
  & 0.0 & 0.1 & 0.2 & 0.3 & 0.4 & 0.5 \\
\specialrule{1.5pt}{.0pt}{0pt}  
$L_\text{agg}$ & 24.7 & 21.7 & 21.3 & 21.7 & 24.3 & 21.3 \\     
\text{Coreg} & 16.7 & 16.7 & 13.3 & 11.7 & 6.0 & 1.0 \\        
\text{BestView} & 16.7 & 17.0 & 17.0 & 17.7 & 11.7 & 9.0 \\    
\text{Heuristic} & 16.7 & 16.3 & 15.0 & 9.0 & 2.0 & 0.7 \\     
\text{TLMV} & 25.7 & 24.3 & 21.7 & 23.3 & 21.0 & 20.0 \\       
\text{RMSC} & 26.3 & 22.0 & 23.0 & 21.7 & 20.3 & 20.0 \\       
\text{MT} & 19.7 & 19.7 & 21.0 & 20.7 & 20.7 & 20.7 \\
\specialrule{0.75pt}{.0pt}{0pt}  
$L_{10}$ & 16.7 & 17.3 & 17.0 & 16.7 & 16.7 & 16.7 \\         
$L_{5}$ & 17.0 & 18.0 & 17.3 & 17.7 & 18.0 & 17.0 \\          
$L_{2}$ & 23.0 & 21.3 & 19.3 & 19.0 & 20.3 & 18.0 \\          
$L_{1}$ & 26.3 & 25.3 & 24.0 & 23.0 & 22.3 & 21.3 \\          
$L_{0}$ & 33.3 & 30.3 & 28.7 & 28.0 & 28.0 & 23.7 \\          
$L_{-1}$ & 36.3 & 33.0 & 33.3 & 32.0 & 29.0 & 25.0 \\         
$L_{-2}$ & 37.3 & 36.3 & 36.7 & 34.0 & 31.3 & 29.0 \\         
$L_{-5}$ & 48.0 & 45.0 & 49.0 & 44.3 & 43.0 & 40.0 \\         
$L_{-10}$ & \textbf{71.7} & \textbf{72.3} & \textbf{72.7} & \textbf{74.7} & \textbf{76.3} & \textbf{72.7} \\
\hline                    
\end{tabular}             
}
\caption{
Percentage of cases where the minimum clustering error is achieved by different methods.
Left: Columns correspond to a fixed value of $\tilde p$ and we aggregate over $\mu\in\{0.0,0.1,0.2,0.3,0.4,0.5\}$.
Left: Columns correspond to a fixed value of $\mu$     and we aggregate over  $\tilde p\in\{0.5,0.6,0.7,0.8,0.9,1.0\}$.
}
\label{table:DCSBM}
\end{table*}
%
%
%
%
\begin{thm}\label{specmainthm}
Let $\pp>\qp$ and for $\varepsilon>0$ define \vskip-1em
$$
\tilde{\mathcal L}_{\mathrm{sym}}^{(t)} = \mathcal L_\sym^{(t)} + \varepsilon I, \quad t=1,2,3.
$$
Then 
the eigenvectors of 
$\mathcal L_p=M_p(\tilde{\mathcal L}_{\sym}^{(1)},\tilde{\mathcal L}_{\sym}^{(2)},\tilde{\mathcal L}_{\sym}^{(3)})$ 

corresponding to its three smallest eigenvalues are
$$\boldsymbol \chi_1=\one, \quad \boldsymbol \chi_2=\one_{\mathcal C_2}-\one_{\mathcal C_1}, \quad \text{and}\quad \boldsymbol \chi_3=\one_{\mathcal C_3}-\one_{\mathcal C_1} $$
for any nonzero integer $p$.
\end{thm}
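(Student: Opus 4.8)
The plan is to exploit the block structure of the expected matrices rather than to try to diagonalize the (non-commuting) layer Laplacians directly. Since $\mathcal W^{(1)},\mathcal W^{(2)},\mathcal W^{(3)}$ and the degree matrices $\mathcal D^{(t)}=\diag(\mathcal W^{(t)}\one)$ are constant on the blocks of the partition $\mathcal C_1,\mathcal C_2,\mathcal C_3$, the three-dimensional space $S=\operatorname{span}\{\one_{\mathcal C_1},\one_{\mathcal C_2},\one_{\mathcal C_3}\}$ is invariant under every $\mathcal L_\sym^{(t)}$, hence under each $\tilde{\mathcal L}_\sym^{(t)}$; being symmetric, each $\tilde{\mathcal L}_\sym^{(t)}$ is block diagonal with respect to $\R^n=S\oplus S^\perp$, and functional calculus preserves this, so $(\tilde{\mathcal L}_\sym^{(t)})^p$, the average $\tfrac13\sum_t(\tilde{\mathcal L}_\sym^{(t)})^p$, and the $p$-th root $\mathcal L_p$ are all block diagonal for $S\oplus S^\perp$. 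On $S^\perp$ every vector is annihilated by each $\mathcal W^{(t)}$ (block-constant rows) while $(\mathcal D^{(t)})^{-1/2}$ preserves $S^\perp$, so $\mathcal L_\sym^{(t)}|_{S^\perp}=I$, hence $\mathcal L_p|_{S^\perp}=(1+\varepsilon)I$. The whole analysis then reduces to the three eigenvalues of $\mathcal L_p|_S$, which must be shown to be strictly below $1+\varepsilon$.

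On $S$, in the orthonormal basis $\{|\mathcal C|^{-1/2}\one_{\mathcal C_\ell}\}$ write $\tilde{\mathcal L}_\sym^{(t)}|_S=(1+\varepsilon)I_3-P_t$. The cyclic relabelling $\mathcal C_1\!\to\!\mathcal C_2\!\to\!\mathcal C_3\!\to\!\mathcal C_1$ and the transposition $\mathcal C_2\!\leftrightarrow\!\mathcal C_3$ permute the triple $(\mathcal W^{(1)},\mathcal W^{(2)},\mathcal W^{(3)})$, so conjugation by the corresponding $3\times3$ permutation matrices permutes $(P_1,P_2,P_3)$; since $M_p$ is symmetric in its arguments, $\tfrac13\sum_t((1+\varepsilon)I_3-P_t)^p$ commutes with every $3\times 3$ permutation matrix. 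The permutation representation of $S_3$ on $\R^3$ decomposes as the trivial summand $\operatorname{span}\{\one\}$ plus the $2$-dimensional standard representation $\one^\perp$, so by Schur's lemma the averaged matrix equals $\lambda_1\tfrac13\one\one^{\!\top}+\lambda_2\big(I_3-\tfrac13\one\one^{\!\top}\big)$ for scalars $\lambda_1,\lambda_2$; applying $X\mapsto X^{1/p}$ to the two complementary eigenspaces separately gives
$$
\mathcal L_p|_S=\lambda_1^{1/p}\,\tfrac13\one\one^{\!\top}+\lambda_2^{1/p}\big(I_3-\tfrac13\one\one^{\!\top}\big).
$$
Thus $\boldsymbol\chi_1=\one$ is an eigenvector with eigenvalue $\lambda_1^{1/p}$, and the plane $\one^\perp\cap S$ — which contains $\boldsymbol\chi_2=\one_{\mathcal C_2}-\one_{\mathcal C_1}$ and $\boldsymbol\chi_3=\one_{\mathcal C_3}-\one_{\mathcal C_1}$ — is the eigenspace of $\lambda_2^{1/p}$.

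To finish I would check $\lambda_1^{1/p},\lambda_2^{1/p}<1+\varepsilon$. By Theorem~\ref{thm:spectrum_L} the single-layer $\mathcal L_\sym^{(t)}$ restricted to $S$ has eigenvalues $0$ (on $(\mathcal D^{(t)})^{1/2}\one$), a value $\nu\in(0,1)$ (on $-\beta\one_{\mathcal C_t}+\one_{\overline{\mathcal C_t}}$; $\nu$ is the same for all $t$ by the symmetry above and $\nu<1$ precisely because $\pp>\qp$), and $1$ (on the remaining direction); hence $(1+\varepsilon)I_3-P_t$ has eigenvalues $\varepsilon<\varepsilon+\nu<1+\varepsilon$, with $\one$ orthogonal to the middle eigenvector and with a strictly positive component along the $\varepsilon$-eigenvector $(\mathcal D^{(t)})^{1/2}\one$. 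A one-line computation then writes $\lambda_1$ and $\lambda_2$ as convex combinations of $\varepsilon^p$, $(\varepsilon+\nu)^p$, $(1+\varepsilon)^p$ carrying strictly positive weight on a term below $(1+\varepsilon)^p$; since $x\mapsto x^p$ and $x\mapsto x^{1/p}$ are strictly monotone (both inequalities merely flip for $p<0$, which is harmless), this gives $\lambda_1^{1/p},\lambda_2^{1/p}<1+\varepsilon$ for every nonzero $p$. Together with the first paragraph, the eigenvectors of $\mathcal L_p$ for its three smallest eigenvalues span $S$ and may be taken to be $\boldsymbol\chi_1,\boldsymbol\chi_2,\boldsymbol\chi_3$.

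The only genuinely non-mechanical step is the second paragraph: recognizing that although the $\mathcal L_\sym^{(t)}$ do not commute they share the invariant subspace $S$, and that on $S$ the $S_3$-symmetry relating the three layers forces the averaged $p$-th powers — and hence $\mathcal L_p|_S$ — to be diagonal in the fixed decomposition $\operatorname{span}\{\one\}\oplus(\one^\perp\cap S)$. This is the substitute for Lemma~\ref{lemma:eigenvalues_and_eigenvectors_of_generalized_mean_V2}, which cannot be applied directly here. Everything else (the action on $S^\perp$, the eigenvalues of a single shifted layer Laplacian on $S$, and the final monotonicity estimate) is routine; the only care needed concerns the sign of $p$ and the strictness of the separation, both of which rest on the standing hypothesis $\pp>\qp$.
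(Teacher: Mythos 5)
Your proposal is correct in outline but takes a genuinely different route from the paper's. The paper reduces to the case $|\mathcal C_i|=1$ via Kronecker products, shows by explicit matrix algebra that $\sum_t(\tilde{\mathcal L}_\sym^{(t)})^p$ lies in the commutative class $\{aI+bE\}$ (its lemmas on the sets $\mathcal U_3$ and $\mathcal Z_3$, closed under products and inverses), and then fixes the eigenvalue ordering by an entrywise sign analysis of $\tilde{\mathcal L}_\sym^{\,p}$ obtained from a binomial expansion. Your invariant subspace $S\oplus S^{\perp}$ replaces the Kronecker reduction, Schur's lemma for the $S_3$-action replaces the $\mathcal U_3/\mathcal Z_3$ computations (the commutant of the permutation representation is exactly $\{aI+bE\}$, so the two arguments land in the same place), and a Rayleigh-quotient/convex-combination estimate replaces the sign analysis. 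This is shorter and more conceptual; the paper's computations, in exchange, deliver explicit formulas for every eigenvalue and eigenvector of $\mathcal L_p$, not just the ordering of the bottom three.

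Two repairs are needed in your final step. First, $\one$ is \emph{not} orthogonal to the middle eigenvector of $(1+\varepsilon)I_3-P_t$: in the orthonormal basis of $S$ that eigenvector is proportional to $(s_-,1,1)$ with $s_-=-2\sqrt{(2\pp+\qp)/(\pp+2\qp)}\neq -2$ whenever $\pp>\qp$. What is true, and more useful, is that $\one$ is orthogonal to the \emph{top} eigenvector (the $P_t$-permuted image of $(0,-1,1)$), so $\one$ places zero weight on $(1+\varepsilon)^p$ for every layer and the bound on $\lambda_1$ is immediate. Second, and more substantively, for $\lambda_2$ the ``one-line computation'' hides the crux of the theorem: you must exclude that a unit vector $u\in\one^{\perp}\cap S$ puts all of its weight on the top eigenvalue $1+\varepsilon$ for \emph{every} layer simultaneously; otherwise the inequality is not strict and the informative eigenvectors need not lie below the $S^{\perp}$ eigenvalue. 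This is precisely the ``no single layer suffices, but all together do'' phenomenon the theorem is about. The fix is easy inside your framework: the top eigenspaces of the three layers on $S$ are the lines spanned by $(0,-1,1)$, $(1,-1,0)$ and $(-1,0,1)$, and no nonzero vector lies on all three (any two of them already span $\one^{\perp}\cap S$), so for every such $u$ at least one layer contributes strictly positive weight to an eigenvalue strictly below $1+\varepsilon$, and your monotonicity argument in $p$ then closes the proof.
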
 


The proof of  Theorem \ref{specmainthm} is more delicate than the one of Theorem \ref{thm:MatrixPowerMeanLaplacian}, as it involves the addition of powers of matrices that do not have the same eigenvectors. 

Note that  Theorem~\ref{specmainthm} does not distinguish the behavior for distinct values of $p$. In expectation all nonzero integer
values of $p$ work the same. This is different to Theorem~\ref{thm:MatrixPowerMeanLaplacian}, where the choice of $p$ had a relevant influence on the eigenvector embedding even in expectation. However, we see in the  experiments on graphs sampled from the SBM (Figure \ref{fig:3Layers}) that the choice of $p$ has indeed a significant influence on the performance even though they are the same
in expectation. This suggests that the smaller $p$, the smaller the variance in the difference to the expected behavior in the SBM. We leave this as an open problem if such a dependency can be shown analytically.

In Figs.~\ref{subfig:3LayersErrorA} and~\ref{subfig:3LayersError} we present the mean clustering error out of ten runs.
In Fig.~\ref{subfig:3LayersErrorA} one can see that BestView and Heuristic, which rely on clusterings determined by single views, return high clustering errors which correspond to the identification of only a single cluster. The result of Theorem~\ref{specmainthm} explains this failure.
The reason for the increasing clustering error with $p$ can be seen
in Fig.~\ref{subfig:3LayersSpaguetti} where we analyze how the ordering of eigenvectors changes for different values of $p$.
We can see that for negative powers, the informative eigenvectors belong to the bottom three eigenvalues (denoted in red).
For the cases where $p\geq\! 2$ the ordering changes, pushing non-informative eigenvectors to the bottom of the spectrum and thus
resulting into a high clustering error, as seen in Fig.~\ref{subfig:3LayersBars}. However, we conclude that also for this second case a
strongly negative power mean Laplacian as $L_{-10}$ works~best.
%
%
%
%
%
%
%
%
%
%
%
%
%
%
%
%
\makeatletter
\newcommand{\removelatexerror}{\let\@latex@error\@gobble}
\makeatother
\removelatexerror
\begin{figure*}
    \makeatletter
    \def\@captype{algocf}
    \makeatother
    \begin{minipage}{0.45\textwidth}
    \vspace{-4pt}
          \begin{algorithm}[H]
          \DontPrintSemicolon
		\caption{\footnotesize{PM applied to $M_p.^{\textcolor{white}{1/2}}$}}\label{alg:PM}
		{\footnotesize
		\KwIn{$\x_0$, $p<0$}
		\KwOut{Eigenpair  $(\lambda, \x)$ of $M_p$}
		\Repeat{tolerance reached}{
			   $\u^{(1)}_k$ $\gets$ $(A_{1})^{p} \x_k$ \;
			   $\vdots$\;
		        $\u^{(T)}_k$ $\gets$ $(A_{T})^{p} \x_k$ \;
			$\y_{k+1}$ $\gets$ ${\frac  {1}{T}} \sum _{{i=1}}^{T}\u^{(i)}_k$\;
			$\x_{k+1}$ $\gets$ $\y_{k+1} / \|\y_{k+1} \|_2$\;
			}
		$\lambda$ $\gets$ $(\x_{k+1}^T \x_k)^{1/p}$,\quad  $\x$ $\gets$ $\x_{k+1}$\;
		}
		\end{algorithm}
    \end{minipage}\hfill
    \begin{minipage}{0.45\textwidth}
         \begin{algorithm}[H]
          \DontPrintSemicolon
		\caption{\footnotesize{PKSM for the computation of $A^p\y$}}\label{alg:kyrlov}
		{\footnotesize
		\KwIn{$\u_0 = \y$, $V_0 = [\,\cdot\,  ], p<0$ }
		\KwOut{$\x=A^p\y$}
		$\v_0$ $\gets$ $\y/\vectornorm{\y}_2$\;
		\For{$s=0,1,2,\dots,n$}{
			$\tilde V_{s+1}$ $\gets$ $[V_s,\v_{s}]$\;
			$V_{s+1}$ $\gets$ Orthogonalize columns of $\tilde V_{s+1}$\;
			$H_{s+1}$ $\gets$ $V_{s+1}^T AV_{s+1}$\;
			$\x_{s+1}$ $\gets$ $V_{s+1} (H_{s+1})^p\e_1 \vectornorm{\y}_2$\;
			\textbf{if} {\it tolerance reached} \textbf{then} {\it break} \;
			$\v_{s+1}$ $\gets$  $A\v_{s}$	\;	
		}
		$\x$ $\gets$ $\x_{s+1}$\;
		}
		\end{algorithm}
    \end{minipage}
\end{figure*}
\begin{figure*}[h]
\floatbox[{\capbeside\thisfloatsetup{capbesideposition={right,top},capbesidewidth=.6\textwidth}}]{figure}[\FBwidth]%
{
\caption{
Mean execution time of 10 runs for the power mean Laplacian $L_p$. 
$L_{-1}(\textrm{ours}), L_{-2}(\textrm{ours}), L_{-5}(\textrm{ours}), L_{-10}(\textrm{ours})$
stands for the power mean Laplacian together with our proposed Power Method (Alg.\ \ref{alg:PM}) based on the Polynomial Krylov Approximation Method (Alg.\ \ref{alg:kyrlov}).
$L_{1}(\textrm{eigs})$ stands for the arithmetic mean Laplacian together with Matlab's \texttt{eigs} function. 
Experiments are performed using one thread.
We generate multilayer graphs with two layers, each with two clusters of same size with parameters $p_{\mathrm{in}} = 0.05$ and $p_{\mathrm{in}} = 0.025$
and graphs of size $\left| V \right|\in\{10000,20000,30000,40000\}$.
}
    \label{fig:timeComparison}}{\includegraphics[width=.3\textwidth, clip,trim=40 35 80 65]{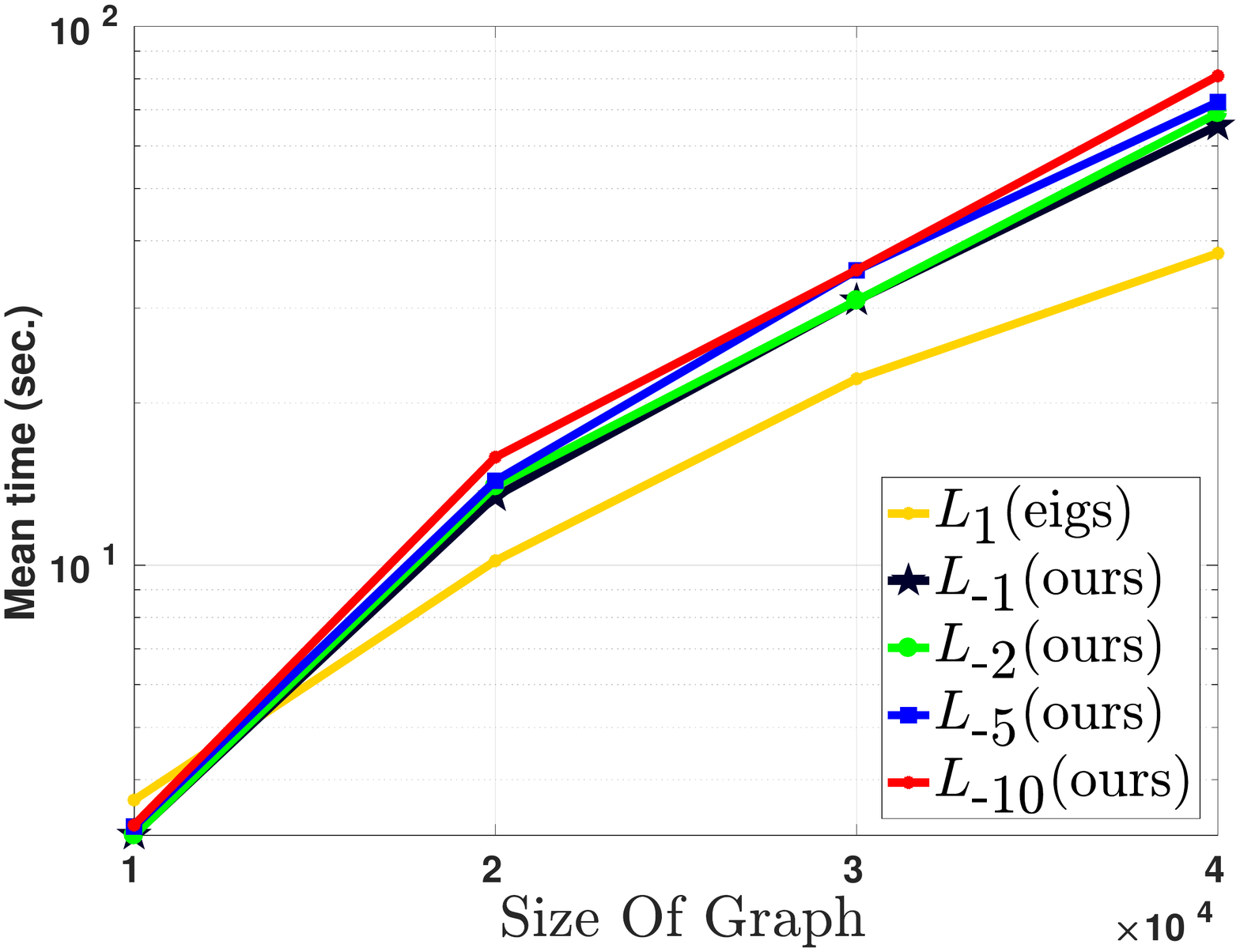}}
\end{figure*}
\subsection{Case 3: Non-consistent partitions between layers}
%
%
We now consider the case where all the layers follow the same node partition (as in Section~\ref{sec:sbm_setting1}),
but the partitions may fluctuate from layer to layer with a certain probability.
We use the multilayer network model introduced in~\cite{Bazzi:2016:Community}.
This generative model considers a graph partition for each layer, allowing the partitions to change from layer to layer according to an interlayer dependency tensor.
For the sake of clarity we consider a one-parameter interlayer dependency tensor with parameter $\tilde p\in [0,1]$ 
(i.e.\ a uniform multiplex network according to the notation used in Section 3.B in~\cite{Bazzi:2016:Community}), 
where for $\tilde p=0$ the partitions between layers are independent, and for $\tilde p=1$  the partitions between layers are identical.
Once the partitions are obtained, edges are generated following a multilayer
degree-corrected SBM (DCSBM in Section 4 of~\cite{Bazzi:2016:Community}), according to a one-parameter affinity matrix with parameter $\mu \in [0,1]$, where for $\mu=0$ all edges are within communities whereas for $\mu=1$ edges are assigned ignoring the community structure.

We choose 
${\tilde p\in\{0.5,0.6,0.7,0.8,0.9,1\}}$ and
${\mu\in\{0.0,0.1,0.2,0.3,0.4,0.5\}}$
and consider all possible combinations of $(\tilde p, \mu)$.
For each pair we count how many times, out of $50$ runs, each method achieves the smallest clustering error.
The remaining parameters of the DCSBM are set as follows:
exponent $\gamma\!\!=\!\!-3$, minimum degree and maximum degree $k_{min}\!=\!k_{max}\!=\!10$,
$\abs{V}\!=\!100$ nodes, $T\!=\!10$ layers and $K\!=\!2$ communities.
As partitions between layers are not necessarily the same, we take the most frequent node assignment among all 10 layers as ground truth clustering.


In Table~\ref{table:DCSBM}, left side, we show the result for fixed values of $\tilde p$ and average over all values of $\mu$. 
On the right table we show the corresponding results for fixed values of $\mu$ and average over all values of $\tilde p$. 
On the left table we can see that for  $\tilde p=1$, where the partition is the same in all layers, all methods recover the clustering,
while, as one would expect, the performance decreases with smaller values of $\tilde p$. 
Further, we note that the performance of the power mean Laplacian improves as $\tilde p$ decreases and $L_{-10}$ again achieves
the best result.
On the right table we see that performance is degrading with larger values of $\mu$. This is expected as for larger values of $\mu$ the  edges inside the clusters are less concentrated. Again the performance of the power mean Laplacian improves as $p$ decreases and $L_{-10}$ performs best.
%
%
%
%
%
%
%
%
%
%
%
%
%
%
%
%
%
%
\begin{table*}[h]\small
\newcolumntype{"}{@{\hskip\tabcolsep\vrule width 1pt\hskip\tabcolsep}}
\setlength\extrarowheight{1pt}
\setlength{\tabcolsep}{3pt}
\centering        
\begin{tabular}{ccccccccc}   
 \specialrule{1.5pt}{.1pt}{2pt} 
                           & \textbf{3Sources}    & \textbf{BBC} & \textbf{BBCS}     & \textbf{Wiki}  & \textbf{UCI} & \textbf{Citeseer} & \textbf{Cora} & \textbf{WebKB}\\
\specialrule{1pt}{.1pt}{0pt} 
\footnotesize{\# vertices} & 169                  & 685          & 544               & 693  	      & 2000         & 3312              & 2708          & 187 \\            
\footnotesize{\# layers}   & 3      		  & 4            & 2                 & 2  	      & 6            & 2                 & 2             & 2   \\ 
\footnotesize{\# classes}  & 6      		  & 5            & 5                 & 10  	      & 10           & 6                 & 7             & 5   \\ 

\specialrule{1pt}{-1pt}{1pt} 
\multirow{1}{*}{$L_{\textrm{agg}}$}                                     & 0.194          & 0.156          & 0.152          & 0.371          & 0.162          & 0.373          & 0.452           & \textbf{0.277}\\  
\multirow{1}{*}{Coreg}                                                  & 0.215          & 0.196          & 0.164          & 0.784          & 0.248          & 0.395          & 0.659           & 0.444\\            
\multirow{1}{*}{Heuristic}                                              & \textbf{0.192} & 0.218          & 0.198          & 0.697          & 0.280          & 0.474          & 0.515           & 0.400\\               
\multirow{1}{*}{TLMV     }                                              & 0.284          & 0.259          & 0.317          & 0.412          & 0.154          & 0.363          & 0.533           & 0.430\\      
\multirow{1}{*}{RMSC     }                                              & 0.254          & 0.255          & 0.194          & 0.407          & 0.173          & 0.422          & 0.507           & 0.279\\ 
\multirow{1}{*}{MT     }                                                & 0.249          & \textbf{0.133} & 0.158          & 0.544          & 0.103          & 0.371          & 0.436           & 0.298\\                       
\multirow{1}{*}{$L_{1}$ }                                               & 0.194          & 0.154          & 0.148          & 0.373          & 0.163          & 0.285          & \textbf{0.367}  & 0.440\\      
\multirow{1}{*}{$\boldsymbol{L_{-10}}$ {\scriptsize  (\textbf{ours}) }} & 0.200          & 0.159          & \textbf{0.144} & \textbf{0.368} & \textbf{0.095} & \textbf{0.283} & 0.374           & 0.439\\
\specialrule{1.5pt}{.1pt}{.1pt} 
\end{tabular}
\caption{Average Clustering Error}
\label{table:ExperimentsRealNetworksUnsigned}
\end{table*}
\section{Computing the smallest eigenvalues and eigenvectors of $M_p(A_1,\dots,A_T)$}\label{sec:PowerMethod}
We present an efficient
method for the computation of the smallest eigenvalues of $M_p(A_1,\dots,A_T)$
which does not require the computation of  the matrix $M_p(A_1,\dots,A_T)$. This is particularly important when dealing with large-scale problems as $M_p(A_1,\dots,A_T)$ is typically dense even though each $A_i$ is a sparse matrix. We restrict our attention to the case $p<0$ which is the most interesting one in practice. The positive case $p>0$ as well as the limit case $p\to 0$ deserve a different analysis and \pedro{are} not considered here. 

Let  $A_1,\dots,A_T$ be positive definite matrices. 
If ${\lambda_1\leq \cdots\leq \lambda_n}$ are the eigenvalues of $M_p(A_1,\dots,A_T)$  corresponding to the eigenvectors 
$\u_1, \dots, \u_n$, then $\mu_i\!=\!(\lambda_i)^p$, $i\!=\!1,\dots, n$, are the  eigenvalues of $M_p(A_1,\dots,A_T)^p$ corresponding to the eigenvectors $\u_i$. 
However, the function $f(x)\!=\!x^p$ 
is order reversing for $p<0$.
Thus, the relative ordering of the $\mu_i$'s changes into $\mu_1\geq \cdots\geq \mu_n$. Thus, the smallest eigenvalues and eigenvectors of $M_p(A_1,\dots,A_T)$ can be computed by addressing the largest ones of 
$M_p(A_1,\dots,A_T)^p$.  To this end we propose a power method type outer-scheme, combined with a Krylov subspace approximation inner-method. The pseudo code is presented in  Algs. \ref{alg:PM} and \ref{alg:kyrlov}.  
Each step of the outer iteration in Alg.\ \ref{alg:PM}   requires to compute  the $p$th power of $T$ matrices times a vector. Computing $A^p \times vector$, reduces to the problem of computing the product of a matrix function times a vector. Krylov methods are among the most efficient and most studied strategies to address such a computational issue. As $A^p$ is a polynomial in $A$, we apply a Polynomial Krylov Subspace Method (PKSM), whose pseudo code is presented in Alg. \ref{alg:kyrlov} and which we briefly describe in the following. For further details we refer to \cite{Higham:2008:FM} and the references therein.  For the sake of generality, below we describe the method for a general positive definite matrix $A$.

The general idea of PKSM $s$-th iteration is to project $A$ onto  the subspace 
$\mathbb K^s(A,\y) = \mathrm{span}\{\y, A\y, \dots, A^{s-1}\y\}$
and solve the problem there. The projection onto $\mathbb K^s(A,\y)$ is realized by means of the Lanczos process, 
producing a sequence of matrices $V_s$ with orthogonal columns,  
where the first column of $V_s$ is $\y/\vectornorm{\y}_2$ and $\mathrm{range}(V_s)=\mathbb K^s(A,\y)$. 
Moreover at each step we have $AV_s = V_s H_s + \v_{s+1}\e_{s}^T$
where $H_s$ is $s\times s$ symmetric tridiagonal, 
and $\e_i$ is the $i$-th canonical vector. 
The matrix vector product $\x = A^p\y$ is then approximated by 
$\x_s= V_s (H_s)^{p} \e_1\|\y\|\approx A^{p} \y$.

Clearly, if operations are done with infinite precision, the exact $\x$ is obtained after $n$ steps. However, in practice, the error $\|\x_s-\x\|$ decreases very fast with $s$ and often very few steps are enough to reach a desirable tolerance.  Two relevant observations are in order: 
first, the matrix  $H_{s}=V_{s}^T AV_{s}$  can be computed iteratively alongside the Lanczos method, thus it does not require any additional matrix multiplication; second, the $p$ power of the matrix $H_s$ can be computed directly without any notable increment in the algorithm cost,
since $H_{s}$ is tridiagonal of size $s \times s$.  

Several eigenvectors can be simultaneously computed with Algs.~\ref{alg:PM} and \ref{alg:kyrlov}
by orthonormalizing \pedro{the current eigenvector approximation} at every step of the power method (Alg.~\ref{alg:PM}) (see f.i. algorithm 5.1 Subspace iteration in~\cite{saad:2011:numerical}). 
Moreover, the outer iteration in Alg.\ \ref{alg:PM} can be easily run in parallel as the vectors $\mathbf u_k^{(i)}$, $i=1,\dots,T$ can be built independently of each other.

A numerical evaluation of Algs.~\ref{alg:PM} and \ref{alg:kyrlov} is presented in Fig.~\ref{fig:timeComparison}. 
We consider 
graphs of sizes 
${\left| V \right|\in\{1\!\times\!10^4,2\!\times\!10^4,3\!\times\!10^4,4\!\times\!10^4\}}$.
Further, for each multilayer graph we generate two assortative graphs with parameters $p_{\mathrm{in}}\!=\!0.05$ and $p_{\mathrm{in}}\!=\!0.025$, following the SBM.
Moreover, we consider the power mean Laplacian $L_p=M_p(L_\sym^{(1)},L_\sym^{(2)})$ with parameter ${p\!\in\!\{-1,-2,-5,-10\}}$.
As a baseline we take the arithmetic mean Laplacian ${L_1=M_1(L_\sym^{(1)},L_\sym^{(2)})}$ and use Matlab's \texttt{eigs} function.
For all cases, we compute the two eigenvectors corresponding to the smallest eigenvalues.
We present the mean execution time of 10 runs. Experiments are performed using one thread.

\section{Experiments}\label{sec:experiment}

%
We take 
the following baseline approaches of
spectral clustering applied to: the average adjacency matrix~($\mathbf{L_{agg}}$), the arithmetic mean Laplacian~($\mathbf{L_{1}}$),
the layer with the largest spectral gap~(\textbf{Heuristic}), and to the layer with the smallest clustering error~(\textbf{BestView}).
Further, we consider: 
Pairwise Co-Regularized Spectral Clustering~\cite{NIPS2011_4360}, with parameter $\lambda=0.01$ (\textbf{Coreg}), 
which proposes a spectral embedding 
generating
a clustering 
consistent among all graph layers,
Robust Multi-View Spectral Clustering~\cite{xia2014robust}, with parameter $\lambda = 0.005$ (\textbf{RMSC}),
which obtains a robust consensus representation by fusing noiseless information present among layers,
spectral clustering applied to a suitable convex combination of normalized adjacency matrices~\cite{zhou2007spectral} (\textbf{TLMV}),
and a tensor factorization method~\cite{Bacco:2017} (\textbf{MT}), 
which considers a multi-layer mixed membership (SBM).

We take several datasets:
\textit{3sources}\cite{liu2013multi}, \textit{BBC}\cite{Greene2005} and \textit{BBC Sports}\cite{greene2009matrix} news articles,
a dataset of Wikipedia articles\cite{rasiwasia2010new}, 
the hand written \textit{UCI} digits dataset with six different 
features 
and citations datasets
\textit{CiteSeer}\cite{lu:icml03},
\textit{Cora}\cite{mccallum2000automating} and
\textit{WebKB}\cite{Craven:1998:LES:295240.295725}, 
(from WebKB we only take the subset Texas).
%
%
For each layer we build the corresponding adjacency matrix from the $k$-nearest neighbour graph based
on the Pearson linear correlation between nodes, i.e. the higher the correlation the nearer the nodes are.
We test all clustering methods over all choices of ${k\in\{20,40,60,80,100\}}$,
and present the average clustering error in Table~\ref{table:ExperimentsRealNetworksUnsigned}.
Datasets CiteSeer, Cora and WebKB have two layers: one is a fixed citation network, whereas the second one is the $k$-nearest neighbour graph built on \pedro{documents} features. 
We can see that in four out of eight datasets the power mean Laplacian $L_{-10}$ gets the smallest clustering error.
The largest difference in clustering error is present in the UCI dataset,
where the second best is MT. Further, $L_1$ presents the smallest clustering error in Cora, being $L_{-10}$ close to it.
The smallest clustering error in WebKB is achieved by~$L_{\textrm{agg}}$.
This dataset 
is particularly challenging, due to conflictive \pedro{layers}\cite{He:2017}.

{\bf Acknowledgments}.
The work of P.M., A.G. and M.H. has been funded by the ERC starting grant NOLEPRO n. 307793. The work of F.T. has been funded by the Marie Curie Individual Fellowship MAGNET n. 744014.
\bibliography{references}
\bibliographystyle{abbrv}

\appendix

\section{Proofs for the Stochastic Block Model analysis}
This section has \pedro{two} parts corresponding to the Case 1 and 2 of the stochastic block model analysis. At the beginning of each of these sections, we first state what will be proved and discuss 
further refinements implied by the results presented here. For convenience we recall the notation where needed.

The correspondence between the results of the main paper and those proved here is as follows:
In Section \ref{SBM1} we discuss and prove Lemma 1, 2, Theorem 1 and Corollary 1 of the main paper. These results are directly implied by Lemma \ref{lemma:eigenvalues_and_eigenvectors_of_generalized_mean_V2}, Theorem \ref{SBMmain1} and Corollaries \ref{cor1}, \ref{cor2}, respectively, of the present manuscript. Then, in Section \ref{SBM2}, we prove Theorem 2 and Theorem 3 of the main paper which are respectively equivalent to Theorems \ref{thm:spectrum_L} and \ref{SBMmain2} below.

Before proceeding to the proofs, let us recall the setting. Let $V=\{v_1,\ldots,v_{n}\}$ be a set of nodes and let $T$ be the  number of layers,
represented by the adjacency  matrices $\multiLayerGraph{W} = \{ W^{(1)},\ldots,W^{(T)} \}$.
For each matrix $W^{(t)}$ we have a graph $G^{(t)} = (V,W^{(t)})$ and, overall, a multilayer graph ${\multiLayerGraph{G}=(G^{(1)}, \ldots,G^{(T)})}$.
We denote the ground truth clusters by $\mathcal{C}_1, \ldots, \mathcal{C}_k$ and assume that they all have the same size, 
i.e. $\abs{\mathcal{C}_i}=\abs{\mathcal{C}}$ for $i=1,\ldots,k$. 

In the following, we denote the identity matrix in $\R^m$ by $I_{m}$. Furthermore, for a matrix $X\in\R^{m\times m}$, we denote its eigenvalues by $\lambda_1(X),\ldots,\lambda_m(X)$.
\subsection{All layers have the same clustering structure}\label{SBM1}
For $t=1,\ldots,T$, let $\pp^{(t)}$ (resp. $\qp^{(t)}$) denote the probability that there exists an edge in layer $G^{(t)}$ between nodes that belong to the same (resp. different) clusters. Suppose that for $t=1,\ldots,T$, the expected adjacency matrix $\Wt\in\R^{n\times n}$ of $G^{(t)}$ is given for $i,j=1,\ldots,n$ as
$$\Wt_{ij} = \begin{cases}\pp^{(t)}&\text{if }v_i,v_j \text{ belong to the same cluster}\\ \qp^{(t)}&\text{otherwise}.\end{cases} $$
Furthermore, for every $t=1,\ldots,T,$ and $\epsilon\geq 0$, let
$$\mathcal{D}^{(t)}=\diag(\Wt\ones), \quad \rho_t= \frac{\pp^{(t)}-\qp^{(t)}}{\pp^{(t)}+(k-1)\qp^{(t)}},$$
$$ \lsym{t}=I_{n}-(\mathcal{D}^{(t)})^{-1/2}\Wt(\mathcal{D}^{(t)})^{-1/2}+\epsilon I_{n}$$

\pedro{
Observe that $\lsym{t}$ is the normalized Laplacian of the expected graph plus a diagonal shift. The diagonal shift is necessary to enforce this matrix to be positive definite for the cases $p\leq 0$, as stated in~\cite{bhagwat_subramanian_1978}. 
}

We consider the vectors $\bchi_1,\ldots, \bchi_k\in\R^{n}$ defined as 
\begin{align*}
   \boldsymbol \chi_1  = \one,\quad 
   \boldsymbol \chi_i = (k-1)\one_{\mathcal{C}_i}-\one_{\overline{\mathcal{ C}_i}},\quad i = 2,\ldots,k.
\end{align*}
By construction, $\bchi_1,\ldots, \bchi_k$ are all eigenvectors of $\Wt$ for every $t=1,\ldots,T$. These eigenvectors are precisely the vectors allowing to recover the ground truth clusters. 
Let
$$\L{p}=M_p\big(\lsym{1},\ldots,\lsym{T}\big)$$
where we assume that $\epsilon>0$ if $p\leq 0$. We prove the following:

\begin{thm}\label{SBMmain1}
Let $p\in[-\infty,\infty]$, and assume that $\epsilon>0$ if $p\leq 0$. Then, there exists $\lambda_i$ such that $\L{p}\bchi_i=\lambda_i\bchi_i$ for all $i=1,\ldots,k$. Furthermore, $\lambda_1,\ldots,\lambda_k$ are the $k$-smallest eigenvalues of $\L{p}$ if and only if $m_p(\boldsymbol{\mu}+\epsilon\ones)<1+\epsilon$, where $\boldsymbol{\mu}=(1-\rho_1,\ldots,1-\rho_T)$.
\end{thm}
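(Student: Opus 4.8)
The plan is to diagonalize everything simultaneously. The key observation is that each expected normalized Laplacian $\lsym{t}$ has a very rigid structure: since the expected adjacency matrix $\Wt$ is a block-constant matrix with $\pp^{(t)}$ on the diagonal blocks (clusters) and $\qp^{(t)}$ off-diagonal, and all clusters have equal size, each row sum of $\Wt$ is the same constant $d_t = \pp^{(t)}\abs{\mathcal C} + (k-1)\qp^{(t)}\abs{\mathcal C}$. Hence $\mathcal D^{(t)} = d_t I_n$, and $\lsym{t} = (1+\epsilon)I_n - \tfrac{1}{d_t}\Wt$. So all the $\lsym{t}$ are (affine images of) the single block-constant matrix pattern and therefore share a common eigenbasis. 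First I would show that $\bchi_1,\ldots,\bchi_k$ span the "cluster-indicator" subspace on which $\Wt$ acts as a scalar, compute that scalar, and thereby read off the eigenvalue of $\lsym{t}$ on each $\bchi_i$.

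Concretely, $\bchi_1 = \one$ is the all-ones vector; $\Wt\one = d_t\one$, so $\lsym{t}\bchi_1 = \epsilon\,\bchi_1$, i.e.\ eigenvalue $\epsilon$. For $i\geq 2$, $\bchi_i = (k-1)\one_{\mathcal C_i} - \one_{\overline{\mathcal C_i}}$ is orthogonal to $\one$ and is constant on each cluster; a direct computation of $\Wt\bchi_i$ (using that within $\mathcal C_i$ the contribution is $\pp^{(t)}(k-1)\abs{\mathcal C} - \qp^{(t)}(k-1)\abs{\mathcal C}$ and within any other cluster it is $\qp^{(t)}(k-1)\abs{\mathcal C} - (\pp^{(t)} + (k-2)\qp^{(t)})\abs{\mathcal C}$, which simplifies to the same multiple of $\bchi_i$) gives $\Wt\bchi_i = (\pp^{(t)}-\qp^{(t)})\abs{\mathcal C}\,\bchi_i$. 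Dividing by $d_t$ and using the definition of $\rho_t$ yields $\lsym{t}\bchi_i = (1-\rho_t+\epsilon)\bchi_i$. So on the $k$-dimensional subspace $\mathcal S = \mathrm{span}\{\bchi_1,\ldots,\bchi_k\}$, every $\lsym{t}$ acts with eigenvalues $\epsilon$ (on $\bchi_1$) and $1-\rho_t+\epsilon$ (on $\bchi_2,\ldots,\bchi_k$). By Lemma~\ref{lemma:eigenvalues_and_eigenvectors_of_generalized_mean_V2}, since each $\bchi_i$ is a common eigenvector of all the $\lsym{t}$, it is an eigenvector of $\L{p} = M_p(\lsym{1},\ldots,\lsym{T})$ with eigenvalue $\lambda_1 = m_p(\epsilon,\ldots,\epsilon) = \epsilon$ for $i=1$ and $\lambda_i = m_p(1-\rho_1+\epsilon,\ldots,1-\rho_T+\epsilon) = m_p(\boldsymbol\mu + \epsilon\ones)$ for $i=2,\ldots,k$. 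This establishes the first assertion.

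For the "if and only if" part, I would characterize the rest of the spectrum. On the orthogonal complement $\mathcal S^\perp$ — the space of vectors that sum to zero on each cluster — each $\Wt$ acts as zero (block-constant matrix times a per-block-mean-zero vector is zero), so $\lsym{t}$ restricted to $\mathcal S^\perp$ equals $(1+\epsilon)I$. Hence every vector in $\mathcal S^\perp$ is a common eigenvector of all layers with eigenvalue $1+\epsilon$, so by Lemma~\ref{lemma:eigenvalues_and_eigenvectors_of_generalized_mean_V2} it is an eigenvector of $\L{p}$ with eigenvalue $m_p(1+\epsilon,\ldots,1+\epsilon) = 1+\epsilon$. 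Therefore the spectrum of $\L{p}$ consists of exactly three values: $\epsilon$ with multiplicity $1$ (eigenvector $\bchi_1$), $m_p(\boldsymbol\mu+\epsilon\ones)$ with multiplicity $k-1$ (eigenvectors $\bchi_2,\ldots,\bchi_k$), and $1+\epsilon$ with multiplicity $n-k$ (the complement $\mathcal S^\perp$). Since $\epsilon < 1+\epsilon$ always, $\{\bchi_1,\ldots,\bchi_k\}$ are the $k$ eigenvectors attached to the $k$ smallest eigenvalues precisely when the middle value satisfies $m_p(\boldsymbol\mu+\epsilon\ones) < 1+\epsilon$ (and when equality holds there is ambiguity, so strict inequality is the right condition), which is exactly the claimed criterion.

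The main obstacle — really the only non-routine point — is handling the limit cases $p = \pm\infty$ and the degenerate case $p=0$ cleanly: one must check that $M_p$ and the scalar $m_p$ behave continuously/consistently at these extremes, that the diagonal shift $\epsilon>0$ indeed makes all $\lsym{t}$ positive definite so that $M_p$ is well-defined for $p\leq 0$ (this is exactly why the hypothesis $\epsilon>0$ for $p\leq 0$ is imposed, and it follows from the cited result~\cite{bhagwat_subramanian_1978}), and that the eigenvalue bookkeeping above survives passing to the limit. For $p\to\infty$, $m_p(\boldsymbol\mu+\epsilon\ones) = \max_t(1-\rho_t+\epsilon)$, so the condition $m_\infty < 1+\epsilon$ becomes $\rho_t > 0$ for all $t$, i.e.\ $\pp^{(t)} > \qp^{(t)}$ for all $t$; for $p\to-\infty$, $m_{-\infty} = \min_t(1-\rho_t+\epsilon)$, and $m_{-\infty} < 1+\epsilon$ becomes $\rho_t > 0$ for some $t$, i.e.\ at least one informative layer — recovering the two enumerated statements of Theorem~\ref{thm:MatrixPowerMeanLaplacian}. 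Everything else is the short direct computation of $\Wt\bchi_i$ sketched above.
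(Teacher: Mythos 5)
Your proposal is correct and follows essentially the same route as the paper: identify $\bchi_1,\ldots,\bchi_k$ as common eigenvectors of all the shifted Laplacians $\lsym{t}$ with eigenvalues $\epsilon$ and $1-\rho_t+\epsilon$, note that the orthogonal complement is an eigenspace for the eigenvalue $1+\epsilon$ in every layer, apply Lemma~\ref{lemma:eigenvalues_and_eigenvectors_of_generalized_mean_V2} to transfer the common eigenbasis to $\L{p}$ with scalar power means as eigenvalues, and read off the ordering condition $m_p(\boldsymbol{\mu}+\epsilon\ones)<1+\epsilon$. The only cosmetic difference is that you carry out the computation of $\Wt\bchi_i$ explicitly and treat $\mathcal S^\perp$ directly, whereas the paper packages the same facts as a full simultaneous spectral decomposition (its Corollary~\ref{specdec} and Lemma~\ref{lemain1}).
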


Before giving a proof of Theorem \ref{SBMmain1} we discuss some of its implications in order to motivate the result. First, we note that it implies that if $\bchi_1,\ldots, \bchi_k$ are among the smallest eigenvectors of $\L{p}$ then they  are among the smallest eigenvectors of $\L{q}$ for any $q\leq p$.
%
\begin{corollary}\label{cor1}
Let $q\leq p$ and assume that $\epsilon>0$ if $\min\{p,q\}\leq 0$.
If $\bchi_1,\ldots,\bchi_k$ correspond to the $k$-smallest eigenvalues of $\L{p}$, then
$\bchi_1,\ldots,\bchi_k$ correspond to the $k$-smallest eigenvalues of $\L{q}$.
\end{corollary}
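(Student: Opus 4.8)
The plan is to reduce the spectral statement to a one-line scalar inequality by invoking Theorem~\ref{SBMmain1} twice, once in each direction. First I would use Theorem~\ref{SBMmain1} to translate the hypothesis: the assumption that $\bchi_1,\ldots,\bchi_k$ correspond to the $k$-smallest eigenvalues of $\L{p}$ is, by that theorem, exactly the scalar condition $m_p(\boldsymbol{\mu}+\epsilon\ones)<1+\epsilon$, where $\boldsymbol{\mu}=(1-\rho_1,\ldots,1-\rho_T)$ and $\rho_t=(\pp^{(t)}-\qp^{(t)})/(\pp^{(t)}+(k-1)\qp^{(t)})$.

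Next I would record that the entries of $\boldsymbol{\mu}+\epsilon\ones$ are strictly positive: since $\pp^{(t)},\qp^{(t)}\geq 0$ we have $\rho_t\leq 1$, hence $\mu_t=1-\rho_t\geq 0$, and together with the standing assumption that $\epsilon>0$ whenever $\min\{p,q\}\leq 0$ this guarantees that $m_q$ and $m_p$ are both well defined at the point $\boldsymbol{\mu}+\epsilon\ones$. The key ingredient is then the classical monotonicity of the scalar power mean: for fixed positive $x_1,\ldots,x_T$ the map $r\mapsto m_r(x_1,\ldots,x_T)$ is non-decreasing on $[-\infty,\infty]$, with the usual conventions $m_{-\infty}=\min$, $m_{\infty}=\max$ and $m_0$ the geometric mean. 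Applying this with $q\leq p$ gives
$$
m_q(\boldsymbol{\mu}+\epsilon\ones)\;\leq\;m_p(\boldsymbol{\mu}+\epsilon\ones)\;<\;1+\epsilon .
$$

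Finally I would apply Theorem~\ref{SBMmain1} once more, now reading it with $q$ in place of $p$: the inequality $m_q(\boldsymbol{\mu}+\epsilon\ones)<1+\epsilon$ says precisely that $\bchi_1,\ldots,\bchi_k$ correspond to the $k$-smallest eigenvalues of $\L{q}$, which is the claim. I do not anticipate a real obstacle here — the statement is essentially a monotonicity bookkeeping argument. The only steps that deserve a sentence of care are (i) verifying positivity of $\boldsymbol{\mu}+\epsilon\ones$ so that the power mean inequality is applicable, and (ii) covering the extreme exponents $p$ or $q\in\{-\infty,\infty\}$, for which one simply notes that both the power mean monotonicity and Theorem~\ref{SBMmain1} extend to the closed interval $[-\infty,\infty]$, the endpoints corresponding to the maximum and minimum of the layer eigenvalues.
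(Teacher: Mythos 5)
Your proposal is correct and is essentially identical to the paper's own proof: both reduce the claim to the scalar condition $m_p(\boldsymbol{\mu}+\epsilon\ones)<1+\epsilon$ via Theorem~\ref{SBMmain1}, invoke monotonicity of $r\mapsto m_r$ to get $m_q(\boldsymbol{\mu}+\epsilon\ones)\leq m_p(\boldsymbol{\mu}+\epsilon\ones)$, and apply the theorem again with $q$ in place of $p$. Your added remarks on positivity of $\boldsymbol{\mu}+\epsilon\ones$ and the extreme exponents are sound but not needed beyond what the paper already assumes.
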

\begin{proof}
If $\lambda_1,\ldots,\lambda_k$ are among the $k$-smallest eigenvalues of $\L{p}$, then by Theorem \ref{SBMmain1}, we have $m_p(\boldsymbol{\mu}+\epsilon\ones)<1+\epsilon$. As $m_q(\boldsymbol{\mu}+\epsilon\ones)\leq m_p(\boldsymbol{\mu}+\epsilon\ones)$, Theorem \ref{SBMmain1} concludes the proof.
\end{proof}

The next corollary deals with the extreme cases where $p=\pm\infty$. In particular, it implies that whenever at least one layer $G^{(t)}$ is informative then the eigenvectors of $\L{-\infty}$ allow to recover the clusters. This contrasts with $p=\infty$ where the clusters can be recovered from the eigenvectors of $\L{\infty}$ if and only if all layers are informative. 

\begin{corollary}\label{cor2}
Let $p\in[-\infty,\infty]$ and $\epsilon>0$ if $p\leq 0$.
\begin{enumerate}[topsep=-3pt,leftmargin=*,resume]\setlength\itemsep{-3pt}
\item If $p=\infty$, then $\bchi_1,\ldots,\bchi_k$ correspond to the $k$-smallest eigenvalues of $\L{\infty}$ if and only if 
all layers are informative, i.e.
$\pp^{(t)} > \qp^{(t)}$ holds for all $t\in\{1,\ldots,T\}$.
\item If $p=-\infty$, then $\bchi_1,\ldots,\bchi_k$ correspond to the $k$-smallest eigenvalues of $\L{-\infty}$ if and only if 
there is at least one informative layer, i.e.
 there exists a $t\in\{1,\ldots,T\}$ such that $\pp^{(t)} > \qp^{(t)}$. 
\end{enumerate}
\end{corollary}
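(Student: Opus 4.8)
The plan is to derive both statements as special (limiting) cases of Theorem \ref{SBMmain1}, which tells us that $\bchi_1,\ldots,\bchi_k$ are the $k$-smallest eigenvectors of $\L{p}$ if and only if $m_p(\boldsymbol{\mu}+\epsilon\ones)<1+\epsilon$, where $\boldsymbol{\mu}=(1-\rho_1,\ldots,1-\rho_T)$ and $\rho_t=(\pp^{(t)}-\qp^{(t)})/(\pp^{(t)}+(k-1)\qp^{(t)})$. The first observation I would record is a purely scalar fact about the entries of $\boldsymbol{\mu}+\epsilon\ones$: layer $t$ is informative, i.e.\ $\pp^{(t)}>\qp^{(t)}$, precisely when $\rho_t>0$, equivalently when $1-\rho_t<1$, equivalently when the $t$-th entry $\mu_t+\epsilon=(1-\rho_t)+\epsilon$ is strictly less than $1+\epsilon$. (Here I should also note that $\rho_t<1$ always, since $\qp^{(t)}\ge 0$ forces $\pp^{(t)}-\qp^{(t)}<\pp^{(t)}+(k-1)\qp^{(t)}$ unless $\qp^{(t)}=0$ and $k=1$, a degenerate case one excludes; hence each entry $\mu_t+\epsilon>\epsilon>0$, so the power mean is well defined and positive for all $p$ including negative $p$.)

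Next I would invoke the two limiting identities for the scalar power mean recalled in Section \ref{sec:power_mean_of_laplacians}: $\lim_{p\to\infty} m_p(x_1,\dots,x_T)=\max_t x_t$ and $\lim_{p\to-\infty} m_p(x_1,\dots,x_T)=\min_t x_t$. Applying Theorem \ref{SBMmain1} with $p=\infty$, the recovery condition $m_\infty(\boldsymbol{\mu}+\epsilon\ones)<1+\epsilon$ becomes $\max_t(\mu_t+\epsilon)<1+\epsilon$, i.e.\ $\mu_t+\epsilon<1+\epsilon$ for \emph{every} $t$, i.e.\ $\rho_t>0$ for every $t$, i.e.\ $\pp^{(t)}>\qp^{(t)}$ for all $t$ — this is statement~1. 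Applying it with $p=-\infty$, the condition $m_{-\infty}(\boldsymbol{\mu}+\epsilon\ones)<1+\epsilon$ becomes $\min_t(\mu_t+\epsilon)<1+\epsilon$, i.e.\ $\mu_t+\epsilon<1+\epsilon$ for \emph{some} $t$, i.e.\ there exists $t$ with $\rho_t>0$, i.e.\ there exists an informative layer $\pp^{(t)}>\qp^{(t)}$ — this is statement~2.

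The only genuinely non-routine point, and hence the main obstacle, is making sure Theorem \ref{SBMmain1} truly applies verbatim at the endpoints $p=\pm\infty$: one must check that the eigenvector/eigenvalue structure established there is stable under the limit $p\to\pm\infty$ and that $\L{\infty}:=\lim_{p\to\infty}\L{p}$ and $\L{-\infty}:=\lim_{p\to-\infty}\L{p}$ exist and share the common eigenvectors $\bchi_1,\ldots,\bchi_k$. This follows from Lemma \ref{lemma:eigenvalues_and_eigenvectors_of_generalized_mean_V2}: since the $\bchi_i$ are common eigenvectors of all the shifted Laplacians $\lsym{t}$, they remain eigenvectors of $M_p$ for every $p$ with eigenvalue $m_p$ of the corresponding layer-eigenvalues, and the eigenvalue ordering on the orthogonal complement is likewise governed by $m_p$ of the remaining eigenvalues; taking $p\to\pm\infty$ and using continuity of $m_p$ in $p$ (together with monotonicity, Corollary \ref{cor1}, to guarantee the limiting set of smallest eigenvectors is exactly the one described) transfers the characterization to the limit cases. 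With that in hand the corollary is immediate. Since the statement says "the proof is in the Appendix / follows from Theorem \ref{SBMmain1}", I would simply write: \emph{the result follows immediately from Theorem \ref{SBMmain1} and the limits $\lim_{p\to\infty}m_p=\max$, $\lim_{p\to-\infty}m_p=\min$.}
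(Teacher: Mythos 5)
Your proposal is correct and follows essentially the same route as the paper: both reduce the claim to the condition $m_p(\boldsymbol{\mu}+\epsilon\ones)<1+\epsilon$ of Theorem \ref{SBMmain1}, use $m_{\infty}=\max$ and $m_{-\infty}=\min$, and translate $\mu_t+\epsilon<1+\epsilon$ into $\pp^{(t)}>\qp^{(t)}$. Your extra care about the endpoints $p=\pm\infty$ is already handled in the paper by stating Theorem \ref{SBMmain1} for $p\in[-\infty,\infty]$ via Lemma \ref{lemain1}, which establishes existence of the limits $\L{\pm\infty}$ and their eigenvalues.
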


\begin{proof}
Recall that $\lim_{p\to\infty}m_p(\v)=\max_{i=1,\ldots,m} v_i$ and $\lim_{p\to-\infty}m_p(\v)=\min_{i=1,\ldots,m} v_i$ for any $\v\in\R^m$ with nonnegative entries. 
Hence, we have $m_{\pm\infty}(\boldsymbol{\mu}+\epsilon\ones)=m_{\pm\infty}(\boldsymbol{\mu})+\epsilon$ and thus the condition $m_p(\boldsymbol{\mu}+\epsilon\ones)<1+\epsilon$ of Theorem \ref{SBMmain1} reduces to $m_{\pm\infty}(\boldsymbol{\mu})<1$ for $p=\pm\infty$. 
Furthermore, note that we have $\mu_t=1-\rho_t<1$ if and only if $\pp^{(t)}>\qp^{(t)}$. 
To conclude, note that $m_{\infty}(\boldsymbol{\mu})=\max_{t=1,\ldots,T}\mu_t<1$ if and only if $\mu_t<1$ for all $t=1,\ldots,T$ and $m_{-\infty}(\boldsymbol{\mu})=\min_{t=1,\ldots,T}\mu_t<1$ if and only if there exists $t\in\{1,\ldots,T\}$ such that $\mu_t<1$.
\end{proof}
For the proof of Theorem \ref{SBMmain1}, we give an explicit formula for eigenvalues of $\L{p}$ in terms of the eigenvalues of $\lsym{1},\ldots,\lsym{T}$. 
Then, we discuss the ordering of these eigenvalues. Furthermore, we show that $\bchi_i$ are all eigenvectors of $\L{p}$ and compute their corresponding eigenvalues.

By construction, there are $k$ eigenvectors $\boldsymbol\chi_i$ of $\Wt$ corresponding to a possibly nonzero eigenvalue $\lambda_i^{(t)}$. These are given by
\begin{align*}
&   \boldsymbol \chi_1  = \one, \qquad \lambda_1^{(t)} = \abs{\mathcal{C}}(\pp^{(t)}+(k-1)\qp^{(t)}),\\
 &  \boldsymbol \chi_i = (k-1)\one_{\mathcal{C}_i}-\one_{\overline{\mathcal{ C}_i}},
     \quad \lambda_i^{(t)}= \abs{\mathcal{C}}(\pp^{(t)}-\qp^{(t)})\\
\end{align*}
for $i=2,\ldots,k$. It follows that $\boldsymbol\chi_1,\ldots,\boldsymbol\chi_k$ are eigenvectors of $\lsym{t}$ with eigenvalues $\lambda_i(\lsym{t}) $. Furthermore, we have
\begin{align}\label{eigsLQsym}
    &\lambda_1(\lsym{t}) = \pedro{\epsilon},  \quad \lambda_i(\lsym{t}) = 1 -\rho_t\pedro{+\epsilon}, \quad i = 2,\ldots,k,\notag\\
    &\lambda_j(\lsym{t})=1\pedro{+\epsilon}  , \quad j=k+1,\ldots,n
\end{align}
Let
$$\L{p}=M_p\big(\lsym{1},\ldots,\lsym{T}\big).$$
The following lemma will be helpful to show that $\boldsymbol \chi_1,\ldots,\boldsymbol \chi_k$ are all eigenvectors $\L{p}$ and gives a formula for their corresponding eigenvalue.
\newcommand{\psd}{symmetric positive semi-definite}
\begin{lemma}\label{lemma:eigenvalues_and_eigenvectors_of_generalized_mean_V2}
 Let $A_1 , \ldots , A_T\in\R^{n\times n}$ be \psd{} matrices and let $p\in\R$. Suppose that $A_1,\ldots,A_T$ are positive definite if $p\leq 0$. If $\u$ is an eigenvector of $A_i$ with corresponding eigenvalue $\lambda_i$ for all $i=1,\ldots,T$,
then $\u$ is an eigenvector of $M_p(A_1 , \ldots , A_T)$ with eigenvalue $m_p(\lambda_1 , \ldots , \lambda_T)$.
\end{lemma}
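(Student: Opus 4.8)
The plan is to prove Lemma~\ref{lemma:eigenvalues_and_eigenvectors_of_generalized_mean_V2} by unwinding the definition of $M_p(A_1,\ldots,A_T) = \big(\tfrac1T\sum_{i=1}^T A_i^p\big)^{1/p}$ and tracking the common eigenvector $\u$ through each operation: taking the $p$-th power of each $A_i$, averaging, and then taking the $1/p$-th power of the result.

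First I would recall the elementary fact that if $M$ is a symmetric positive semi-definite matrix (positive definite when we need a negative or fractional power to be well-defined) and $M\u = \lambda\u$ with $\lambda \ge 0$, then $M^r \u = \lambda^r \u$ for any real exponent $r$ for which $M^r$ is defined. This follows because $M^r$ is a function of $M$ defined via the spectral decomposition (equivalently, $M^r$ is the unique positive semi-definite solution of $X^{1/r} = M$ as in Definition~\ref{definition:MatrixPowerMean}), so it acts diagonally on the eigenspaces of $M$. Applying this with $M = A_i$, $\lambda = \lambda_i$, $r = p$ gives $A_i^p \u = \lambda_i^p \u$ for each $i = 1,\ldots,T$; here the hypothesis that the $A_i$ are positive definite when $p \le 0$ guarantees $A_i^p$ exists and $\lambda_i > 0$.

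Next I would average: $\big(\tfrac1T\sum_{i=1}^T A_i^p\big)\u = \tfrac1T\sum_{i=1}^T \lambda_i^p\, \u = \big(\tfrac1T\sum_{i=1}^T \lambda_i^p\big)\u$, so $\u$ is an eigenvector of $B := \tfrac1T\sum_{i=1}^T A_i^p$ with eigenvalue $\beta := \tfrac1T\sum_{i=1}^T \lambda_i^p$. I would observe that $B$ is symmetric positive definite when $p \le 0$ (a positive combination of positive definite matrices) and positive semi-definite when $p > 0$, so $M_p(A_1,\ldots,A_T) = B^{1/p}$ is well-defined; applying the elementary fact once more with exponent $1/p$ yields $B^{1/p}\u = \beta^{1/p}\u$. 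Finally, $\beta^{1/p} = \big(\tfrac1T\sum_{i=1}^T \lambda_i^p\big)^{1/p} = m_p(\lambda_1,\ldots,\lambda_T)$ by definition of the scalar power mean, which completes the proof.

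There is no serious obstacle here; the only point requiring a little care is the bookkeeping about when the various powers are well-defined — ensuring positivity of the relevant matrices (and hence of the eigenvalues $\lambda_i$ and of $\beta$) precisely when $p \le 0$, so that $A_i^p$, $B$, and $B^{1/p}$ all make sense in the sense of Definition~\ref{definition:MatrixPowerMean}. This is exactly why the statement includes the hypothesis that $A_1,\ldots,A_T$ be positive definite when $p \le 0$. One may also remark that the argument shows more: restricted to the joint eigenspaces, $M_p$ is simultaneously diagonalized by any common eigenbasis of the $A_i$, with the scalar power mean acting on the tuples of eigenvalues.
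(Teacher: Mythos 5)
Your proposal is correct and follows essentially the same route as the paper's proof: both arguments push the common eigenvector $\u$ through $A_i^p$, through the average $\tfrac1T\sum_i A_i^p$, and then through the final $1/p$-th power (the paper phrases this last step as ``$M$ and $M^p$ share eigenvectors,'' which is the same observation). Your added bookkeeping about where positive definiteness is needed for $p\le 0$ is a welcome, if minor, elaboration.
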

\begin{proof}
First, note that $M=M_p(A_1 , \ldots , A_T)$ is symmetric positive (semi-)definite as it is a positive sum of such matrices. In particular, $M$ is diagonalizable and thus the eigenvectors of $M$ and $M^p$ are the same for every $p$. Now, as
 $A_i \u = \lambda_i \u$ for $i=1,\ldots,T$,
we have $A_i^p \u = \lambda_i^p \u$ for all $i$ and thus
 \begin{align*}
   M_p^p(A_1 , \ldots , A_T)\u &= {\frac  {1}{T}} \sum_{{i=1}}^{T} A_i^p \u = {\frac  {1}{T}} \sum_{{i=1}}^{T} \lambda_i^p \u \\&= m_p^p(\lambda_1 , \ldots , \lambda_T)\u.
 \end{align*}
Thus, $\u$ is an eigenvector of $M_p(A_1 , \ldots , A_T)$ with eigenvalue $m_p(\lambda_1 , \ldots , \lambda_T)$. 
\end{proof}
The above lemma, allows to obtain an explicit formula for $\L{p}$ which fully describes its spectrum. Indeed, we have the following
\begin{corollary}\label{specdec}
Let $X$ and $\Lambda^{(1)}$ be matrices such that $\lsym{1}=X\Lambda^{(1)}X^T$, $X$ is orthogonal and $\Lambda^{(1)}=\diag( \lambda_1(\lsym{1}),\ldots, \lambda_n(\lsym{1}))$.
 Then, we have
$\L{p}=X\Lambda X$ where $\Lambda$ is the diagonal matrix $\Lambda=\diag(\lambda_1(\L{p}),\ldots,\lambda_n(\L{p}))$ 
with 
$\lambda_i(\L{p})=m_p(\lambda_i(\lsym{1}),\ldots,\lambda_i(\lsym{T}))$,
for all $i=1,\ldots,n$.
\end{corollary}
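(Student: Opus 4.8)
The plan is to reduce the statement to a scalar computation via Lemma~\ref{lemma:eigenvalues_and_eigenvectors_of_generalized_mean_V2}, once it is known that $\lsym{1},\ldots,\lsym{T}$ share a common orthonormal eigenbasis. First I would observe that in this setting every node of the expected graph has the same degree $d_t=\abs{\mathcal C}\bigl(\pp^{(t)}+(k-1)\qp^{(t)}\bigr)>0$, so $\mathcal D^{(t)}=d_t I_n$ and therefore $\lsym{t}=(1+\epsilon)I_n-\tfrac{1}{d_t}\Wt$ is an affine image of $\Wt$. Since $\Wt=(\pp^{(t)}-\qp^{(t)})B+\qp^{(t)}\one\one^T$ with $B=\sum_{a=1}^k\one_{\mathcal C_a}\one_{\mathcal C_a}^T$, and $B$ and $\one\one^T$ are fixed symmetric matrices that commute, the family $\lsym{1},\ldots,\lsym{T}$ is simultaneously orthogonally diagonalizable.

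Next I would pin down the common eigenbasis explicitly, consistently with the index ordering of~\eqref{eigsLQsym}. Take $X$ with first column $\one/\sqrt{n}$, columns $2,\ldots,k$ an orthonormal basis of the $(k-1)$-dimensional space $\mathrm{span}\{\bchi_2,\ldots,\bchi_k\}$ (the $\bchi_i$ being linearly independent), and columns $k+1,\ldots,n$ an orthonormal basis of the orthogonal complement $Z$, i.e.\ of the space of vectors summing to zero on every cluster. Because $\bchi_1,\ldots,\bchi_k$ are eigenvectors of every $\Wt$, every vector of $Z$ lies in $\ker\Wt$, and $\mathcal D^{(t)}=d_t I_n$, each of these three subspaces is an eigenspace of every $\lsym{t}$, with eigenvalues $\epsilon$, $1-\rho_t+\epsilon$, and $1+\epsilon$ respectively — which is exactly~\eqref{eigsLQsym}. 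Hence every column $\x_i$ of $X$ is a common eigenvector of $\lsym{1},\ldots,\lsym{T}$ with $\lsym{t}\x_i=\lambda_i(\lsym{t})\,\x_i$, and $X$ is orthogonal with $\lsym{1}=X\Lambda^{(1)}X^T$ as required.

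The conclusion is then immediate: for $p\le 0$ the shift $\epsilon>0$ makes every $\lsym{t}$ positive definite, so $\L{p}=M_p(\lsym{1},\ldots,\lsym{T})$ is well defined, and applying Lemma~\ref{lemma:eigenvalues_and_eigenvectors_of_generalized_mean_V2} to each $\x_i$ gives $\L{p}\x_i=m_p\bigl(\lambda_i(\lsym{1}),\ldots,\lambda_i(\lsym{T})\bigr)\x_i$ for all $i$; since $\x_1,\ldots,\x_n$ is an orthonormal basis this is precisely $\L{p}=X\Lambda X^T$ with $\lambda_i(\L{p})=m_p(\lambda_i(\lsym{1}),\ldots,\lambda_i(\lsym{T}))$. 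The one point I would treat with care — the main (minor) obstacle — is the coherent labelling of the eigenvalues $\lambda_i(\lsym{t})$ across layers when $\lsym{1}$ has repeated eigenvalues (e.g.\ $\rho_1=0$, so $1-\rho_1+\epsilon=1+\epsilon$): an arbitrary eigenbasis of $\lsym{1}$ need not diagonalize the other layers, which is why $X$ must be chosen subordinate to the fixed decomposition $\R^n=\mathrm{span}\{\one\}\oplus\mathrm{span}\{\bchi_2,\ldots,\bchi_k\}\oplus Z$; this is legitimate precisely because those three subspaces are eigenspaces of every $\lsym{t}$ simultaneously.
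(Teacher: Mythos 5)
Your proof is correct and follows essentially the same route as the paper: establish that $\lsym{1},\ldots,\lsym{T}$ share a common orthonormal eigenbasis and then apply Lemma~\ref{lemma:eigenvalues_and_eigenvectors_of_generalized_mean_V2} columnwise to conclude $\L{p}=X\Lambda X^T$. You additionally make explicit a point the paper's one-line proof leaves implicit --- that when $\lsym{1}$ has repeated eigenvalues an arbitrary orthogonal diagonalizer $X$ of $\lsym{1}$ need not diagonalize the other layers, so $X$ must be chosen subordinate to the common eigenspace decomposition --- which is a worthwhile clarification but not a different method.
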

\begin{proof}
As $\lsym{t}$ have the same eigenvectors for every $t=1,\ldots,T$, it follows by Lemma \ref{lemma:eigenvalues_and_eigenvectors_of_generalized_mean_V2} that $\L{p}X=X\Lambda$ and thus $\L{p}=X\Lambda X^\top$.
\end{proof}
We note that on top of providing information on the spectral properties of $\L{p}$, Corollary \ref{specdec} ensures the existence of $\L{\pm\infty}\in\R^{n\times n}$ such that $\lim_{p\to\pm\infty}\L{p}=\L{\pm\infty}$. 

Combining Lemma \ref{lemma:eigenvalues_and_eigenvectors_of_generalized_mean_V2} with equation \eqref{eigsLQsym} we obtain the following 
\begin{lemma}\label{lemain1}
The limits $\lim_{p\to \pm\infty} \L{p}=\L{\pm\infty}$ exist. Furthermore, for $p\in[-\infty,\infty]$, we have $\L{p}\bchi_i=\lambda_i\bchi_i$ with 
$$ \lambda_1 = \epsilon, \quad \lambda_i = m_p(\boldsymbol{\mu}+\epsilon\ones), \quad i = 2,\ldots,k $$
%
where $\boldsymbol{\mu}=(1-\rho_1,\ldots,1-\rho_T)$. Furthermore, the remaining eigenvalues satisfy $\lambda_{k+1}=\cdots=\lambda_{n}=1+\epsilon$.
\end{lemma}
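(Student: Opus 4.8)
The plan is to derive the statement directly from the shared‑eigenvector structure of the expected layers recorded in \eqref{eigsLQsym}, the eigenvalue transfer lemma (Lemma \ref{lemma:eigenvalues_and_eigenvectors_of_generalized_mean_V2}), and the spectral decomposition of Corollary \ref{specdec}. First I would recall that, for every $t=1,\ldots,T$, the vectors $\bchi_1,\ldots,\bchi_k$ are simultaneously eigenvectors of the shifted expected Laplacian $\lsym{t}$, with $\lsym{t}\bchi_1=\epsilon\,\bchi_1$ and $\lsym{t}\bchi_i=(1-\rho_t+\epsilon)\,\bchi_i$ for $i=2,\ldots,k$, exactly as stated below \eqref{eigsLQsym}. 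Since these are common eigenvectors of all the $\lsym{t}$, and since the hypothesis $\epsilon>0$ for $p\le 0$ guarantees that each $\lsym{t}$ is positive definite so that $M_p$ (and the fractional matrix powers in its definition) are well defined, Lemma \ref{lemma:eigenvalues_and_eigenvectors_of_generalized_mean_V2} applies verbatim and yields, for every $p\in\R$, that $\bchi_i$ is an eigenvector of $\L{p}=M_p(\lsym{1},\ldots,\lsym{T})$ with eigenvalue $m_p\big(\lambda_i(\lsym{1}),\ldots,\lambda_i(\lsym{T})\big)$.

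The second step is to plug the eigenvalues from \eqref{eigsLQsym} into the scalar power mean. For $i=1$ all arguments equal $\epsilon$, and since $m_p$ of a constant vector returns that constant, $\lambda_1=\epsilon$. For $i=2,\ldots,k$ the argument is $(1-\rho_1+\epsilon,\ldots,1-\rho_T+\epsilon)=\boldsymbol\mu+\epsilon\ones$, hence $\lambda_i=m_p(\boldsymbol\mu+\epsilon\ones)$. For the remaining indices $j=k+1,\ldots,n$, the orthogonal matrix $X$ of Corollary \ref{specdec} simultaneously diagonalizes all the $\lsym{t}$ and each has eigenvalue $1+\epsilon$ on the corresponding eigenspace, so by Lemma \ref{lemma:eigenvalues_and_eigenvectors_of_generalized_mean_V2} (or directly from Corollary \ref{specdec}) $\lambda_j=m_p(1+\epsilon,\ldots,1+\epsilon)=1+\epsilon$. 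This accounts for all $n$ eigenpairs of $\L{p}$, so the description of the spectrum is complete for $p\in\R$.

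Finally, the existence of the limit operators $\L{\pm\infty}$ is exactly the content of Corollary \ref{specdec}: the $\L{p}$ all share the fixed orthogonal eigenbasis $X$, and each eigenvalue $m_p(\lambda_i(\lsym{1}),\ldots,\lambda_i(\lsym{T}))$ converges as $p\to\pm\infty$ to the maximum, resp.\ minimum, of the corresponding layer eigenvalues, by the standard limiting behaviour of scalar power means on a vector with positive entries; hence $\L{p}$ converges entrywise and the limit is the matrix with the limiting spectrum. Passing to $p=\pm\infty$ in the formulas above, and using continuity of $p\mapsto m_p$ on the extended real line (with $m_{\pm\infty}$ read as $\max$ resp.\ $\min$), gives the stated values. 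There is essentially no hard step here; the only point requiring care is the bookkeeping for $p\le 0$, where one must invoke $\epsilon>0$ to keep each shifted Laplacian positive definite so that $M_p$ is defined, together with the observation that the $n$ listed eigenpairs are exhaustive precisely because $X$ diagonalizes every layer at once.
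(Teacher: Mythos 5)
Your proposal is correct and follows essentially the same route as the paper: it invokes the simultaneous diagonalization of the shifted expected Laplacians (Corollary \ref{specdec}), transfers eigenvalues via Lemma \ref{lemma:eigenvalues_and_eigenvectors_of_generalized_mean_V2}, and evaluates the scalar power mean on the eigenvalue lists from \eqref{eigsLQsym}, with the limits $p\to\pm\infty$ handled by the continuity of $m_p$ on the fixed eigenbasis. No gaps.
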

\begin{proof}
With Corollary \ref{specdec} and Equation \eqref{eigsLQsym} we directly obtain for $i=2,\ldots,k$ and $j=k+1,\ldots,n$,
\begin{align*}
\lambda_1 &= \Big( \frac{1}{T} \sum_{t=1}^T \epsilon^p \Big)^{1/p} = m_p(\epsilon\ones)=\epsilon \\
\lambda_i &= \Big( \frac{1}{T}\sum_{t=1}^T (1-\rho_t+\epsilon)^p \Big)^{1/p} = m_p(\boldsymbol{\mu}+\epsilon\ones)\\
\lambda_j &= \Big( \frac{1}{T} \sum_{t=1}^T (1+\epsilon)^p  \Big)^{1/p} = m_p((1+\epsilon)\ones)=1+\epsilon \\
\end{align*}
\end{proof}
We are now ready to prove Theorem \ref{SBMmain1}.
%
%
\begin{proof}[Proof of Theorem \ref{SBMmain1}]
Clearly, $\lambda_1,\ldots,\lambda_k$ are among the $k$-smallest eigenvalues of $\L{p}$ if and only if 
$\lambda_1<\lambda_{k+1}\leq \cdots \leq \lambda_{n}$ 
and
$\lambda_2\leq\ldots \leq \lambda_k<\lambda_{k+1}\leq \cdots \lambda_{n}$ where $\lambda_1,\ldots,\lambda_n$ are all eigenvalues of $\L{p}$. 
%
By Lemma~\ref{lemain1}, we have $\lambda_1=\epsilon$, $\lambda_2=\cdots= \lambda_k =  m_p(\boldsymbol{\mu}+\epsilon\ones)$ and $\lambda_{k+1}=\cdots=\lambda_{n}=1+\epsilon$. 
Clearly $\lambda_1 =\epsilon < 1+\epsilon = \lambda_{k} \leq \cdots \leq \lambda_{n}$, thus, the first condition holds.
Hence, $\lambda_1,\ldots, \lambda_k$ correspond to the $k$-smallest eigenvalues of $\L{p}$ if and only if $m_p(\boldsymbol{\mu}+\epsilon\ones)<1+\epsilon$ which concludes the proof.
\end{proof}

\subsection{No layer contains full information of the Graph}\label{SBM2}
In this setting, we fix the number $k$ of cluster to $k=3$. 

For convenience, we slightly overload the notation for the remaining of this section: we denote by $n$ the size of each cluster $\mathcal C_1, \dots, \mathcal C_k$, i.e. $\abs{\mathcal C_i}=\abs{\mathcal{C}}=n$ for $i=1,\dots, k$. 
Thus, the size of the graph is expressed in terms of the number and size of clusters, i.e. $\abs{V}=nk$.

Furthermore, we suppose that for $t=1,2,3$, the expected adjacency matrix $\Wt\in\R^{3n\times 3n}$ of $G^{(t)}$, are given, for all $i,j=1,\ldots,3n$, as
\begin{equation*}
\Wt_{ij} = \begin{cases}\pin&\text{if }v_i,v_j \in \mathcal{C}_t \text{ or } v_i,v_j\in \overline{\mathcal{C}_t}\\
\pout&\text{otherwise},\end{cases}
\end{equation*}
where $0<\pout\leq \pin\leq 1$. For $t=1,2,3$ and $\epsilon\geq 0$, let
$\mathcal{D}^{(t)}=\diag(\Wt\ones)$,
$$\lsym{t}=I-(\mathcal{D}^{(t)})^{-1/2}\Wt(\mathcal{D}^{(t)})^{-1/2}+\epsilon I,$$
and for a nonzero integer $p$ let
$$\L{p}=M_p(\lsym{1},\lsym{2},\lsym{3}),$$
where we assume that $\epsilon>0$ if $p<0$. Consider further $\bchi_1,\bchi_2,\bchi_3\in\R^{3n}$ the vectors defined as
\begin{equation*}
\bchi_1 =\ones, \quad \bchi_2= \ones_{\mathcal{C}_1}-\ones_{\mathcal{C}_2},\quad \bchi_3= \ones_{\mathcal{C}_1}-\ones_{\mathcal{C}_3}.
\end{equation*}
In opposition to the previous model, it turns out that $\lsym{1},\lsym{2}$ and $\lsym{3}$ do not commute and thus do not share the same eigenvectors. Hence, we can not derive an explicit expression for $\L{p}$ as in Corollary \ref{specdec}. In particular this implies that we need to use different mathematical tools in order to study the eigenpairs of $\L{p}$. 

The first main result of this section, presented in Theorem \ref{thm:spectrum_L}, shows that, in general, the ground truth clusters can not be reconstructed from the $3$ smallest eigenvectors of $\lsym{t}$ for any $t=1,2,3$.
\begin{thm}\label{thm:spectrum_L}
If $1\geq\pp>\qp>0$, then for any $t=1,2,3$, there exist scalars $\alpha>0$ and $\beta>0$ such that the eigenvectors of $\mathcal L_\sym^{(t)}$ corresponding to the two smallest eigenvalues are
$$
\boldsymbol \varkappa_1 = \alpha \one_{\mathcal C_t}+\one_{\overline{\mathcal C_t}} \quad \text{and} \quad
\boldsymbol \varkappa_2 =-\beta \one_{\mathcal C_t}+\one_{\overline{\mathcal C_t}}
$$
whereas any vector orthogonal to both $\boldsymbol \varkappa_1$ and $\boldsymbol \varkappa_2$ is an eigenvector for the third smallest eigenvalue. 
\end{thm}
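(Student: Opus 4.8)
The plan is to exploit the block structure of the single matrix $\mathcal L_\sym^{(t)}$ and reduce everything to a $2\times 2$ eigenvalue computation. First I would fix $t$, partition the vertex set as $V=\mathcal C_t\cup\overline{\mathcal C_t}$ with $|\mathcal C_t|=n$ and $|\overline{\mathcal C_t}|=2n$, and record that $\mathcal W^{(t)}$ is constant on each of the three resulting blocks (value $\pp$ on $\mathcal C_t\times\mathcal C_t$ and on $\overline{\mathcal C_t}\times\overline{\mathcal C_t}$, value $\qp$ across), so the expected degrees take only the two values $d_1=n(\pp+2\qp)$ on $\mathcal C_t$ and $d_2=n(2\pp+\qp)$ on $\overline{\mathcal C_t}$. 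Therefore $\mathcal A:=(\mathcal D^{(t)})^{-1/2}\mathcal W^{(t)}(\mathcal D^{(t)})^{-1/2}$ is also block-constant; in particular its range is contained in $S:=\mathrm{span}\{\one_{\mathcal C_t},\one_{\overline{\mathcal C_t}}\}$ and $\mathcal A w=0$ for every $w\in S^{\perp}$. Since $\mathcal L_\sym^{(t)}=I-\mathcal A$ (the diagonal shift $\epsilon I$ only translates all eigenvalues and leaves eigenvectors and their order unchanged, so I may take $\epsilon=0$), this already shows that $S^{\perp}$, of dimension $3n-2$, sits inside the eigenspace of $\mathcal L_\sym^{(t)}$ for eigenvalue $1$ — which will be the ``third smallest eigenvalue'' once the other two eigenvalues are shown to be strictly below $1$.

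Next I would diagonalize $\mathcal A$ on the invariant subspace $S$: in the orthonormal basis $\{\one_{\mathcal C_t}/\sqrt n,\ \one_{\overline{\mathcal C_t}}/\sqrt{2n}\}$ the restriction $\mathcal A|_S$ is a symmetric $2\times 2$ matrix $B$ whose three entries are elementary rational functions of $\pp,\qp$ through $d_1,d_2$. Hence the spectrum of $\mathcal L_\sym^{(t)}$ consists of $1-\sigma_1$ and $1-\sigma_2$ together with $1$ repeated $3n-2$ times, where $\sigma_1\ge\sigma_2$ are the eigenvalues of $B$, and the two ``informative'' eigenvectors lie in $S$. To identify the first one, note that $(\mathcal D^{(t)})^{1/2}\one=\sqrt{d_1}\,\one_{\mathcal C_t}+\sqrt{d_2}\,\one_{\overline{\mathcal C_t}}\in S$ and $\mathcal A(\mathcal D^{(t)})^{1/2}\one=(\mathcal D^{(t)})^{-1/2}\mathcal W^{(t)}\one=(\mathcal D^{(t)})^{1/2}\one$, so $\sigma_1=1$ and, rescaling so the $\overline{\mathcal C_t}$-entry is $1$, the vector $\boldsymbol\varkappa_1=\alpha\,\one_{\mathcal C_t}+\one_{\overline{\mathcal C_t}}$ with $\alpha=\sqrt{d_1/d_2}>0$ is an eigenvector of $\mathcal L_\sym^{(t)}$ for eigenvalue $0$. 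The second eigenvector of $B$ is $B$-orthogonal, hence orthogonal in $\R^{3n}$ to $\boldsymbol\varkappa_1$ (because the chosen basis of $S$ is orthonormal); writing it as $-\beta\,\one_{\mathcal C_t}+\one_{\overline{\mathcal C_t}}$ and imposing $\langle\boldsymbol\varkappa_1,\cdot\rangle=0$ forces $\beta=2\sqrt{d_2/d_1}>0$.

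It then remains to show that $\boldsymbol\varkappa_2=-\beta\,\one_{\mathcal C_t}+\one_{\overline{\mathcal C_t}}$ corresponds to an eigenvalue strictly between those of $\boldsymbol\varkappa_1$ and of $S^{\perp}$, i.e. $0<\sigma_2<1$. Since $\sigma_1=1$ one has $\sigma_2=\operatorname{tr}(B)-1=B_{11}+B_{22}-1$, and a short simplification should give
$$\sigma_2=\frac{2(\pp^2-\qp^2)}{(\pp+2\qp)(2\pp+\qp)}=\frac{2(\pp-\qp)(\pp+\qp)}{(\pp+2\qp)(2\pp+\qp)},$$
which is strictly positive and (after clearing denominators) strictly less than $1$ exactly when $\pp>\qp>0$. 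This sign/size check is the only non-routine step: it is the quantitative form of the assortativity assumption $\pp>\qp$ and is precisely what keeps the informative contrast eigenvector $\boldsymbol\varkappa_2$ strictly below the bulk eigenvalue $1$ instead of buried inside it; everything else is block-structure bookkeeping together with one orthogonality argument. Once $0<1-\sigma_2<1$ is established, the eigenvalues of $\mathcal L_\sym^{(t)}$ are $0$, $1-\sigma_2$ and $1$ (multiplicity $3n-2$), so $\boldsymbol\varkappa_1$ and $\boldsymbol\varkappa_2$ are the eigenvectors of the two smallest eigenvalues, while $S^{\perp}=\{w:\ w\perp\boldsymbol\varkappa_1,\ w\perp\boldsymbol\varkappa_2\}$ is the eigenspace of the third smallest; reinstating the shift merely adds $\epsilon$ to each of these three values.
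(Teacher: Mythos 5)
Your proof is correct, and it takes a genuinely different route from the paper's. The paper first reduces to the case where each cluster is a single node, obtaining a $3\times 3$ matrix $\tilde{\mathcal M}=\tilde{\mathcal D}^{-1/2}\tilde{\mathcal W}\tilde{\mathcal D}^{-1/2}$ indexed by the three clusters; it computes all three eigenpairs explicitly by solving the quadratic $bs^2+(2c-a)s-2b=0$, orders the eigenvalues via Perron--Frobenius together with the inequality $ac>b^2$, and then lifts everything to $\R^{3n}$ by writing $\mathcal W^{(t)}=\tilde{\mathcal W}^{(t)}\otimes E$ and invoking the Kronecker-product eigenpair theorem. You instead use the coarser two-block partition $V=\mathcal C_t\cup\overline{\mathcal C_t}$ directly in $\R^{3n}$: the observation that $\mathcal A$ has rank at most $2$ with range in $S=\mathrm{span}\{\one_{\mathcal C_t},\one_{\overline{\mathcal C_t}}\}$ immediately yields the $(3n-2)$-dimensional eigenspace for eigenvalue $1$, the Perron eigenvector $(\mathcal D^{(t)})^{1/2}\one$ gives $\sigma_1=1$ and $\alpha=\sqrt{d_1/d_2}$, the trace then gives $\sigma_2$ without solving any quadratic, and orthogonality in $\R^{3n}$ forces $\beta=2/\alpha$ (consistent with the paper's $s_+s_-=-2$). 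Your computation $\sigma_2=2(\pp^2-\qp^2)/((\pp+2\qp)(2\pp+\qp))$ agrees with the paper's $\tilde\lambda_2=a+2c-1$, and the sign analysis is right (note $\sigma_2<1$ actually holds for any $\pp,\qp>0$; only $\sigma_2>0$ uses $\pp>\qp$). The trade-off: your argument is shorter and more self-contained for this theorem, but the paper's $3\times 3$ reduction plus Kronecker lifting is deliberately built to be reused for Theorem 3 on the power mean $\mathcal L_p$, where the three layers single out different clusters and a single two-block partition no longer captures the relevant invariant structure.
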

In fact, we prove even more by giving a full description of the eigenvectors of $\lsym{t}$ as well as the ordering of their corresponding eigenvalues. 
These results can be found in Lemma \ref{LsymWn} below. 

Our second main result is the following Theorem \ref{SBMmain2}. It shows that the ground truth clusters can always be recovered from the three smallest eigenvectors of $\L{p}$.
\begin{thm}\label{SBMmain2}
Let $p$ be any nonzero integer and assume that $\epsilon>0$ if $p< 0$. Furthermore, suppose that $0<\pout<\pin\leq 1$. Then, there exists $\lambda_i$ such that $\L{p}\bchi_i=\lambda_i\bchi_i$ for $i=1,2,3$ and $\lambda_1,\lambda_2,\lambda_3$ are the three smallest eigenvalues of $\L{p}$. 
\end{thm}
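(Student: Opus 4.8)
## Proof proposal for Theorem \ref{SBMmain2}

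The plan is to reduce the spectral analysis of $\L{p}=M_p(\lsym{1},\lsym{2},\lsym{3})$ to a low-dimensional problem by exploiting the block structure shared by the three expected adjacency matrices. First I would identify the common invariant subspace: all three matrices $\Wt$ are constant on the nine blocks $\mathcal C_i\times\mathcal C_j$, so they — and hence $\lsym{t}$ and every power $(\lsym{t})^p$ — leave invariant the $3$-dimensional space $\mathcal S=\mathrm{span}\{\one_{\mathcal C_1},\one_{\mathcal C_2},\one_{\mathcal C_3}\}$ of vectors that are constant on each cluster, together with its orthogonal complement $\mathcal S^\perp$. Note $\bchi_1,\bchi_2,\bchi_3$ span $\mathcal S$. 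On $\mathcal S^\perp$ I would show (using Lemma \ref{LsymWn}, the detailed description of the eigenvectors of a single $\lsym{t}$ promised in the text) that each $\lsym{t}$ acts as $(1+\epsilon)I$, so the power mean restricted to $\mathcal S^\perp$ is exactly $(1+\epsilon)I$: every vector in $\mathcal S^\perp$ is an eigenvector of $\L{p}$ with eigenvalue $1+\epsilon$. It then remains to show the three eigenvalues of $\L{p}$ coming from $\mathcal S$ are all strictly smaller than $1+\epsilon$, and that $\bchi_1,\bchi_2,\bchi_3$ are in fact eigenvectors.

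For the restriction to $\mathcal S$, I would compute the $3\times 3$ matrix $B_t$ representing $\lsym{t}$ in a convenient basis of $\mathcal S$. Since all rows of $\Wt$ have the same degree $d_t=n(\pin+2\pout)$ here (each cluster has the same size $n$ and each row sees one block of $\pin$'s of total weight — wait, the block pattern gives row sum $n\pin + 2n\pout$ independent of the node, using $\pin$ on the diagonal block pattern of $\Wt$), the normalization is just a scalar, so $\lsym{t}=I-\tfrac1{d_t}\Wt+\epsilon I$ on all of $\R^{3n}$, and $B_t$ is obtained by restricting the rank-structured matrix $\tfrac1{d_t}\Wt$ to $\mathcal S$. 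I expect that $\bchi_1=\one$ is a common eigenvector of all $B_t$ (with eigenvalue $\epsilon$, since $\one$ is the Perron vector of each $\Wt$), and that the real work concerns the $2$-dimensional complement $\mathcal S\cap\one^\perp=\mathrm{span}\{\bchi_2,\bchi_3\}$. There each $B_t$ acts as some $2\times2$ symmetric matrix $\widehat B_t$; by symmetry of the construction under cyclic relabeling of clusters, $\widehat B_1,\widehat B_2,\widehat B_3$ are related by rotations, and one checks that their $p$-th powers sum to a multiple of the identity (this is where I would either compute the three $\widehat B_t^{\,p}$ explicitly using a $2\times2$ closed form, or argue via a symmetry/averaging argument that $\tfrac13\sum_t \widehat B_t^{\,p}$ commutes with the symmetry group and is hence scalar on the $2$-dimensional irreducible piece). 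Consequently $M_p(\widehat B_1,\widehat B_2,\widehat B_3)$ is scalar, so $\bchi_2,\bchi_3$ are eigenvectors of $\L{p}$ with a common eigenvalue $\lambda_2=\lambda_3=:\nu$.

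Finally I would establish the ordering $\lambda_1=\epsilon<\nu<1+\epsilon$. The inequality $\epsilon<1+\epsilon$ is trivial; for $\nu<1+\epsilon$ I would bound the eigenvalues of each $\lsym{t}$ restricted to $\mathcal S\cap\one^\perp$: these lie in $[\epsilon,1+\epsilon)$ and at least some of them are strictly below $1+\epsilon$ whenever $\pin>\pout$ (this is exactly where the assortativity hypothesis $0<\pout<\pin$ enters), so the eigenvalues of $\widehat B_t$ are at most $1-c+\epsilon$ for some $c>0$ depending on $\pin,\pout$. Monotonicity of the scalar power mean (Lemma \ref{lemma:GeneralizedMeanInequality}, used implicitly elsewhere) then forces the power-mean eigenvalue $\nu$ on this subspace to be strictly less than $1+\epsilon$. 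Putting the pieces together: the three smallest eigenvalues of $\L{p}$ are $\epsilon$ (eigenvector $\bchi_1$) and $\nu$ with multiplicity two (eigenvectors $\bchi_2,\bchi_3$), all strictly below the remaining eigenvalue $1+\epsilon$ of multiplicity $3n-3$.

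The main obstacle I anticipate is the step showing $\tfrac13\sum_{t=1}^3(\lsym{t})^p$ is scalar on the two-dimensional subspace $\mathrm{span}\{\bchi_2,\bchi_3\}$ for every nonzero integer $p$: because the three $\lsym{t}$ do not commute, their $p$-th powers do not share eigenvectors, and one cannot simply invoke Lemma \ref{lemma:eigenvalues_and_eigenvectors_of_generalized_mean_V2}. I would handle this by diagonalizing the $2\times2$ blocks $\widehat B_t$ individually — each has eigenvectors rotated by $120^\circ$ relative to the others — and verifying that the rotation structure makes $\sum_t \widehat B_t^{\,p}$ invariant under a $120^\circ$ rotation, hence a scalar matrix; the integrality of $p$ guarantees $\widehat B_t^{\,p}$ is a genuine matrix power (a polynomial in $\widehat B_t$, once shifted to be positive definite for $p<0$), so no branch-cut subtleties arise. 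The positive-definiteness needed to make $(\lsym{t})^p$ well-defined for negative $p$ is precisely what the shift $\epsilon I$ provides.
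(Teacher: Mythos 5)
There is a genuine gap, and it originates in your degree computation. The rows of $\Wt$ do \emph{not} all have the same sum: a node in $\mathcal C_t$ has expected degree $n(\pin+2\pout)$, while a node in $\overline{\mathcal C_t}$ has expected degree $n(2\pin+\pout)$ (the paper's $\tilde{\mathcal D}=\diag(\alpha,\beta,\beta)$ with $\alpha=\pin+2\pout\neq\beta=2\pin+\pout$). Consequently the normalization is not a scalar, $\lsym{t}\neq (1+\epsilon)I-\tfrac1{d_t}\Wt$, and --- crucially --- $\one$ is \emph{not} an eigenvector of the individual $\lsym{t}$: the bottom eigenvector of $\lsym{t}$ is $(\mathcal D^{(t)})^{1/2}\one$, which is constant $\sqrt\alpha$ on $\mathcal C_t$ and $\sqrt\beta$ on $\overline{\mathcal C_t}$, hence differs from layer to layer. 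This breaks your decomposition of $\mathcal S$ into $\mathrm{span}\{\one\}\oplus\mathrm{span}\{\bchi_2,\bchi_3\}$: that splitting is not invariant under any single $\lsym{t}$, so the $2\times 2$ blocks $\widehat B_t$ are only compressions, not restrictions, and $\widehat B_t^{\,p}$ is not the compression of $(\lsym{t})^p$. The downstream steps inherit the problem: the claim $\lambda_1=\epsilon$ for $\bchi_1=\one$ is false (the eigenvalue of $\L{p}$ at $\one$ is not $m_p(\epsilon,\epsilon,\epsilon)$ because $\one$ is not a common eigenvector), and the ordering argument via ``monotonicity of the scalar power mean'' applied to the spectra of the non-commuting, non-well-defined $\widehat B_t$ does not go through as stated.

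The skeleton is salvageable and in fact close to what the paper does. Your observations that $\mathcal S=\mathrm{span}\{\one_{\mathcal C_1},\one_{\mathcal C_2},\one_{\mathcal C_3}\}$ and $\mathcal S^\perp$ are invariant under every $\lsym{t}$, and that each $\lsym{t}$ acts as $(1+\epsilon)I$ on $\mathcal S^\perp$, are correct. The fix is to keep the full $3\times 3$ restrictions $B_t$ of $\lsym{t}$ to $\mathcal S$ (genuine restrictions to an invariant subspace, so powers commute with restriction) and apply your symmetry argument to $\sum_t B_t^{\,p}$ on all of $\mathcal S$: since cluster permutations permute the set $\{\lsym{1},\lsym{2},\lsym{3}\}$, the sum lies in the commutant of the $S_3$-permutation action, i.e.\ has the form $aI+bE$ in the $\{\one_{\mathcal C_i}\}$ basis --- this is exactly the paper's subspace $\mathcal Z_3$ --- and such matrices automatically have $\one$ and $\mathrm{span}\{\bchi_2,\bchi_3\}$ as eigenspaces. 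What then still requires real work is the strict inequality separating these eigenvalues from $1+\epsilon$ on $\mathcal S^\perp$; the paper establishes it by a sign-pattern analysis of the entries of $\tilde{\mathcal L}_\sym^p$ (binomial expansion for $p>0$, Neumann series for $p<0$), whereas a corrected version of your idea would note that the top eigenvectors of the three $B_t$ (namely $\one_{\mathcal C_j}-\one_{\mathcal C_i}$ for the two clusters other than $t$) are pairwise distinct, so the subadditivity bound $\lambda_{\max}\bigl(\sum_t B_t^{\,p}\bigr)\le\sum_t\lambda_{\max}(B_t^{\,p})$ (resp.\ the superadditivity bound for $\lambda_{\min}$ when $p<0$) is strict. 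As written, however, the proposal does not reach a correct proof.
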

Again, we actually prove more than just Theorem \ref{SBMmain2}. In fact, a full description of the eigenvectors of $\L{p}$ and of the ordering of their corresponding eigenvalues is given in Lemma \ref{specmainthm} below.

For the proof of Theorems \ref{thm:spectrum_L} and \ref{SBMmain2}, and the corresponding additional results, we proceed as follows. First we assume that $n=|\mathcal{C}_i|=1$ and prove our claims. Then, we generalize these results to the case $n>1$. For the sake of clarity, as we will need to refer to the case $n=1$ for the proofs of the case $n> 1$, we put a tilde on the matrices in $\R^{3\times 3}$.

\underline{\textit{The case $n=1$:}}

Suppose that $n=1$, then $\tilde{\mathcal{L}}_{\sym}=\lsym{1}$ is given by
\begin{equation*}
\tilde{\mathcal{L}}_{\sym}=\tau I_3 -\tilde{\mathcal{D}}^{-1/2}\tilde{\mathcal{W}}\tilde{\mathcal{D}}^{-1/2} =\tau I_3 -\tilde{\mathcal{M}},
\end{equation*} 
where $\tau=1+\epsilon$, $\tilde{\mathcal{W}}=\mathcal{W}^{(1)}$, 
$\tilde{\mathcal{D}}=\diag(\tilde{\mathcal{W}}\ones)$,
\begin{equation}\label{defM}
\tilde{\mathcal{D}}= \begin{pmatrix}\alpha & 0 & 0 \\0 & \beta &0 \\ 0 & 0 & \beta\end{pmatrix}, \qquad \tilde{\mathcal{M}}=\begin{pmatrix}a & b & b \\b & c &c \\ b & c & c\end{pmatrix},
\end{equation}
and $\alpha,\beta, a,b,c> 0$ are given by
\begin{align*}
& \alpha = \pin+2\pout, \qquad \beta = 2\pin+\pout,\\
& a = \frac{\pin}{\alpha}, \qquad b=\frac{\pout}{\sqrt{\alpha\beta}}, \qquad c = \frac{\pin}{\beta}.
\end{align*}

Moreover, note that for any $(\lambda,\v)\in\R\times \R^3$ we have
\begin{equation}\label{connectMW}
\tilde{\mathcal{M}}\v = \lambda \v \qquad \iff \qquad \tilde{\mathcal{L}}_{\sym}\v = (\tau-\lambda)\v.
\end{equation}
This implies that we can study the spectrum of $\tilde{\mathcal{M}}$ in order to obtain the spectrum of $\tilde{\mathcal{L}}_{\sym}$. We have the following lemma:
\begin{lemma}\label{specM}
Suppose that $\pout>0$ and let $\Delta>0$ be defined as $ \Delta = \sqrt{(a-2c)^2 +8 b^2}$.
Then the eigenvalues of $\tilde{\mathcal{M}}$ are 
\begin{align*}
\tilde\lambda_1=0,\qquad \tilde\lambda_2 = \frac{a+2 c-\Delta}{2}, \qquad \tilde \lambda_3 = 1, 
\end{align*}
and it holds $\tilde\lambda_1<\tilde\lambda_2<\tilde\lambda_3$. Furthermore, the corresponding eigenvectors are given by
\begin{align*}
&\u_1= (0,-1,1)^\top,\qquad \u_2 = \Big(\frac{a-2c-\Delta}{2b},1,1\Big)^\top, \\&\u_3 = \big(\sqrt\alpha,\sqrt\beta,\sqrt\beta\big)^\top, 
\end{align*}
and it holds $\frac{a-2c-\Delta}{2b}<0$.
\end{lemma}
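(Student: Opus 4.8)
The plan is to exploit the special structure of the $3\times 3$ matrix $\tilde{\mathcal{M}}$: its second and third rows (and, by symmetry, columns) are equal, so after an orthogonal change of basis $\tilde{\mathcal{M}}$ decomposes into a $1\times 1$ block that produces the eigenvalue $0$ and a symmetric $2\times 2$ block whose two eigenvalues are exactly the numbers $\tfrac{a+2c\pm\Delta}{2}$ appearing in the statement. The remaining work is to identify the larger of these two roots with $1$, to order the three eigenvalues, and to read off the eigenvectors.

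First I would observe that $\tilde{\mathcal{M}}\,\u_1=\mathbf 0$ for $\u_1=(0,-1,1)^\top$, which is immediate because the last two rows of $\tilde{\mathcal{M}}$ coincide; hence $\tilde\lambda_1=0$. The orthogonal complement $\u_1^\perp$ is spanned by the orthonormal pair $\e_1=(1,0,0)^\top$ and $\w=\tfrac{1}{\sqrt 2}(0,1,1)^\top$, and it is $\tilde{\mathcal{M}}$-invariant; in this basis the restriction of $\tilde{\mathcal{M}}$ is $B=\bigl(\begin{smallmatrix}a & \sqrt 2\,b\\ \sqrt 2\,b & 2c\end{smallmatrix}\bigr)$, whose eigenvalues are $\tfrac{a+2c\pm\Delta}{2}$ with $\Delta=\sqrt{(a-2c)^2+8b^2}$. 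Since $\tilde{\mathcal{M}}$ is $3\times 3$, $\u_1$ together with the pull-backs of the two eigenvectors of $B$ are all of its eigenvectors.

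To pin down the largest eigenvalue I would use that $\tilde{\mathcal{M}}=\tilde{\mathcal{D}}^{-1/2}\tilde{\mathcal{W}}\tilde{\mathcal{D}}^{-1/2}$ is a normalized adjacency matrix: from $\tilde{\mathcal{W}}\ones=\tilde{\mathcal{D}}\ones$ one gets $\tilde{\mathcal{M}}\u_3=\u_3$ for $\u_3=\tilde{\mathcal{D}}^{1/2}\ones=(\sqrt\alpha,\sqrt\beta,\sqrt\beta)^\top$, and $\u_3\perp\u_1$, so $1$ is an eigenvalue of $B$. Since the characteristic polynomial of $B$ is $x^2-(a+2c)x+\det B$, the relation $1-(a+2c)+\det B=0$ forces the second root of $B$ to equal $a+2c-1$; as $a=\pin/(\pin+2\pout)<1$ and $c=\pin/(2\pin+\pout)<\tfrac12$ we have $a+2c<2$, hence the larger root is $1=\tfrac{a+2c+\Delta}{2}=\tilde\lambda_3$ and the smaller is $\tfrac{a+2c-\Delta}{2}=a+2c-1=\tilde\lambda_2$. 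A short computation gives $a+2c-1=\tfrac{2(\pin^2-\pout^2)}{(\pin+2\pout)(2\pin+\pout)}$, which lies in $(0,1)$ precisely because $\pin>\pout$; therefore $0=\tilde\lambda_1<\tilde\lambda_2<1=\tilde\lambda_3$. Finally, solving $(B-\tilde\lambda_2 I)\v=\mathbf 0$ and pulling back to $\R^3$ yields the eigenvector $\u_2=\bigl(\tfrac{a-2c-\Delta}{2b},1,1\bigr)^\top$, and $\tfrac{a-2c-\Delta}{2b}<0$ because $b>0$ and $a-2c<\sqrt{(a-2c)^2+8b^2}=\Delta$.

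I expect the only genuinely non-mechanical step to be the identification of the larger root of the $2\times2$ block $B$ with $1$: the $2\times2$ eigenvalue formula by itself does not reveal this, and it is precisely the normalized-adjacency structure $\tilde{\mathcal{M}}=\tilde{\mathcal{D}}^{-1/2}\tilde{\mathcal{W}}\tilde{\mathcal{D}}^{-1/2}$ (rather than $\tilde{\mathcal{M}}$ being an arbitrary matrix of this shape) that makes it work; skipping this structural remark would force one into an unilluminating verification of the identity $2b^2=(1-a)(1-2c)$. The zero eigenvalue, the $\u_1^\perp$ reduction, the ordering and the sign of the leading entry of $\u_2$ are then all short routine calculations.
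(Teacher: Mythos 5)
Your proof is correct, and it reaches the conclusion by a route that differs from the paper's in its two non-trivial steps. The paper does not block-diagonalize: it plugs the ansatz $(s,1,1)^\top$ into $\tilde{\mathcal{M}}\v=\lambda\v$, observes that $s$ must solve $bs^2+(2c-a)s-2b=0$, and so obtains the eigenpairs $\bigl(\tfrac{a+2c\pm\Delta}{2},(s_\pm,1,1)^\top\bigr)$ directly; your reduction to the $2\times2$ block $B$ on $\u_1^\perp$ yields the same two eigenvalues. The real divergence is in the ordering and signs. The paper proves $\lambda_->0$ via $b^2<ac$ (hence $(a+2c)^2>\Delta^2$), and then invokes Perron--Frobenius: since $\tilde{\mathcal{M}}$ has strictly positive entries and $\u_3=\tilde{\mathcal{D}}^{1/2}\ones$ is a positive eigenvector, $\u_3$ must carry the spectral radius, forcing $\lambda_+=1=\tilde\lambda_3$ and, as a by-product, forcing $\u_2$ to have a negative entry, i.e.\ $s_-<0$. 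You instead use the root-sum relation of the characteristic polynomial of $B$ together with $a+2c<2$ to conclude that $1$ is the larger root and that $\tilde\lambda_2=a+2c-1=\tfrac{2(\pin^2-\pout^2)}{(\pin+2\pout)(2\pin+\pout)}\in(0,1)$, and you get $s_-<0$ from the elementary inequality $a-2c<\Delta$. Your argument is thus entirely elementary (no Perron--Frobenius) and gives a cleaner closed form for $\tilde\lambda_2$; the paper's Perron--Frobenius step is slicker in that it identifies the top eigenpair without any inequality on $a+2c$. One small point of hygiene: like the paper's own proof, you use $\pin>\pout$ (for $\tilde\lambda_2>0$), which is the ambient assumption of Section B rather than the hypothesis $\pout>0$ literally stated in the lemma; this is not a gap relative to the paper, just worth flagging.
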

\begin{proof}
The equality $\tilde{\mathcal{M}}\u_1=0$ follows from a direct computation. Furthermore, note that $\u_3=\tilde{\mathcal{D}}^{1/2}\ones$ and so
\begin{align*}
\tilde{\mathcal{M}}\u_3&=\tilde{\mathcal{D}}^{-1/2}\tilde{\mathcal{W}}\tilde{\mathcal{D}}^{-1/2}\mathcal D^{1/2}\ones 
= \tilde{\mathcal{D}}^{-1/2}\tilde{\mathcal{W}}\ones =\u_3
\end{align*}
implying $\tilde{\mathcal{M}}\u_3 =\u_3$. Now, let $s_{\pm}=\frac{a-2c\pm\Delta}{2b}$. Then $s_+$ and $s_-$ are the solutions of the quadratic equation $bs^2+(2c-a)s-2b=0$ which can be rearranged as $as+2b=(bs+2c)s.$ The latter equation is equivalent to
\begin{equation*}
\begin{cases}as+2b = \lambda s\\ bs+2c = \lambda\end{cases}\quad \iff\quad \tilde{\mathcal{M}}\begin{pmatrix}s\\ 1\\1\end{pmatrix}= \lambda\begin{pmatrix}s\\ 1\\1\end{pmatrix}.
\end{equation*}
Hence, $\u_{\pm}=(s_{\pm},1,1)$ are both eigenvectors of $\tilde{\mathcal{M}}$ corresponding to the eigenvalues 
\begin{equation*}
\lambda_{\pm} = b\, s_{\pm}+2c= \frac{a-2c\pm\Delta}{2}+2c=\frac{a+2 c\pm\Delta}{2}.
\end{equation*}
Note in particular that we have $\u_2=\u_-$ and $\tilde\lambda_2=\lambda_-$. This concludes the proof that $(\lambda_i,\u_i)$ are eigenpairs of $\tilde{\mathcal{M}}$ for $i=1,2,3$. We now show that $\tilde\lambda_1<\tilde\lambda_2<\tilde\lambda_3$ and $(a-2c-\Delta)/2b<0$.

As $\Delta>0$, we have $\lambda_-<\lambda_+$. We prove $\lambda_->0$. As $\pin>\pout$ by assumption, the definition of $a,b,c>0$ implies that
\begin{align*}
b^2 &= \frac{\pout^2}{(2 \pin+\pout)(\pin+2 \pout)}\\& < \frac{\pin^2}{(2 \pin+\pout)(\pin+2 \pout)} =ac.
\end{align*}
And from $ac>b^2$ it follows that
$a^2+4ac+4c^2>a^2-4ac+4c^2+8b^2 $ which implies that $(a+2c)^2>(a-2c)^2+8b^2=\Delta^2.$
Hence, $a+2 c-\Delta>0$ and thus $\lambda_->0$. Thus we have $0<\lambda_-<\lambda_+$. Now, as $\tilde{\mathcal{M}}$ has strictly positive entries, the Perron-Frobenius theorem (see for instance Theorem 1.1 in \cite{Francesco}) implies that 
$\tilde{\mathcal{M}}$ has a unique nonnegative eigenvector $\u$. Furthermore, 
$\u$ has positive entries and its corresponding eigenvalue is the spectral radius of $\tilde{\mathcal{M}}$. 
As $\u_3=\tilde{\mathcal{D}}^{1/2}\ones$ has positive entries and is an eigenvector of $\tilde{\mathcal{M}}$, we have $\u=\u_3$. It follows that 
$\rho(\tilde{\mathcal{M}})=\lambda_+=\tilde\lambda_3$. 
Furthermore, $\u_2$ must have a strictly negative entry and thus it holds $s_-<0$.
\end{proof}
Combining the results of Lemma \ref{specM} and \pedro{E}quation \eqref{connectMW} we directly obtain the following corollary which fully describes the eigenvectors of $\tilde{\mathcal{L}}_{\sym}$ as well as the ordering of the corresponding eigenvalues:
\begin{corollary}\label{LsymW3}
There exists $\tilde \lambda \in (0,1)$ and $s_-<0<s_+<1$ such that
$\big(\tau-1,(s_+,1,1)^\top\big),  \big(\tau-\tilde\lambda,(s_-,1,1)^\top\big), \big(\tau,(0,-1,1)^\top\big)
$ are the eigenpairs of $\tilde{\mathcal{L}}_{\sym}$.
\end{corollary}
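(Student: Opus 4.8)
The plan is to read the eigenpairs of $\tilde{\mathcal{L}}_{\sym}$ directly off the spectral decomposition of $\tilde{\mathcal{M}}$ obtained in Lemma~\ref{specM}, via the correspondence \eqref{connectMW}. Recall that \eqref{connectMW} says $(\lambda,\v)$ is an eigenpair of $\tilde{\mathcal{M}}$ if and only if $(\tau-\lambda,\v)$ is an eigenpair of $\tilde{\mathcal{L}}_{\sym}$; since $\tilde{\mathcal{L}}_{\sym}\in\R^{3\times3}$ and the three eigenvalues of $\tilde{\mathcal{M}}$ are pairwise distinct, this accounts for the entire spectrum.

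First I would invoke Lemma~\ref{specM}: the eigenvalues of $\tilde{\mathcal{M}}$ are $\tilde\lambda_1=0<\tilde\lambda_2<\tilde\lambda_3=1$, with eigenvectors $\u_1=(0,-1,1)^\top$, $\u_2=(s_-,1,1)^\top$ where $s_-=\tfrac{a-2c-\Delta}{2b}<0$, and $\u_3=(\sqrt\alpha,\sqrt\beta,\sqrt\beta)^\top$. Applying \eqref{connectMW} converts these into the eigenpairs $\big(\tau,(0,-1,1)^\top\big)$, $\big(\tau-\tilde\lambda,(s_-,1,1)^\top\big)$ of $\tilde{\mathcal{L}}_{\sym}$ with $\tilde\lambda:=\tilde\lambda_2\in(0,1)$, and $\big(\tau-1,\u_3\big)$; since $0<\tilde\lambda<1$, the corresponding eigenvalues are ordered as $\tau-1<\tau-\tilde\lambda<\tau$, matching the discussion preceding the corollary.

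It only remains to rewrite the third eigenvector in the normalisation $(s_+,1,1)^\top$. From the proof of Lemma~\ref{specM}, $\u_+=(s_+,1,1)^\top$ with $s_+=\tfrac{a-2c+\Delta}{2b}$ is an eigenvector of $\tilde{\mathcal{M}}$ for the eigenvalue $\lambda_+=\tilde\lambda_3=1$, which is simple by Perron--Frobenius; hence $\u_+$ is a positive multiple of the Perron vector $\u_3$, and dividing $\u_3$ by $\sqrt\beta$ yields $(\sqrt{\alpha/\beta},1,1)^\top=(s_+,1,1)^\top$. Finally $s_+>0$ is clear, while $s_+<1$ is equivalent to $\alpha<\beta$, i.e.\ $\pin+2\pout<2\pin+\pout$, which is exactly the hypothesis $\pin>\pout$. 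Collecting the three eigenpairs completes the proof.

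I do not anticipate a real obstacle: the corollary is an almost immediate consequence of Lemma~\ref{specM} and \eqref{connectMW}. The only step needing a little care is the identification $\u_3\parallel(s_+,1,1)^\top$ together with the strict bound $s_+<1$, and this---like $\tilde\lambda_2>0$ and $s_-<0$ in Lemma~\ref{specM}---is precisely where the assortativity assumption $\pin>\pout$ enters.
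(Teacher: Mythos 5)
Your proposal is correct and follows essentially the same route as the paper: read the eigenpairs off Lemma~\ref{specM} via the correspondence \eqref{connectMW}, then pin down $s_+<1$ by identifying $(s_+,1,1)^\top$ with the Perron eigenvector $(\sqrt\alpha,\sqrt\beta,\sqrt\beta)^\top$, giving $s_+=\sqrt{\alpha/\beta}<1$ precisely because $\pout<\pin$. No gaps.
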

\begin{proof}
The only thing which is not directly implied by Lemma \ref{specM} and \pedro{E}quation \eqref{connectMW} is that $s_+<1$. But this follows again from Lemma \ref{specM}. 
Indeed, as $(s_+,1,1)$ and $(\sqrt \alpha, \sqrt \beta, \sqrt \beta)$ must span the same line, we have 
$$s_+=\sqrt{\frac{\alpha}{\beta}}=\sqrt{\frac{\pin+2\pout}{2\pin+\pout}}.$$
As $\pout<\pin$, we get $0<s_+<1$.
\end{proof}

Now, we study the spectral properties of $\Ln{p}=\L{p}\in\R^{3\times 3}$. To this end, for $t=1,2,3$ let $\tilde{\mathcal{W}}^{(t)}={\mathcal{W}^{(t)}},\lsymn{t}=\lsym{t}\in\R^{3\times 3}$. Furthermore, consider the permutation matrices $\tilde P_1,\tilde P_2,\tilde P_3\in\R^{3\times 3}$ defined as
$$
\tilde P_1=I_3,\quad \tilde P_2 = \begin{pmatrix} 0 & 0 & 1 \\ 0 & 1 & 0 \\ 1 & 0 & 0\end{pmatrix},\quad \tilde P_3 = \begin{pmatrix} 0 & 1& 0 \\ 1 & 0 & 0 \\ 0 & 0 &1\end{pmatrix}.$$
Then, we have $\tilde{\mathcal{W}}^{(t)}=\tilde P_t\tilde{\mathcal{W}}\tilde P_t$ for $t=1,2,3$. The following lemma relates $\lsymn{t}$ and $\tilde{\mathcal{L}}_{\sym}$.
\begin{lemma}\label{permutelaplacian}
For $t=1,2,3$, we have $\tilde P_t=\tilde P_t^{-1}=\tilde P_t^\top$ and $\lsymn{t}=\tilde P_t\tilde{\mathcal{L}}_{\sym}\tilde P_t$.
\end{lemma}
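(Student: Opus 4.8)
The plan is to derive both assertions from the identity $\tilde{\mathcal{W}}^{(t)}=\tilde P_t\tilde{\mathcal{W}}\tilde P_t$ just recorded, using only elementary properties of the three matrices $\tilde P_1,\tilde P_2,\tilde P_3$. First I would note that each $\tilde P_t$ is a permutation matrix, hence orthogonal, so $\tilde P_t^{-1}=\tilde P_t^\top$; inspecting the explicit $3\times 3$ matrices shows each is symmetric and each is either the identity or a transposition, hence $\tilde P_t^2=I_3$. Combining these gives $\tilde P_t=\tilde P_t^{-1}=\tilde P_t^\top$, which is the first assertion.

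For the second assertion I would first identify the expected degree matrix of the permuted layer. Since every permutation matrix fixes the all-ones vector, $\tilde P_t\ones=\ones$, and therefore $\tilde{\mathcal{W}}^{(t)}\ones=\tilde P_t\tilde{\mathcal{W}}\tilde P_t\ones=\tilde P_t(\tilde{\mathcal{W}}\ones)$. Applying $\diag(\cdot)$ and using the identity $\diag(\tilde P_t\v)=\tilde P_t\diag(\v)\tilde P_t$, valid for any $\v$ because $\tilde P_t$ is an orthogonal involution, one obtains $\tilde{\mathcal{D}}^{(t)}=\tilde P_t\tilde{\mathcal{D}}\tilde P_t$ with $\tilde{\mathcal{D}}=\diag(\tilde{\mathcal{W}}\ones)$. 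Because $\pin,\pout>0$ the diagonal entries of $\tilde{\mathcal{D}}$ are strictly positive, so $\tilde{\mathcal{D}}^{-1/2}$ is well defined, and since conjugation by $\tilde P_t$ merely permutes the diagonal it commutes with the entrywise map $x\mapsto x^{-1/2}$; hence $(\tilde{\mathcal{D}}^{(t)})^{-1/2}=\tilde P_t\tilde{\mathcal{D}}^{-1/2}\tilde P_t$ (alternatively, square both sides and invoke uniqueness of the positive definite square root).

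Finally I would substitute these expressions into $\lsymn{t}=(1+\epsilon)I_3-(\tilde{\mathcal{D}}^{(t)})^{-1/2}\tilde{\mathcal{W}}^{(t)}(\tilde{\mathcal{D}}^{(t)})^{-1/2}$ and collapse every adjacent pair $\tilde P_t\tilde P_t=I_3$, using also $\tilde P_t I_3\tilde P_t=I_3$; what remains is $\tilde P_t\big((1+\epsilon)I_3-\tilde{\mathcal{D}}^{-1/2}\tilde{\mathcal{W}}\tilde{\mathcal{D}}^{-1/2}\big)\tilde P_t=\tilde P_t\tilde{\mathcal{L}}_{\sym}\tilde P_t$, which is precisely the claimed identity. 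No step here is genuinely hard; the only point deserving a moment's care is the commutation of the inverse square root with the permutation $\tilde P_t$, which is exactly where positivity of the degrees and the orthogonality and involutivity of $\tilde P_t$ are used.
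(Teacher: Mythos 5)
Your proposal is correct and follows essentially the same route as the paper's proof: both establish $\tilde P_t=\tilde P_t^{-1}=\tilde P_t^\top$ by direct inspection, use $\tilde P_t\ones=\ones$ to get $(\tilde{\mathcal D}^{(t)})^{-1/2}=\tilde P_t\tilde{\mathcal D}^{-1/2}\tilde P_t$ (the paper phrases the commutation of the entrywise inverse square root with the permutation via componentwise powers of the degree vector, but it is the same observation), and then substitute and cancel adjacent $\tilde P_t\tilde P_t=I_3$ factors. No gaps.
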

\begin{proof}
The identity $\tilde P_t=\tilde P_t^{-1}=\tilde P_t^\top$ follows by a direct computation. 
Now, as $\tilde P_t\ones=\ones$, we have $\tilde P_t \tilde{\mathcal{W}}\tilde P_t\ones = \tilde P_t \tilde{\mathcal{W}}\ones$. Assuming the exponents on the vector in the following expressions are taken component wise, we have $\diag(\tilde{\mathcal{W}}\ones)^{-1/2}=\diag\big((\tilde{\mathcal{W}}\ones)^{-1/2}\big)$ and thus 
\begin{align*}
&\diag(\tilde P_t \tilde{\mathcal{W}}\tilde P_t\ones)^{-1/2}=\diag\big((\tilde P_t \tilde{\mathcal{W}}\tilde P_t\ones)^{-1/2}\big)\\&\quad=\diag\big(\tilde P_t(\tilde{\mathcal{W}}\ones)^{-1/2}\big)=\tilde P_t\diag\big((\tilde{\mathcal{W}}\ones)^{-1/2}\big)\tilde P_t\\&\quad=\tilde P_t\diag(\tilde{\mathcal{W}}\ones)^{-1/2}\tilde P_t=\tilde P_t\tilde{\mathcal{D}}^{-1/2}\tilde P_t.
\end{align*}
It follows that
\begin{align*}
\lsymn{t} &= \tau \tilde P_t\tilde P_t -\tilde P_t \tilde{\mathcal{D}}^{-1/2}\tilde P_t \tilde P_t \tilde{\mathcal W} \tilde P_t \tilde P_t \tilde{\mathcal{D}}^{-1/2}\tilde P_t \\
  &=\tilde P_t \tilde{\mathcal{L}}_{\sym}\tilde P_t,
\end{align*}
which concludes our proof.
\end{proof}
Combining Corollary \ref{LsymW3} with Lemma \ref{permutelaplacian}, we directly obtain the following
\begin{corollary}
There exists $\tilde \lambda \in (0,1)$ and $s_-<0<s_+$ such that
$\big(\tau-1,\tilde P_t(s_+,1,1)^\top\big), \big(\tau-\lambda,\tilde P_t(s_-,1,1)^\top\big),\big(\tau,\tilde P_t(0,-1,1)^\top\big)$
are the eigenpairs of $\lsymn{t}$ for $t=1,2,3$.
\end{corollary}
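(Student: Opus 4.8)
The statement is obtained by transporting the spectral information about $\tilde{\mathcal L}_{\sym}$ from Corollary~\ref{LsymW3} through the orthogonal similarity of Lemma~\ref{permutelaplacian}, so the plan is short. Recall from that lemma that $\lsymn{t}=\tilde P_t\,\tilde{\mathcal L}_{\sym}\,\tilde P_t$ with $\tilde P_t=\tilde P_t^{-1}=\tilde P_t^\top$. The one fact I would use is that a similarity transformation preserves eigenvalues while mapping eigenvectors accordingly: if $\tilde{\mathcal L}_{\sym}\v=\mu\v$, then, using $\tilde P_t\tilde P_t=I_3$,
\[
\lsymn{t}(\tilde P_t\v)=\tilde P_t\,\tilde{\mathcal L}_{\sym}\,\tilde P_t\tilde P_t\,\v=\tilde P_t\,\tilde{\mathcal L}_{\sym}\,\v=\mu\,(\tilde P_t\v),
\]
so that $(\mu,\tilde P_t\v)$ is an eigenpair of $\lsymn{t}$ with the same eigenvalue.

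I would then feed the three eigenpairs of $\tilde{\mathcal L}_{\sym}$ provided by Corollary~\ref{LsymW3}, namely $\big(\tau-1,(s_+,1,1)^\top\big)$, $\big(\tau-\tilde\lambda,(s_-,1,1)^\top\big)$ and $\big(\tau,(0,-1,1)^\top\big)$ with $\tilde\lambda\in(0,1)$ and $s_-<0<s_+$, into the displayed identity. Since $\tilde{\mathcal L}_{\sym}$ — and hence the scalars $\tilde\lambda$ and $s_\pm$ — does not depend on the layer index (only $\tilde P_t$ does), this yields exactly the three claimed eigenpairs of $\lsymn{t}$, uniformly for $t=1,2,3$.

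To conclude that these exhaust the spectrum of the $3\times 3$ matrix $\lsymn{t}$, I would observe that the eigenvalues $\tau-1<\tau-\tilde\lambda<\tau$ are pairwise distinct because $0<\tilde\lambda<1$; hence each is simple and the three exhibited eigenvectors span the corresponding eigenspaces. I do not anticipate a genuine obstacle here — the argument is a routine conjugation computation, the only mild points of care being the cancellation $\tilde P_t^2=I_3$ and the simplicity of the eigenvalues, which is what makes the list complete.
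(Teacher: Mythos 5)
Your proposal is correct and is exactly the argument the paper intends: it states that the corollary follows by combining Corollary~\ref{LsymW3} with Lemma~\ref{permutelaplacian}, and your conjugation computation $\lsymn{t}(\tilde P_t\v)=\tilde P_t\tilde{\mathcal L}_{\sym}\v=\mu(\tilde P_t\v)$ together with the distinctness of $\tau-1<\tau-\tilde\lambda<\tau$ is precisely the omitted routine detail.
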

A similar argument as in the proof of Lemma \ref{lemma:eigenvalues_and_eigenvectors_of_generalized_mean_V2} implies that the eigenvectors of $\Ln{p}$ coincide with those of the matrix $\Lnp{p}\in\R^{3\times 3}$ defined as
\begin{equation*}
\Lnp{p}=(\lsymn{1})^p+(\lsymn{2})^p+(\lsymn{3})^p=3\Ln{p}^p.
\end{equation*}
We study the spectral properties of $\Lnp{p}$. To this end, we consider the following subspaces of matrices:
\begin{align*}
&\mathcal{U}_3 = \Big\{\begin{pmatrix} s_1 & s_2 & s_2 \\ s_3 & s_5 & s_4 \\ s_3 & s_4 & s_5\end{pmatrix}\ \Big|\ s_1,\ldots,s_5\in\R\Big\}, \\ 
&\mathcal{Z}_3 = \Big\{\begin{pmatrix} t_1 & t_2 & t_2 \\ t_2 & t_1 & t_2 \\ t_2 & t_2 & t_1\end{pmatrix}\ \Big|\ t_1,t_2\in\R\Big\}.
\end{align*}
We prove that for every $p$, it holds $(\lsymn{1})^p\in\mathcal{U}_3$ and $\Lnp{p}\in \mathcal{Z}_3$. We need the following lemma:
\begin{lemma}\label{lemU}
The following holds:
\begin{enumerate}[topsep=-3pt,leftmargin=*]
\item For all $\tilde A,\tilde B\in \mathcal U_3$ we have $\tilde A\tilde B\in \mathcal U_3$.\label{multiU3}
\item If $\tilde A\in\mathcal U_3$ and $\det(\tilde A)\neq 0$, then $\tilde A^{-1}\in \mathcal U_3$.\label{invU3}
\item $\mathcal{Z}_3=\tilde P_1\mathcal{U}_3\tilde P_1+\tilde P_2\mathcal{U}_3\tilde P_2+\tilde P_3\mathcal{U}_3\tilde P_3$.\label{U3toZ3}
\end{enumerate}
\end{lemma}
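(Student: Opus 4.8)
\textbf{Proof proposal for Lemma \ref{lemU}.}

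The plan is to verify the three assertions by direct structural computation, exploiting the algebraic shape of the matrices in $\mathcal U_3$ and $\mathcal Z_3$. For item \ref{multiU3}, I would take two generic matrices $\tilde A,\tilde B\in\mathcal U_3$ with parameters $s_1,\dots,s_5$ and $s_1',\dots,s_5'$ and simply multiply them out. The key observation is that $\mathcal U_3$ consists exactly of the matrices that are invariant under the simultaneous swap of the \emph{second and third} rows and the second and third columns; that is, $\tilde A\in\mathcal U_3$ if and only if $Q\tilde AQ=\tilde A$, where $Q$ is the permutation matrix swapping coordinates $2$ and $3$. Since $Q(\tilde A\tilde B)Q=(Q\tilde AQ)(Q\tilde BQ)=\tilde A\tilde B$, the product again lies in $\mathcal U_3$; one only has to check additionally that the $(2,3)$ and $(3,2)$ entries of a symmetric-under-$Q$ matrix are free, which is immediate from the defining pattern. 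This symmetry-based argument makes item \ref{multiU3} essentially immediate and avoids grinding through the nine entrywise products.

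For item \ref{invU3}, the same commuting-with-$Q$ characterization does the work: if $\tilde A\in\mathcal U_3$ is invertible, then from $Q\tilde AQ=\tilde A$ we get $Q\tilde A^{-1}Q=(Q\tilde AQ)^{-1}=\tilde A^{-1}$, so $\tilde A^{-1}$ is again $Q$-invariant, hence in $\mathcal U_3$. This is a two-line argument once the characterization of item \ref{multiU3} is in place, so I would state the characterization as a small intermediate claim and reuse it here.

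For item \ref{U3toZ3}, I would argue by mutual inclusion of the two subspaces. The inclusion $\tilde P_1\mathcal U_3\tilde P_1+\tilde P_2\mathcal U_3\tilde P_2+\tilde P_3\mathcal U_3\tilde P_3\subseteq\mathcal Z_3$ is checked by noting that each conjugation $\tilde A\mapsto \tilde P_t\tilde A\tilde P_t$ sends a $Q$-invariant matrix to a matrix invariant under the transposition fixing coordinate $t$; taking the sum over $t=1,2,3$ yields a matrix invariant under \emph{all} transpositions of $\{1,2,3\}$, i.e. a symmetric matrix with constant diagonal and constant off-diagonal, which is precisely the form defining $\mathcal Z_3$. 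For the reverse inclusion $\mathcal Z_3\subseteq\sum_t\tilde P_t\mathcal U_3\tilde P_t$, since $\mathcal Z_3$ is $2$-dimensional (spanned by $I_3$ and $\ones\ones^\top$), it suffices to exhibit each of these two spanning matrices as such a sum; e.g. $I_3=\tfrac13\sum_t\tilde P_t I_3\tilde P_t$ and $\ones\ones^\top=\tfrac13\sum_t\tilde P_t(\ones\ones^\top)\tilde P_t$, both with $I_3,\ones\ones^\top\in\mathcal U_3$. The main (mild) obstacle is keeping the bookkeeping of which transposition each $\tilde P_t$ induces straight; beyond that the computations are routine and short.
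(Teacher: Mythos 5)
Your characterization of $\mathcal U_3$ as the commutant of the transposition $Q=(2\,3)$ (i.e.\ $\tilde A\in\mathcal U_3$ iff $Q\tilde AQ=\tilde A$) is correct, and it yields items \ref{multiU3} and \ref{invU3} immediately; this is a cleaner route than the paper, which multiplies out two generic elements for item \ref{multiU3} and writes down the adjugate explicitly for item \ref{invU3}. The reverse inclusion in item \ref{U3toZ3} is also fine.

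The gap is in the forward inclusion of item \ref{U3toZ3}. The inference ``the $t$-th summand is invariant under the transposition $\tilde P_tQ\tilde P_t$, hence the sum is invariant under all transpositions'' is not valid: each summand is invariant under a \emph{different} transposition, and invariance of one summand under a given $\sigma$ says nothing about the other two. Indeed, if the three elements of $\mathcal U_3$ are allowed to differ, the inclusion you are trying to prove is false: $\diag(1,0,0)=\tilde P_1\,\diag(1,0,0)\,\tilde P_1+\tilde P_2\,0\,\tilde P_2+\tilde P_3\,0\,\tilde P_3$ lies in the Minkowski sum of the three conjugated subspaces but not in $\mathcal Z_3$. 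What is true, what the paper actually proves, and all that is used downstream (e.g.\ in Lemma \ref{LpinZ3}), is that the \emph{symmetrized} sum $\sum_{t=1}^3\tilde P_t\tilde A\tilde P_t$ of a \emph{single} $\tilde A\in\mathcal U_3$ lies in $\mathcal Z_3$. To obtain this from your symmetry picture you need the additional observation that conjugation by any $\sigma\in S_3$ permutes the three summands among themselves: since $\tilde A$ is $\langle Q\rangle$-invariant, the summand $\tilde P_t\tilde A\tilde P_t$ depends only on the coset $\tilde P_t\langle Q\rangle$, and $\{\sigma\tilde P_1,\sigma\tilde P_2,\sigma\tilde P_3\}$ is again a transversal of $\langle Q\rangle$ in $S_3$; hence the full sum is $S_3$-invariant and therefore of the form $aI_3+b\ones\ones^\top$. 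With that fix (or with the paper's one-line explicit computation of $\sum_t\tilde P_t\tilde A\tilde P_t$), the argument closes. A small bookkeeping slip in passing: conjugation by $\tilde P_2=(1\,3)$ produces invariance under the transposition fixing coordinate $3$, not coordinate $2$; this does not affect the structure of the argument.
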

\begin{proof}
Let $\tilde A\in \mathcal{U}_3,\tilde C\in\mathcal{Z}_3$ be respectively defined as
\begin{equation*}
\tilde A=\begin{pmatrix} s_1 & s_2 & s_2 \\ s_3 & s_5 & s_4 \\ s_3 & s_4 & s_5\end{pmatrix}, \quad \tilde C=\begin{pmatrix} t_1 & t_2 & t_2 \\ t_2 & t_1 & t_2 \\ t_2 & t_2 & t_1\end{pmatrix}
\end{equation*}
\begin{enumerate}[topsep=-3pt,leftmargin=*]
\item Follows from a direct computation.
\item If $\det(\tilde A)\neq 0$, then $\tilde A$ is invertible and
\begin{align*}&\det(\tilde A)\tilde A^{-1}=\\&\begin{pmatrix}
 s_5^2-s_4^2 & s_2 (s_4-s_5) &s_2 (s_4-s_5) \\
 s_3(s_4-s_5) & s_1 s_5-s_2 s_3 & s_2 s_3-s_1 s_4 \\
  s_3(s_4-s_5) & s_2 s_3-s_1 s_4 & s_1 s_5-s_2 s_3
 \end{pmatrix}.
  \end{align*}
 It follows that $\tilde A^{-1}\in\mathcal{U}_3.$
 \item We have
 \begin{align}\label{symact}
& \sum_{i=1}^3\tilde P_i\tilde A\tilde P_i=\\ &\begin{pmatrix}
 s_1+2 s_5 & s_2+s_3+s_4 & s_2+s_3+s_4 \\
  s_2+s_3+s_4 & s_1+2 s_5 & s_2+s_3+s_4 \\
  s_2+s_3+s_4 & s_2+s_3+s_4 & s_1+2 s_5 \end{pmatrix}\notag
 \end{align}
and conversely, there clearly exists $s_1,\ldots,s_4$ such that $s_1+2s_5=t_1$ and $s_2+s_3+s_4=t_2$, so we have $ \sum_{i=1}^3\tilde P_i\tilde A\tilde P_i=\tilde C$ implying the reverse inclusion.
\end{enumerate}
\end{proof}
Now, we show that $\Lnp{p}\in\mathcal{Z}_3$ for all nonzero integer $p$.
\begin{lemma}\label{LpinZ3}
For every integer $p\neq 0$ we have $\Lnp{p}\in \mathcal{Z}_3$.
\end{lemma}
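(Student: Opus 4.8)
The plan is to handle the cases $p > 0$ and $p < 0$ separately, using Lemma~\ref{lemU} as the main engine. Recall from Lemma~\ref{permutelaplacian} that $\lsymn{t} = \tilde P_t \tilde{\mathcal{L}}_\sym \tilde P_t$ for $t = 1,2,3$. A first step is to observe that $\tilde{\mathcal{L}}_\sym = \tau I_3 - \tilde{\mathcal{M}}$ lies in $\mathcal{U}_3$: indeed $I_3 \in \mathcal{U}_3$ trivially, and $\tilde{\mathcal{M}}$ as displayed in \eqref{defM} has exactly the block pattern defining $\mathcal{U}_3$ (with $s_2 = s_3 = b$ and $s_4 = s_5 = c$), so $\mathcal{U}_3$ being a linear subspace gives $\tilde{\mathcal{L}}_\sym \in \mathcal{U}_3$. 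Consequently, by part \ref{multiU3} of Lemma~\ref{lemU} and an easy induction, $(\tilde{\mathcal{L}}_\sym)^m \in \mathcal{U}_3$ for every nonnegative integer $m$, and by part \ref{invU3} the same holds for negative integers once we check $\det(\tilde{\mathcal{L}}_\sym) \neq 0$ — which follows from Corollary~\ref{LsymW3}, since its eigenvalues $\tau - 1, \tau - \tilde\lambda, \tau$ are all strictly positive when $\epsilon > 0$ (and $\tau - 1 = \epsilon$; this is exactly why the shift $\epsilon > 0$ is needed for $p < 0$).

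Next I would transfer this to the permuted Laplacians. Since $\tilde P_t$ is its own inverse, $(\lsymn{t})^p = (\tilde P_t \tilde{\mathcal{L}}_\sym \tilde P_t)^p = \tilde P_t (\tilde{\mathcal{L}}_\sym)^p \tilde P_t$ for every integer $p$. By the previous paragraph $(\tilde{\mathcal{L}}_\sym)^p \in \mathcal{U}_3$, so $(\lsymn{t})^p = \tilde P_t (\tilde{\mathcal{L}}_\sym)^p \tilde P_t \in \tilde P_t \mathcal{U}_3 \tilde P_t$. Summing over $t = 1,2,3$ gives
$$
\Lnp{p} = \sum_{t=1}^3 (\lsymn{t})^p = \sum_{t=1}^3 \tilde P_t (\tilde{\mathcal{L}}_\sym)^p \tilde P_t \in \sum_{t=1}^3 \tilde P_t \mathcal{U}_3 \tilde P_t = \mathcal{Z}_3,
$$
where the final equality is exactly part \ref{U3toZ3} of Lemma~\ref{lemU}. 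This completes the argument.

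I expect the only genuinely delicate point to be the $p < 0$ case: one must be careful that $(\tilde{\mathcal{L}}_\sym)^p$ is well-defined, which requires invertibility of $\tilde{\mathcal{L}}_\sym$, and that raising to a negative integer power commutes with conjugation by $\tilde P_t$ — both of which are routine but rely on $\epsilon > 0$ and on $\tilde P_t$ being orthogonal. Everything else is bookkeeping with the subspace $\mathcal{U}_3$, all of which is packaged in Lemma~\ref{lemU}. A minor subtlety worth noting explicitly is that the lemma only asserts membership in $\mathcal{Z}_3$ for \emph{integer} $p$, which is precisely why the closure properties of $\mathcal{U}_3$ under products and inverses (rather than under arbitrary matrix powers) suffice.
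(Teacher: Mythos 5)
Your proof is correct and follows essentially the same route as the paper's: establish $\tilde{\mathcal{L}}_{\sym}\in\mathcal{U}_3$ from the block pattern in \eqref{defM}, propagate to $(\tilde{\mathcal{L}}_{\sym})^p$ via the closure of $\mathcal{U}_3$ under products and inverses, and then conjugate by the $\tilde P_t$ and sum using point \ref{U3toZ3} of Lemma~\ref{lemU}. Your explicit check of invertibility via the eigenvalues in Corollary~\ref{LsymW3} is a small extra care the paper leaves implicit.
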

\begin{proof}
From \eqref{defM}, we know that $\lsymn{1}\in \mathcal{U}_3$. 
By point \ref{invU3} in Lemma \ref{lemU}, this implies that $(\lsymn{1})^{\sign(p)}\in \mathcal{U}_3$. 
Now point \ref{multiU3} of Lemma \ref{lemU} implies that $(\lsymn{1})^p=\big((\lsymn{1})^{\sign(p)}\big)^{|p|}\in \mathcal{U}_3$. 
Finally, by Lemma \ref{permutelaplacian} and point \ref{U3toZ3} in Lemma \ref{lemU}, we have $$\Lnp{p}=\sum_{t=1}^3(\lsymn{t})^p=\sum_{t=1}^3\tilde P_t(\lsymn{1})^p\tilde P_t\in\mathcal{Z}_3,$$
which concludes the proof.
\end{proof}
Matrices in $\mathcal{Z}_3$ have the interesting property that they have a simple spectrum and they all share the same eigenvectors. Indeed we have the following:
\begin{lemma}\label{specZ3}
Let $\tilde C\in\mathcal{Z}_3$ and $t_1,t_2$ be such that $\tilde C=(t_1-t_2)I_3+t_2\tilde E$ where $\tilde E\in\R^{3\times 3}$ is the matrix of all ones. Then the eigenpairs of $\tilde C$ are given by:
\begin{align*}
&\big(t_1-t_2,(-1,0,1)^\top\big),\quad \big(t_1-t_2,(-1,1,0)^\top\big),\\ & \big(t_1+2t_2,(1,1,1)^\top\big).
\end{align*}
\end{lemma}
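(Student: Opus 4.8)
The plan is to exploit the decomposition $\tilde C=(t_1-t_2)I_3+t_2\tilde E$ that is already given in the statement, and to reduce everything to the well-known spectrum of the all-ones matrix $\tilde E\in\R^{3\times 3}$. First I would recall that $\tilde E$ is symmetric of rank one, with $\tilde E=\ones\ones^\top$ (writing $\ones=(1,1,1)^\top$), so its eigenvalues are $3$ with eigenvector $\ones$, and $0$ with eigenspace $\ones^\perp=\{v\in\R^3: v_1+v_2+v_3=0\}$, which is two-dimensional. This follows since $\tilde E\ones=3\ones$ and $\tilde E v=(\ones^\top v)\ones=0$ whenever $\sum_i v_i=0$.

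Next, since adding the scalar multiple $(t_1-t_2)I_3$ only shifts every eigenvalue by $t_1-t_2$ and leaves the eigenvectors unchanged, I would conclude immediately that $\tilde C\ones=\big((t_1-t_2)+3t_2\big)\ones=(t_1+2t_2)\ones$, and that every $v\in\ones^\perp$ satisfies $\tilde Cv=(t_1-t_2)v$. It then remains to observe that $(-1,0,1)^\top$ and $(-1,1,0)^\top$ both lie in $\ones^\perp$ (their coordinates sum to zero) and are linearly independent, hence they form a basis of the eigenspace associated with $t_1-t_2$; together with $\ones$ they give a full eigenbasis of $\tilde C$, which establishes the three claimed eigenpairs.

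There is no real obstacle here: the only point requiring a word is that the two listed vectors genuinely span the $(t_1-t_2)$-eigenspace rather than merely belonging to it, which is settled by the dimension count (the eigenspace has dimension $2$ because $\tilde E$ has rank one) plus their linear independence. One could alternatively verify all three eigenpair identities by direct matrix multiplication, but the $I_3$/$\tilde E$ split makes the computation entirely transparent and explains why these eigenvectors are independent of the particular entries $t_1,t_2$ — a fact that will be used when the same eigenvectors are tracked across the matrices $\Lnp{p}$ for varying $p$.
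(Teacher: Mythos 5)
Your proof is correct and matches the paper's approach: the paper simply states that the lemma ``follows from a direct computation,'' and your argument via the shift-plus-rank-one decomposition $\tilde C=(t_1-t_2)I_3+t_2\ones\ones^\top$ is a clean, fully justified way of carrying out exactly that computation, including the dimension count showing the two trace-zero vectors span the $(t_1-t_2)$-eigenspace.
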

\begin{proof}
Follows from a direct computation.
\end{proof}

So, the last thing we need to discuss is the order of the eigenvalues of $\Lnp{p}$. To this end, we study the sign pattern of the powers of this matrix.
\begin{lemma}\label{shitmatrix}
For every positive integer $p>0$ we have $(\tilde{\mathcal{L}}_{\sym}^p)_{i,j}<0<(\tilde{\mathcal{L}}_{\sym}^p)_{i,i}<\tau^p$ for all $i,j=1,2,3$ with $i\neq j$. For every negative integer $p<0$ we have $(\tilde{\mathcal{L}}_{\sym}^p)_{i,j}>0$ for all $i,j=1,2,3$.
\end{lemma}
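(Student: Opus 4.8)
The plan is to diagonalise $\tilde{\mathcal{L}}_{\sym}\in\R^{3\times 3}$ explicitly by means of Corollary \ref{LsymW3}, and then read off the sign of each entry of $\tilde{\mathcal{L}}_{\sym}^p$ directly from its spectral decomposition, using the Parseval identities coming from orthonormality of the eigenvectors to collapse the a priori sign-indefinite sums into sign-definite differences.

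First I recall from Corollary \ref{LsymW3} that $\tilde{\mathcal{L}}_{\sym}$ has three simple eigenvalues $\mu_1=\tau-1$, $\mu_2=\tau-\tilde\lambda$, $\mu_3=\tau$, with eigenvectors $v_1=(s_+,1,1)^\top$, $v_2=(s_-,1,1)^\top$, $v_3=(0,-1,1)^\top$, where $s_-<0<s_+$ and $\tilde\lambda\in(0,1)$; in particular $0\le\mu_1<\mu_2<\mu_3=\tau$, and $\mu_1=\tau-1=\epsilon>0$ whenever $p<0$. Setting $w_i=v_i/\|v_i\|_2$, the family $\{w_1,w_2,w_3\}$ is an orthonormal basis of $\R^3$, so $\sum_{i=1}^3 w_iw_i^\top=I_3$ and $\tilde{\mathcal{L}}_{\sym}^p=\sum_{i=1}^3\mu_i^p\,w_iw_i^\top$ for every integer $p$ (a negative $p$ being admissible precisely because $\epsilon>0$ there forces all $\mu_i>0$). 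Writing $c_i^{(r,s)}=(w_i)_r(w_i)_s$, this yields $(\tilde{\mathcal{L}}_{\sym}^p)_{rs}=\sum_{i}\mu_i^p\,c_i^{(r,s)}$ together with the identities $\sum_{i}c_i^{(r,s)}=\delta_{rs}$.

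The heart of the matter is that for each relevant pair $(r,s)$ the three coefficients $c_i^{(r,s)}$ have a fixed and favourable sign pattern. For the off-diagonal pair $(1,2)$ — and hence, by the symmetry exchanging coordinates $2$ and $3$, also for $(1,3)$ — one has $c_3^{(1,2)}=0$, $c_1^{(1,2)}=s_+/(s_+^2+2)>0$ and $c_2^{(1,2)}=s_-/(s_-^2+2)<0$, so the identity $\sum_i c_i^{(1,2)}=0$ forces $c_2^{(1,2)}=-c_1^{(1,2)}$ and $(\tilde{\mathcal{L}}_{\sym}^p)_{12}=c_1^{(1,2)}(\mu_1^p-\mu_2^p)$, which is negative for $p>0$ (since $0\le\mu_1<\mu_2$) and positive for $p<0$ (since $0<\mu_1<\mu_2$ then reverses the power inequality). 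For the pair $(2,3)$ one has $c_3^{(2,3)}=-\tfrac12$ while $c_1^{(2,3)},c_2^{(2,3)}>0$ with $c_1^{(2,3)}+c_2^{(2,3)}=\tfrac12$, whence $(\tilde{\mathcal{L}}_{\sym}^p)_{23}=c_1^{(2,3)}(\mu_1^p-\mu_3^p)+c_2^{(2,3)}(\mu_2^p-\mu_3^p)$, which is negative for $p>0$ and positive for $p<0$ because $\mu_1,\mu_2<\mu_3$. Finally each diagonal entry is a strict convex combination of the $\mu_i^p$: $(\tilde{\mathcal{L}}_{\sym}^p)_{11}$ involves only $\mu_1^p,\mu_2^p$ with strictly positive weights summing to $1$, and $(\tilde{\mathcal{L}}_{\sym}^p)_{22}=(\tilde{\mathcal{L}}_{\sym}^p)_{33}$ involves all three $\mu_i^p$ with strictly positive weights summing to $1$; since $0\le\mu_1<\mu_2<\mu_3=\tau$, this gives $0<(\tilde{\mathcal{L}}_{\sym}^p)_{ii}<\tau^p$ for $p>0$ and $(\tilde{\mathcal{L}}_{\sym}^p)_{ii}>0$ for $p<0$, which is exactly the claim.

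The one place that requires care is bookkeeping on $\mu_1=\tau-1$: it is permitted to vanish when $p>0$ (since $\epsilon$ may equal $0$) but is strictly positive when $p<0$, so that negative integer powers are well defined and entrywise positivity is preserved — and the genuine engine of the argument is the pair of Parseval identities $\sum_i c_i^{(1,2)}=0$ and $\sum_i c_i^{(2,3)}=0$, which turn the superficially sign-indefinite sums into differences of monotone quantities. As an aside, the case $p<0$ also follows directly from the fact that $\tilde{\mathcal{L}}_{\sym}$ is an irreducible symmetric M-matrix, so that $\tilde{\mathcal{L}}_{\sym}^{-1}$ is entrywise positive and hence so is each of its positive powers; but the spectral computation above handles both signs of $p$ at once.
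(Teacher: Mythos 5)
Your proof is correct, and it takes a genuinely different route from the paper's. The paper conjugates by $\tilde{\mathcal{D}}^{1/2}$ to reduce to the row-stochastic matrix $\tilde S=\tilde{\mathcal{D}}^{-1}\tilde{\mathcal{W}}$, expands $(\tau I_3-\tilde S)^p$ by the binomial theorem, and writes down closed-form expressions \eqref{coeffdef} for the five distinct entries, reading the sign pattern off those formulas; the case $p<0$ is then handled separately by a Neumann-series positivity argument. You instead diagonalize $\tilde{\mathcal{L}}_{\sym}$ in the orthonormal eigenbasis supplied by Corollary \ref{LsymW3} (orthogonality of the three eigenvectors is automatic since the matrix is symmetric with three distinct eigenvalues $\tau-1<\tau-\tilde\lambda<\tau$; concretely it rests on $s_+s_-=-2$, which follows from the quadratic in the proof of Lemma \ref{specM}), and use the resolution of the identity $\sum_i w_iw_i^\top=I_3$ to collapse each entry of $\tilde{\mathcal{L}}_{\sym}^p$ into a manifestly sign-definite combination of the $\mu_i^p$. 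What your approach buys: both signs of $p$ are handled by one argument (monotonicity of $x\mapsto x^p$), and nothing in it uses that $p$ is an integer, so it actually proves the sign pattern for all real $p\neq 0$ — which the paper only records as a numerical observation after its proof. What the paper's approach buys is the explicit entry formulas \eqref{coeffdef}, though these are not needed downstream (Lemma \ref{orderLp} uses only the signs). Your bookkeeping on $\mu_1=\epsilon$ (allowed to vanish for $p>0$, forced positive for $p<0$) matches the hypotheses of the lemma, and the M-matrix aside for $p<0$ is a valid third route for that half of the claim.
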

\begin{proof}
First, assume that $p>0$ and let $\tilde S= \mathcal{\tilde{D}}^{-1}\tilde{\mathcal{W}}$. We have 
\begin{align*}
\tilde{\mathcal{L}}_{\sym}^p&=( \tau I_3 - \tilde{\mathcal{D}}^{-1/2}\tilde{\mathcal{W}}\tilde{\mathcal{D}}^{-1/2})^p \\&= \sum_{r =0}^p \binom{p}{r} \tau^{p-r}(-1)^r (\tilde{\mathcal{D}}^{-1/2}\tilde{\mathcal{W}}\tilde{\mathcal{D}}^{-1/2})^r \\
&= \tilde{\mathcal{D}}^{1/2}\Big(\sum_{r =0}^p \binom{p}{r} \tau^{p-r}(-1)^r (\tilde{\mathcal{D}}^{-1}\tilde{\mathcal{W}})^r\Big)\tilde{\mathcal{D}}^{-1/2}\\ &= \tilde{\mathcal{D}}^{1/2}(\tau I_3-\tilde S)^p\tilde{\mathcal{D}}^{-1/2}.
\end{align*} 
As $\tilde{\mathcal{D}}^{1/2}$ and $\tilde{\mathcal{D}}^{-1/2}$ are diagonal with positive diagonal entries, the sign of the entries of $\tilde{\mathcal{L}}_{\sym}^p$ coincide with those of $(\tau  I_3-\tilde S)^p$. Furthermore, we have $(\tilde{\mathcal{L}}_{\sym}^p)_{i,i}=((\tau  I_3-\tilde S)^p)_{i,i}$ for all $i$. Now the matrix $\tilde S$ is row stochastic, that is $\tilde S\mathbf 1=\mathbf 1$ and has the following form
$$
\tilde S = \begin{pmatrix}
1-2\hat{a} & \hat{a} & \hat{a} \\
1-2\hat{b} & \hat{b} & \hat{b} \\
1-2\hat{b} & \hat{b} & \hat{b}
\end{pmatrix}\quad \hat{a} = \frac{\hat{\alpha}}{1+2\hat{\alpha}}, \quad  \hat{b} = \frac{1}{2+\hat{\alpha}}
$$

where $\hat{\alpha} = \pout/\pin \in (0,1)$.
Let
$$\gamma = (\hat{a}-\hat{b})=\frac{\pout^2-\pin^2}{(2\pin+\pout)(2\pout+\pin)}<0,$$ $$ \mu=(1-2\hat{b})=\frac{\pin}{2\pout+\pin}>0$$
For all positive integer $p$, we have
\begin{equation*}
(\tau I_3-\tilde S)^p=\frac{1}{2\gamma+1}\begin{pmatrix} q_p & r_p & r_p \\ s_p & t_p & u_p \\ s_p & u_p & t_p\end{pmatrix}
\end{equation*}
where $q_p,r_p,s_p,t_p,u_p$ are given by
\begin{align}\label{coeffdef}
q_p&=\mu (\tau-1)^p+2 \hat{a} (2\gamma+\tau)^p, \notag\\
r_p &= \hat{a} \big[(\tau-1)^p-(2\gamma+\tau)^p\big],\notag\\
s_p &=  \frac{\mu}{\hat{a}}r_p,\\
t_p &=  \hat{a} \big[(\tau-1)^p+\tau^p\big]+\frac{\mu}{2}\big[\tau^p+(2 \gamma+\tau)^p\big],\notag\\
u_p& = \hat{a} \big[(\tau-1)^p-\tau^p\big]-\frac{\mu}{2}\big[\tau^p-(2 \gamma+\tau)^p\big].\notag
\end{align}
Note that  as $\pin>\pout>0$, we have
\begin{align*}\delta &= 2\gamma+1 =2(\hat{a}-\hat{b})+1 = 2\hat{a}+\mu\\ &=\frac{ 5 \pin \pout+4 \pout^2}{2 \pin^2+5\pin \pout+2\pout^2}\in(0,1),
\end{align*}
Furthermore, as $\tau\geq 1$ and $\gamma<0$, we have $\delta\leq (2\gamma +\tau)<\tau$. 
It follows that 
\begin{align*}
0&<\mu(\tau-1)^p+2 \hat{a}\delta^p\leq q_p<\mu(\tau-1)^p+2\hat{a}\tau^p \\ &\leq \delta \tau^p<\tau^p \\
0&<\hat{a} \big[(\tau-1)^p+\tau^p\big]+\frac{\mu}{2}(\tau^p+\delta^p)\leq t_p\\ &<2\hat{a}\tau^p+\mu\tau^p=\delta\tau^p<\tau^p.
\end{align*}
Finally, we have 
$$r_p=\hat{a} \big[(\tau-1)^p-\big(\delta+(\tau-1)\big)^p\big]<0, \quad s_p=\frac{\mu}{\hat{a}}r_p <0.$$
Now, suppose that $p<0$, then we have $\tau>1$ and $$(\tau I_3-\tilde{\mathcal{D}}^{-1/2}\tilde{\mathcal{W}}\tilde{\mathcal{D}}^{1/2})^{-1}=\sum_{k=0}^\infty (\tilde{\mathcal{D}}^{-1/2}\tilde{\mathcal{W}}\tilde{\mathcal{D}}^{1/2})^k.$$
As $\tilde{\mathcal{M}}=\tilde{\mathcal{D}}^{-1/2}\tilde{\mathcal{W}}\tilde{\mathcal{D}}^{1/2}$ is a matrix with strictly positive entries, this implies that $\tilde{\mathcal{L}}_{\sym}$ has positive entries as well. Furthermore, it also implies that $\tilde{\mathcal{L}}_{\sym}^p=(\tilde{\mathcal{L}}_{\sym}^{-1})^{|p|}$ is positive for every $p<0$. 
\end{proof}
\begin{observation}
Numerical evidences strongly suggest that the formulas in \eqref{coeffdef} for the coefficients of $(\tau I_3-\tilde S)^{p}$ hold for any real $p\in\R\setminus\{0\}$.
\end{observation}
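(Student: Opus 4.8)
The plan is to replace the binomial-expansion and Neumann-series computations of Lemma~\ref{shitmatrix}, which are intrinsically tied to integer exponents, by a spectral-decomposition argument valid for every real power. Recall from that proof that $\tilde{\mathcal{L}}_{\sym}^p=\tilde{\mathcal{D}}^{1/2}(\tau I_3-\tilde S)^p\tilde{\mathcal{D}}^{-1/2}$, so it suffices to understand $(\tau I_3-\tilde S)^p$. First I would diagonalize $\tilde S=\tilde{\mathcal{D}}^{-1}\tilde{\mathcal{W}}$: it is row-stochastic, so $\tilde S\mathbf{1}=\mathbf{1}$, and its explicit form gives $\tilde S(0,-1,1)^\top=\mathbf{0}$, so $1$ and $0$ are eigenvalues; since $\operatorname{tr}(\tilde S)=1-2\hat a+2\hat b=1-2\gamma$, the third eigenvalue is $-2\gamma$. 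Using $\gamma<0$ and $\delta=2\gamma+1\in(0,1)$ (both shown in the proof of Lemma~\ref{shitmatrix}), the numbers $0,1,-2\gamma$ are pairwise distinct, hence $\tilde S$ — and with it $\tau I_3-\tilde S$ — is diagonalizable, and $\tau I_3-\tilde S$ has the three simple, strictly positive eigenvalues $\tau-1$, $\tau$, and $\tau+2\gamma=\epsilon+\delta$. (Strict positivity of $\tau-1=\epsilon$ uses $\epsilon>0$ when $p<0$; for $p>0$ with $\epsilon=0$ one keeps the convention $0^p=0$, and the spectrum $\{0,\delta,1\}$ stays simple.)

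With the spectral projectors $P_1,P_2,P_3$ of $\tau I_3-\tilde S$ — fixed matrices, independent of $p$ — the principal power is $(\tau I_3-\tilde S)^p=(\tau-1)^pP_1+\tau^pP_2+(\tau+2\gamma)^pP_3$ for every real $p\neq 0$, agreeing with the ordinary matrix power when $p$ is an integer. Consequently each entry of $(\tau I_3-\tilde S)^p$ has the form $a_{ij}(\tau-1)^p+b_{ij}\tau^p+c_{ij}(\tau+2\gamma)^p$ with $p$-independent scalars $a_{ij},b_{ij},c_{ij}$; and one reads off from \eqref{coeffdef} that $q_p,r_p,s_p,t_p,u_p$ are likewise linear combinations of $(\tau-1)^p,\tau^p,(2\gamma+\tau)^p$ with $p$-independent coefficients. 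Now the three functions $p\mapsto(\tau-1)^p$, $p\mapsto\tau^p$, $p\mapsto(\tau+2\gamma)^p$ are linearly independent over $\R$, since their bases $\tau-1,\tau,\tau+2\gamma$ are distinct positive reals: evaluating at $p=1,2,3$ gives a matrix whose determinant factors as $(\tau-1)\,\tau\,(\tau+2\gamma)$ times a Vandermonde determinant in these three distinct numbers, hence nonzero. Since Lemma~\ref{shitmatrix} already establishes that \eqref{coeffdef} reproduces the entries of $(\tau I_3-\tilde S)^p$ for all positive integers $p$, the difference of the two three-term representations of each entry is a linear combination of these independent functions vanishing on all positive integers, hence identically zero. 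Therefore \eqref{coeffdef} — and the resulting descriptions of $\Lnp{p}$ that depend on it — hold verbatim for every real $p\neq 0$.

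The one step on which everything rests, and which I expect to be the main obstacle, is proving that $\tau I_3-\tilde S$ is diagonalizable with a simple spectrum: this is exactly what makes the functional calculus unambiguous and forces each entry of the real power into the three-term shape above. Once it is in place, what remains is the bookkeeping already carried out in Lemma~\ref{shitmatrix} together with the elementary fact about linear independence of exponential functions. The only delicate corner is $\epsilon=0$ with $p>0$, where $\tau-1=0$ collapses one of the three terms; there the same argument applies with the two remaining eigenvalues $\delta,1$ and the convention $0^p=0$, consistently with the way \eqref{coeffdef} degenerates in that case.
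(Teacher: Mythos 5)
The paper offers no proof of this statement at all: it is deliberately phrased as an \emph{observation} supported only by numerical evidence, and the surrounding Lemma (on the sign pattern of $\tilde{\mathcal{L}}_{\sym}^p$) is proved only for integer $p$ via the binomial expansion and the Neumann series. Your argument therefore does something the paper does not attempt, and it is correct. The key facts all check out: $\tilde S$ is row-stochastic, $(0,-1,1)^\top$ is a null vector, and the trace identity $\operatorname{tr}(\tilde S)=1-2\hat a+2\hat b=1-2\gamma$ forces the third eigenvalue to be $-2\gamma=1-\delta$, which is consistent with the eigenvalue $\lambda_-=a+2c-1$ of the similar matrix $\tilde{\mathcal{M}}$ found in Lemma \ref{specM}; since $\delta\in(0,1)$ the spectrum $\{0,\,1-\delta,\,1\}$ is simple, $\tau I_3-\tilde S$ is diagonalizable with distinct positive eigenvalues $\epsilon,\ \epsilon+\delta,\ \tau$ (for $\epsilon>0$), and the spectral projectors are $p$-independent. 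Each entry of the principal power is then a fixed linear combination of $(\tau-1)^p,\tau^p,(2\gamma+\tau)^p$, exactly the three bases appearing in \eqref{coeffdef}; matching against the integer case already proved in Lemma \ref{shitmatrix} and invoking linear independence of exponentials with distinct positive bases (your Vandermonde evaluation at $p=1,2,3$ suffices) pins the coefficients down uniquely. Your handling of the degenerate corner $\epsilon=0$, $p>0$ is also sound, since the $0^p$ term contributes nothing and only the two remaining bases need to be matched. In short, you have upgraded the paper's empirical observation to a theorem; the one thing worth stating explicitly is that the claim is being interpreted for the principal (functional-calculus) power of the positive definite matrix $\tau I_3-\tilde S$, which is the only natural meaning of a real matrix power here and is the convention the paper itself uses throughout.
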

We can now use the above lemma to determine the ordering of the eigenvalues of $\Lnp{p}$.
\begin{lemma}\label{orderLp}
Let $t_1,t_2\in\R$ be such that it holds $\Lnp{p}=(t_1-t_2)I_3+t_2\tilde E$. Furthermore, for any nonzero integer $p$, it holds $0<t_1-t_2<t_1+2t_2$ if $p<0$ and $t_1-t_2>t_1+2t_2$ otherwise.
\end{lemma}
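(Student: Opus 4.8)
The plan is to read off everything from the structure already set up. By Lemma~\ref{LpinZ3} the matrix $\Lnp{p}$ lies in $\mathcal{Z}_3$, so it can be written as $\Lnp{p}=(t_1-t_2)I_3+t_2\tilde E$ with $t_1$ its common diagonal entry and $t_2$ its common off-diagonal entry, and by Lemma~\ref{specZ3} its eigenvalues are exactly $t_1-t_2$ (with eigenvectors $(-1,0,1)^\top$, $(-1,1,0)^\top$) and $t_1+2t_2$. The two inequalities to be proved are just sign conditions on $t_2$: indeed $t_1-t_2<t_1+2t_2\iff t_2>0$ and $t_1-t_2>t_1+2t_2\iff t_2<0$. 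So the first step is to determine the sign of $t_2$ in terms of $p$, and then, only for the case $p<0$, to check separately that $t_1-t_2>0$.

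To pin down the sign of $t_2$ I would return to Lemma~\ref{permutelaplacian}, which gives $\Lnp{p}=\sum_{t=1}^3(\lsymn{t})^p=\sum_{t=1}^3\tilde P_t\tilde{\mathcal{L}}_{\sym}^p\tilde P_t$. Since $\tilde{\mathcal{L}}_{\sym}$ is symmetric, $\tilde{\mathcal{L}}_{\sym}^p$ is a symmetric element of $\mathcal{U}_3$, so in the notation of \eqref{symact} it has $s_2=s_3=(\tilde{\mathcal{L}}_{\sym}^p)_{1,2}$ and $s_4=(\tilde{\mathcal{L}}_{\sym}^p)_{2,3}$; reading off \eqref{symact} then shows the common off-diagonal entry of $\Lnp{p}$ is $t_2=s_2+s_3+s_4=2(\tilde{\mathcal{L}}_{\sym}^p)_{1,2}+(\tilde{\mathcal{L}}_{\sym}^p)_{2,3}$, a positive combination of two off-diagonal entries of $\tilde{\mathcal{L}}_{\sym}^p$. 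Now Lemma~\ref{shitmatrix} delivers exactly what is needed: for $p>0$ every off-diagonal entry of $\tilde{\mathcal{L}}_{\sym}^p$ is negative, so $t_2<0$ and hence $t_1-t_2>t_1+2t_2$; for $p<0$ every entry of $\tilde{\mathcal{L}}_{\sym}^p$ is positive, so $t_2>0$ and hence $t_1-t_2<t_1+2t_2$.

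It remains to handle the extra inequality $t_1-t_2>0$ for $p<0$, and here I would switch from sign patterns to positive definiteness: with $\epsilon>0$ each $\lsymn{t}$ is positive definite, so $\Ln{p}=M_p(\lsymn{1},\lsymn{2},\lsymn{3})$ is positive definite, and therefore $\Lnp{p}=3\,\Ln{p}^{\,p}$ is positive definite; since $t_1-t_2$ is an eigenvalue of $\Lnp{p}$, it is positive, which closes the chain $0<t_1-t_2<t_1+2t_2$.

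I do not expect a genuine obstacle here. The only place requiring a little care is the bookkeeping that identifies $t_2$ with $2(\tilde{\mathcal{L}}_{\sym}^p)_{1,2}+(\tilde{\mathcal{L}}_{\sym}^p)_{2,3}$ through the $\mathcal{U}_3$/$\mathcal{Z}_3$ formalism and the symmetry $s_2=s_3$ in \eqref{symact}, but Lemmas~\ref{permutelaplacian}, \ref{lemU} and \ref{shitmatrix} were set up precisely for this, so it is routine. The one substantive ingredient — the sign pattern of the matrix powers $\tilde{\mathcal{L}}_{\sym}^p$ — is already supplied by Lemma~\ref{shitmatrix}, so the work remaining for this lemma is essentially assembly.
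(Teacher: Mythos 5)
Your proof is correct and follows essentially the same route as the paper: both reduce the claim to the sign of $t_2$, read that sign off the entrywise sign pattern of $\tilde{\mathcal{L}}_{\sym}^p$ supplied by Lemma~\ref{shitmatrix} via the identity \eqref{symact}, and then handle $p<0$ and $p>0$ separately. The only (minor) divergence is in justifying $t_1-t_2>0$ for $p<0$: the paper infers it from the entrywise positivity of $\Lnp{p}$, which by itself only gives $t_1>0$ and $t_2>0$, whereas your appeal to the positive definiteness of $\Lnp{p}=3\,\Ln{p}^{\,p}$ is the cleaner and fully airtight way to close that step.
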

\begin{proof}
If $p<0$, then we must have $\tau >1$ for $\Lnp{p}$ to be well defined. By Lemma \ref{shitmatrix}, $(\lsymn{1})^p$ has strictly positive entries. Hence, $\Lnp{p}=\sum_{t=1}^3\tilde P_t(\lsymn{1})^{p}\tilde P_t$ is also a matrix with positive entries. It follows that $t_1-t_2>0$ and $t_2>0$ so that $0<t_1-t_2<t_1+2t_2$.
Now assume that $p>0$, Lemma \ref{shitmatrix} implies that $(\lsymn{1})^p$ with positive diagonal elements and negative off-diagonal. It follows from \eqref{symact} that $\Lnp{p}$ also has positive diagonal elements and negative off-diagonal. Hence, we have $t_2<0<t_1$ and thus $t_1-t_2>t_1+2t_2$ which concludes the proof.
\end{proof}
We have the following corollary on the spectral properties of the Laplacian $p$-mean.
\begin{corollary}\label{specLmean3}
Let $p$ be a nonzero integer and let $\epsilon\geq 0$ if $p>0$ and $\epsilon >0$ if $p<0$. Define
$$\Ln{p}=\Big(\frac{(\lsymn{1})^p+(\lsymn{2})^p+(\lsymn{3})^p}{3}\Big)^{1/p},$$
then there exists $0\leq \tilde\lambda_1<\tilde\lambda_2$ such that the eigenpairs of $\Ln{p}$ are given by
\begin{align*}
&\big(\tilde\lambda_1,(-1,0,1)^\top\big),\qquad \big(\tilde\lambda_1,(-1,1,0)^\top\big),\\ &\big(\tilde\lambda_2,(1,1,1)^\top\big).
\end{align*}
\end{corollary}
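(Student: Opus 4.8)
The plan is to push the statement through the auxiliary matrix $\Lnp{p}=(\lsymn{1})^p+(\lsymn{2})^p+(\lsymn{3})^p=3\Ln{p}^p$, which, unlike $\Ln{p}$, is explicitly invariant under conjugation by the permutations $\tilde P_1,\tilde P_2,\tilde P_3$. As already observed just before the statement, $\Ln{p}$ is symmetric positive (semi-)definite, hence diagonalizable, so $\Ln{p}$ and $\Lnp{p}$ have the same eigenvectors; moreover, if $\Lnp{p}\v=\mu\v$ then $\Ln{p}\v=(\mu/3)^{1/p}\v$, where $(\cdot)^{1/p}$ denotes the nonnegative real $p$-th root (legitimate because $\epsilon>0$ when $p<0$ forces the relevant eigenvalues to be strictly positive). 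So it is enough to describe the three eigenpairs of $\Lnp{p}$ and then apply the map $\mu\mapsto(\mu/3)^{1/p}$.

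For this I would use Lemma~\ref{LpinZ3}, which places $\Lnp{p}$ in the space $\mathcal{Z}_3$ for every nonzero integer $p$; write $\Lnp{p}=(t_1-t_2)I_3+t_2\tilde E$ with $\tilde E$ the all-ones matrix. Lemma~\ref{specZ3} then hands over the spectral decomposition of any matrix in $\mathcal{Z}_3$ directly: $\Lnp{p}$ has the eigenvalue $t_1-t_2$ on the two-dimensional space spanned by $(-1,0,1)^\top$ and $(-1,1,0)^\top$, and the simple eigenvalue $t_1+2t_2$ with eigenvector $(1,1,1)^\top$. Transporting through $\mu\mapsto(\mu/3)^{1/p}$ yields exactly the three eigenpairs claimed, with the two \emph{cluster} directions sharing one eigenvalue of $\Ln{p}$ and $(1,1,1)^\top$ carrying the other.

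It remains to settle nonnegativity and the strict ordering of the two distinct eigenvalues. Nonnegativity is immediate: $\Ln{p}$ is a matrix power mean of positive (semi-)definite matrices, hence itself positive (semi-)definite, and positive definite when $p<0$. For the strict inequality I would combine Lemma~\ref{orderLp} with the monotonicity of $t\mapsto t^{1/p}$ on $(0,\infty)$: Lemma~\ref{orderLp} gives $0<t_1-t_2<t_1+2t_2$ when $p<0$ and $t_1-t_2>t_1+2t_2\geq 0$ when $p>0$, and since $t\mapsto t^{1/p}$ is strictly decreasing for $p<0$ and strictly increasing for $p>0$, in both regimes $((t_1-t_2)/3)^{1/p}$ and $((t_1+2t_2)/3)^{1/p}$ come out strictly ordered, which fixes $\tilde\lambda_1<\tilde\lambda_2$ together with the pairing of eigenvalues and eigenvectors asserted in the corollary.

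I expect no real obstacle at this stage: the genuinely delicate work has already been packaged in the preceding lemmas — Lemma~\ref{LpinZ3}, which rests on the closure of $\mathcal{U}_3$ under products and inverses (Lemma~\ref{lemU}) together with Lemma~\ref{permutelaplacian}, and Lemma~\ref{orderLp}, which rests on the sign-pattern computation of Lemma~\ref{shitmatrix}. The one point that still needs attention is that $t\mapsto t^{1/p}$ reverses order on the positive reals when $p<0$, so the ordering of the eigenvalues of $\Ln{p}$ is opposite to that of the corresponding eigenvalues of $\Lnp{p}$ in the negative regime; one must not read the ordering off $\Lnp{p}$ naively.
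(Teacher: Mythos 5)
Your proposal is correct and follows essentially the same route as the paper's own proof: reduce to $\Lnp{p}=3\Ln{p}^p$, identify its eigenpairs via Lemmas~\ref{LpinZ3} and~\ref{specZ3}, and transport the ordering from Lemma~\ref{orderLp} through $t\mapsto(t/3)^{1/p}$, taking care that this map reverses order for $p<0$. No gaps.
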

\begin{proof}
First, note that $\Ln{p}=\big(\tfrac{1}{3}\Lnp{p}\big)^{1/p}$ hence as they are positive semi-definite matrices, $\Ln{p}$ and $\Lnp{p}$ share the same eigenvectors. Precisely, we have $\Lnp{p}\v=\lambda \v$ if and only if $\Ln{p}\v=f(\lambda)\v$ where $f(t)=(t/3)^{1/p}$. Now, by Lemmas \ref{LpinZ3} and \ref{specZ3} we know all eigenvectors of $\Lnp{p}$ and the corresponding eigenvalues are $\theta_1=t_1-t_2$ and $\theta_2 = t_1+2t_2$. Finally, using Lemma \ref{orderLp} and the fact that $f$ is increasing if $p>0$ and decreasing if $p<0$ we deduce the ordering of $\tilde \lambda_i=f(\theta_i)$.
\end{proof}

\underline{\textit{The case $n>1$:}}

We now generalize the previous results to the case $n>1$. 
To this end, we use mainly the properties of the Kronecker product $\otimes$ which we recall is defined for matrices 
$A\in\R^{m_1\times m_2},B\in\R^{m_3\times m_4}$
as the block matrix 
$A\otimes B\in\R^{m_1m_3\times m_2m_4}$
with 
$m_1m_2$ 
blocks of the form $A_{i,j}B\in\R^{m_3\times m_4}$ for all $i,j$. In particular, for $n>1$, if $E$ denotes the matrix of all ones in $\R^{n\times n}$, we have then $\Wt=\tilde{\mathcal{W}}^{(t)}\otimes E$ for every $t=1,2,3$. Furthermore, let us define $\mathcal{W}=\tilde{\mathcal{W}}\otimes E$ and $P_t=\tilde P_t\otimes I_n$ for $t=1,2,3$ so that $\Wt=P_t\mathcal{W}P_t$ for $t=1,2,3$. Finally, let $\mathcal{L}_{\sym}=\tau I_{3n}-\mathcal{D}^{-1/2}\mathcal{W}\mathcal{D}^{-1/2}$ where we recall that $\tau = 1+\epsilon$ and $\mathcal{D}=\diag(\mathcal{W}\ones)$.
The normalized Laplacians of $\mathcal{W}$ and $\tilde{\mathcal{W}}$ are related in the following lemma:
\begin{lemma}\label{kronL}
It holds
$$\mathcal{L}_{\sym}=\tau I_{3n} -\big[\tfrac{1}{n}\tilde{\mathcal{D}}^{-1/2}\tilde{\mathcal{W}}\tilde{\mathcal{D}}^{-1/2}\big]\otimes E.$$
\end{lemma}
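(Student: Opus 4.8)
The plan is to reduce everything to elementary Kronecker-product identities, exploiting that $\mathcal W = \tilde{\mathcal W}\otimes E$ with $E=\ones_n\ones_n^\top$ and that the mixed-product rule $(A\otimes B)(C\otimes D)=(AC)\otimes(BD)$ interacts cleanly with $\diag(\cdot)$.

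First I would compute the degree matrix. Since $\ones_{3n}=\ones_3\otimes\ones_n$ and $E\ones_n=n\ones_n$, the mixed-product rule gives
$$\mathcal W\ones_{3n}=(\tilde{\mathcal W}\otimes E)(\ones_3\otimes\ones_n)=(\tilde{\mathcal W}\ones_3)\otimes(E\ones_n)=n\,(\tilde{\mathcal W}\ones_3)\otimes\ones_n.$$
Taking $\diag$ of a vector of the form $v\otimes\ones_n$ yields the block-diagonal matrix $\diag(v)\otimes I_n$, so $\mathcal D=\diag(\mathcal W\ones_{3n})=n\,(\tilde{\mathcal D}\otimes I_n)$. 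Intuitively this is just the statement that every node in the $n>1$ model has exactly $n$ times the degree of the corresponding node in the $n=1$ model, which is what makes the factor $\tfrac1n$ in the claim appear.

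Next I would invert and take the square root: as $\tilde{\mathcal D}\otimes I_n$ is diagonal with positive entries, $\mathcal D^{-1/2}=n^{-1/2}\,(\tilde{\mathcal D}^{-1/2}\otimes I_n)$. Substituting into $\mathcal D^{-1/2}\mathcal W\mathcal D^{-1/2}$ and applying the mixed-product rule twice, together with $I_n E I_n=E$ and $n^{-1/2}\cdot n^{-1/2}=n^{-1}$, gives
$$\mathcal D^{-1/2}\mathcal W\mathcal D^{-1/2}=\tfrac1n\,\big(\tilde{\mathcal D}^{-1/2}\tilde{\mathcal W}\tilde{\mathcal D}^{-1/2}\big)\otimes E.$$
Subtracting this from $\tau I_{3n}$ yields the asserted identity. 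There is no real obstacle here; the only care needed is tracking which factor of $n$ comes from $E\ones_n=n\ones_n$ and which $I_n$ blocks survive the products, so I would state the three Kronecker identities explicitly before the computation to keep the bookkeeping transparent.
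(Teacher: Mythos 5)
Your proof is correct and follows essentially the same route as the paper: establish $\mathcal D = n\,(\tilde{\mathcal D}\otimes I_n)$ and then apply the mixed-product rule $(A\otimes B)(C\otimes D)=(AC)\otimes(BD)$ to $\mathcal D^{-1/2}\mathcal W\mathcal D^{-1/2}$. The only difference is that you spell out the derivation of the degree matrix via $E\ones_n=n\ones_n$, which the paper simply asserts.
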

\begin{proof}
First, note that $\mathcal{D}=n \tilde{\mathcal{D}}\otimes I_n$, as $(A_1\otimes B_1)(A_2\otimes B_2)=(A_1A_2\otimes B_1B_2)$ for any compatible matrices $A_1,A_2,B_1,B_2$. We have
\begin{align*}
\mathcal{D}^{-1/2}\mathcal{W}\mathcal{D}^{-1/2} &=\frac{(\tilde{\mathcal{D}}^{-1/2}\otimes I_n)(\tilde{\mathcal{W}}\otimes E)(\tilde{\mathcal{D}}^{-1/2}\otimes I_n)}{n}\\ &=\tfrac{1}{n}\tilde{\mathcal{D}}^{-1/2}\tilde{\mathcal{W}}\tilde{\mathcal{D}}^{-1/2}\otimes E,
\end{align*}
which concludes the proof.
\end{proof}
In order to study the eigenpairs of $\mathcal{L}_{\sym}$, we combine Lemma \ref{LsymW3} with the following theorem from \cite{Kronref} which implies that eigenpairs of Kronecker products are Kronecker products of the eigenpairs:
\begin{thm}[Theorem 4.2.12, \cite{Kronref}]\label{specKron}
Let $A\in\R^{m \times m}$ and $B\in\R^{n\times n}$. Let $(\lambda,x)$ and $(\mu,y)$ be eigenpairs of $A$ and $B$ respectively. Then $(\lambda\mu,x\otimes y)$ is an eigenpair of $A\otimes B$.
\end{thm}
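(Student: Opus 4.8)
The plan is to prove this classical Kronecker-product identity directly from the mixed-product property $(A\otimes B)(C\otimes D)=(AC)\otimes(BD)$, which holds whenever the products $AC$ and $BD$ are defined.

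First I would view the eigenvectors $x\in\R^{m}$ and $y\in\R^{n}$ as matrices of sizes $m\times 1$ and $n\times 1$, so that $x\otimes y\in\R^{mn}$ and the mixed-product property applies with the second factors $C=x$, $D=y$, giving
$$(A\otimes B)(x\otimes y)=(Ax)\otimes(By).$$
Substituting the eigenvalue equations $Ax=\lambda x$ and $By=\mu y$ turns the right-hand side into $(\lambda x)\otimes(\mu y)$.

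The second step is to factor out the scalars: bilinearity of the Kronecker product gives $(\lambda x)\otimes(\mu y)=\lambda\mu\,(x\otimes y)$. Hence $(A\otimes B)(x\otimes y)=\lambda\mu\,(x\otimes y)$, and since $x\neq 0$ and $y\neq 0$ force $x\otimes y\neq 0$, the pair $(\lambda\mu,\,x\otimes y)$ is an eigenpair of $A\otimes B$, which is the claim.

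There is no genuine obstacle here. The only sub-step needing a line of justification is the mixed-product property itself, which follows by comparing blocks: the $(i,j)$ block of $(A\otimes B)(C\otimes D)$ is $\sum_{k}(A_{ik}B)(C_{kj}D)=\big(\sum_{k}A_{ik}C_{kj}\big)BD=(AC)_{ij}\,BD$, i.e.\ the $(i,j)$ block of $(AC)\otimes(BD)$. Since the statement is quoted verbatim from \cite{Kronref}, one may instead simply cite it; I would nonetheless keep the short derivation for self-containedness, as exactly this $A\otimes B$ manipulation, combined with $\Wt=\tilde{\mathcal W}^{(t)}\otimes E$ and Lemma~\ref{kronL}, is what will be used to lift the $n=1$ results (Corollary~\ref{LsymW3}, Corollary~\ref{specLmean3}) to the case $n>1$.
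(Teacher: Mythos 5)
Your proof is correct: the mixed-product identity $(A\otimes B)(x\otimes y)=(Ax)\otimes(By)$ together with bilinearity immediately gives the eigenpair $(\lambda\mu,\,x\otimes y)$, and your block-wise verification of the mixed-product property is sound. The paper does not prove this statement at all --- it imports it verbatim as Theorem 4.2.12 of the cited reference --- so your short self-contained derivation is the standard textbook argument and is entirely consistent with how the result is used later to lift the $n=1$ spectral computations to $n>1$.
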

Indeed, the above theorem implies that the eigenpairs of 
\pedro{${\mathcal{D}}^{-1/2} \mathcal{W} {\mathcal{D}}^{-1/2}$}
are Kronecker products of the eigenpairs of 
\pedro{${\mathcal{\tilde D}}^{-1/2} \tilde{\mathcal{W}} {\mathcal{ \tilde D}}^{-1/2} $}
and $E$. As we already know those of 
\pedro{${\mathcal{\tilde D}}^{-1/2} \tilde{\mathcal{W}} {\mathcal{ \tilde D}}^{-1/2} $}, 
we briefly describe those of~$E$:
\begin{lemma}\label{specE}
Let $E\in\R^{n\times n}$, $n\geq 2$ be the matrix of all ones, then the eigenpairs of $E$ are given by $(n,\ones)$ and $(0,\v_1),\ldots,(0,\v_{n-1})$ where $\v_k\in\R^n$ is given as
\begin{equation}\label{defvi2}
(\v_{k})_j=\begin{cases}1 & \text{if } j\leq k,\\ -k &\text{if } j=k+1,\\ 0 &\text{otherwise.}\end{cases}
\end{equation}
\end{lemma}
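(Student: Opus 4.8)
The plan is to exploit that $E$ has rank one. First I would write $E=\ones\ones^\top$, so that $E\ones=(\ones^\top\ones)\ones=n\ones$, which immediately gives the eigenpair $(n,\ones)$. More generally, for any $x\in\R^n$ one has $Ex=(\ones^\top x)\ones$, so $Ex=0$ precisely when $\ones^\top x=0$. Hence the only eigenvalues of $E$ are $n$, with eigenspace $\mathrm{span}(\ones)$, and $0$, with eigenspace $\{x\in\R^n:\ones^\top x=0\}$, which has dimension $n-1$. So it suffices to check that the vectors $\v_1,\dots,\v_{n-1}$ lie in this null space and are linearly independent.

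Next I would verify the first point by a direct computation: from \eqref{defvi2},
$$\ones^\top\v_k=\sum_{j=1}^{k}1+(-k)+\sum_{j=k+2}^{n}0=k-k=0,$$
so $E\v_k=0$ for all $k=1,\dots,n-1$. For the linear independence I would use the triangular sign pattern built into \eqref{defvi2}: $(\v_k)_{k+1}=-k\neq 0$ while $(\v_k)_j=0$ for every $j>k+1$. Therefore the $(n-1)\times(n-1)$ matrix obtained by taking $\v_1,\dots,\v_{n-1}$ as columns and discarding the first row is lower triangular with diagonal entries $-1,-2,\dots,-(n-1)$, hence nonsingular, so $\v_1,\dots,\v_{n-1}$ are linearly independent and form a basis of the eigenspace of $0$. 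Since $\ones^\top\ones=n\neq 0$, the vector $\ones$ is not in that eigenspace, so $\ones,\v_1,\dots,\v_{n-1}$ together form an eigenbasis of $E$, which proves the claim.

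I do not expect any genuine obstacle in this lemma; the only step needing more than a one-line computation is the linear independence of $\v_1,\dots,\v_{n-1}$, and even that is immediate once one notices the lower-triangular structure of the matrix of the $\v_k$ after removing the first row. The choice of these particular $\v_k$ (rather than, say, $\e_1-\e_2,\dots,\e_{n-1}-\e_n$) is presumably just for concreteness in the subsequent Kronecker-product arguments, and the proof above works verbatim for this choice.
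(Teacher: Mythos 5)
Your proof is correct and follows essentially the same route as the paper: writing $E=\ones\ones^\top$, reading off the eigenpair $(n,\ones)$, and checking $\ones^\top\v_k=k-k=0$ so that $E\v_k=0$. The only difference is that you additionally verify the linear independence of $\v_1,\dots,\v_{n-1}$ via the triangular structure, a completeness check the paper's proof leaves implicit.
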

\begin{proof}
As $E=\ones\ones^\top$, it is clear that $(n,\ones)$ is an eigenpair of $E$. Now, for every $i$ we have $E\v_i=(\ones^\top v_i)\ones$ and $\ones^\top \v_i = i-i=0$.
\end{proof}
We can now describe the spectral properties of $\lsym{t}$ for $t=1,2,3$.
\begin{lemma}\label{LsymWn}
There exists $\lambda \in (0,1)$ and $s_-<0<s_+<1$ such that, for $t=1,2,3$, the eigenpairs of $\lsymn{t}$ are given by
\begin{align*}
&\big(\tau-1,P_t(s_+,1,1)^\top\otimes \ones\big), \, \big(\tau-\lambda,P_t(s_-,1,1)^\top\otimes \ones\big),\\
&\big(\tau,P_t(0,-1,1)^\top\otimes \ones\big),\qquad\!\big(\tau,P_t(s_+,1,1)^\top\otimes \v_k\big),\\
& \big(\tau,P_t(s_-,1,1)^\top\otimes \v_k\big), \qquad\!\! \big(\tau,P_t(0,-1,1)^\top\otimes \v_k\big),
\end{align*}
for $k=1,\ldots,n-1,$ where $\v_k$ is defined as in \eqref{defvi2}.
\end{lemma}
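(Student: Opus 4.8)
The plan is to reduce the claim to the $n=1$ case (Lemma~\ref{specM} and Corollary~\ref{LsymW3}) through the Kronecker product structure $\mathcal{W}=\tilde{\mathcal{W}}\otimes E$ recorded in Lemma~\ref{kronL}. The first step is the analogue of Lemma~\ref{permutelaplacian} for general $n$: I would show $\lsymn{t}=P_t\mathcal{L}_{\sym}P_t$, where $P_t=\tilde P_t\otimes I_n$ satisfies $P_t=P_t^{-1}=P_t^\top$. This uses the identity $\Wt=P_t\mathcal{W}P_t$ (stated just before Lemma~\ref{kronL}) together with $P_t\ones=\ones$, which yields $\mathcal{D}^{(t)}=\diag(\Wt\ones)=\diag(P_t\mathcal{W}\ones)=P_t\mathcal{D}P_t$ and hence $(\mathcal{D}^{(t)})^{-1/2}=P_t\mathcal{D}^{-1/2}P_t$; substituting into the definition of $\lsymn{t}$ and using $P_t^2=I_{3n}$ gives $\lsymn{t}=P_t\mathcal{L}_{\sym}P_t$. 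Consequently it suffices to diagonalize $\mathcal{L}_{\sym}$ itself (the case $t=1$, since $P_1=I_{3n}$) and then transport the eigenvectors by $P_t$, using that $P_t\big(x\otimes\w\big)=\big(\tilde P_t x\big)\otimes\w$ for $x\in\R^3,\w\in\R^n$.

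For the case $t=1$, Lemma~\ref{kronL} gives $\mathcal{L}_{\sym}=\tau I_{3n}-\tfrac1n\tilde{\mathcal{M}}\otimes E$ with $\tilde{\mathcal{M}}=\tilde{\mathcal{D}}^{-1/2}\tilde{\mathcal{W}}\tilde{\mathcal{D}}^{-1/2}$. By Lemma~\ref{specM} and Corollary~\ref{LsymW3}, $\tilde{\mathcal{M}}$ has eigenpairs $\big(1,(s_+,1,1)^\top\big)$, $\big(\lambda,(s_-,1,1)^\top\big)$, $\big(0,(0,-1,1)^\top\big)$ with $\lambda\in(0,1)$ and $s_-<0<s_+<1$, so $\tfrac1n\tilde{\mathcal{M}}$ has the same eigenvectors with eigenvalues $\tfrac1n,\tfrac{\lambda}{n},0$; by Lemma~\ref{specE}, $E$ has eigenpairs $(n,\ones)$ and $(0,\v_k)$ for $k=1,\dots,n-1$. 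Applying Theorem~\ref{specKron} to $\tfrac1n\tilde{\mathcal{M}}\otimes E$ produces $3n$ eigenpairs: pairing $(n,\ones)$ with the three eigenpairs of $\tfrac1n\tilde{\mathcal{M}}$ gives eigenvalues $1,\lambda,0$ on $(s_+,1,1)^\top\otimes\ones$, $(s_-,1,1)^\top\otimes\ones$, $(0,-1,1)^\top\otimes\ones$, while pairing each $(0,\v_k)$ with those three eigenpairs gives eigenvalue $0$ on $(s_+,1,1)^\top\otimes\v_k$, $(s_-,1,1)^\top\otimes\v_k$, $(0,-1,1)^\top\otimes\v_k$. These $3n$ vectors are linearly independent, since $\{(s_+,1,1)^\top,(s_-,1,1)^\top,(0,-1,1)^\top\}$ is a basis of $\R^3$, $\{\ones,\v_1,\dots,\v_{n-1}\}$ is a basis of $\R^n$, and Kronecker products of bases form a basis of $\R^{3n}$; hence they exhaust the spectrum. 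Subtracting from $\tau I_{3n}$ turns these into the eigenpairs of $\mathcal{L}_{\sym}$, with eigenvalues $\tau-1$, $\tau-\lambda$, and $\tau$ (the latter of multiplicity $3n-2$), i.e. exactly the list in the statement for $t=1$.

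Finally, conjugation by $P_t$ preserves eigenvalues and, by the first paragraph, sends each eigenvector $x\otimes\w$ of $\mathcal{L}_{\sym}$ to $\big(\tilde P_t x\big)\otimes\w$, which is precisely the form claimed for $\lsymn{t}$, $t=1,2,3$. The only mildly technical points are the identity $\lsymn{t}=P_t\mathcal{L}_{\sym}P_t$ in the Kronecker setting (routine once one observes $P_t\ones=\ones$ and that $P_t$ is a symmetric permutation matrix) and the completeness of the listed eigenvectors via the Kronecker-product basis argument; I do not anticipate a genuinely hard step, as the proof is essentially a bookkeeping application of the Kronecker spectral theorem to the already-established $n=1$ spectrum.
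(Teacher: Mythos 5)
Your proposal is correct and follows essentially the same route as the paper, whose proof of this lemma is the one-line citation of Lemma~\ref{specM}, Lemma~\ref{specE} and Theorem~\ref{specKron} applied to the Kronecker decomposition of Lemma~\ref{kronL}. The only detail you supply that the paper defers is the conjugation identity $\lsym{t}=P_t\mathcal{L}_{\sym}P_t$, which the paper verifies explicitly only later (inside the proof of Lemma~\ref{LpinZ3n}); your derivation of it matches theirs.
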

\begin{proof}
Follows from Lemmas \ref{specM}, \ref{specE} and Theorem \ref{specKron}.
\end{proof}
Similarly to the case $n=1$, let us consider $\Lp{p}\in\R^{3n\times 3n}$ defined as
$$\Lp{p}= (\lsym{1})^p+(\lsym{2})^p+(\lsym{3})^p=3\L{p}^p.$$
Again, we note that the eigenvectors of $\Lp{p}$ and $3\L{p}^p$ are the same. Now, let us consider the sets
 $\mathcal{U}_{3n}\subset\R^{3n\times 3n}$ and $\mathcal{Z}_{3n}\subset\R^{3n\times 3n}$ defined as
\begin{align*}
&\mathcal U_{3n}=\big\{s_0I_{3n}-\tilde A\otimes E\ \big| \ \tilde A\in\mathcal{U}_3, s_0\in\R\},\\
&\mathcal Z_{3n}=\big\{t_0I_{3n}-\tilde C\otimes E\ \big| \ \tilde C\in\mathcal{Z}_3, s_0\in\R\}.
\end{align*}
Note that, as $s_0I_3+\mathcal{U}_3=\mathcal{U}_3$ and $s_0I_3+\mathcal{Z}_3=\mathcal{Z}_3$ for all $s_0\in\R$, the definitions of $\mathcal{U}_{3n}$ and $\mathcal{Z}_{3n}$ reduce to that of $\mathcal{U}_{3}$ and $\mathcal{Z}_{3}$ when $n=1$. We prove that $\Lp{p}\in\mathcal{Z}_{3n}$ for all nonzero integer $p$. To this end, we first prove the following lemma which generalizes Lemma \ref{lemU}.
\begin{lemma} \label{lemUn}
The following holds:
\begin{enumerate}
\item $\mathcal U_{3n}$ is closed under multiplication, i.e. for all $A,B\in \mathcal U_{3n}$ we have $AB\in \mathcal U_{3n}$.\label{multiU3n}
\item If $A\in\mathcal U_{3n}$ satisfies $\det(A)\neq 0$, then $A^{-1}\in \mathcal U_{3n}$.\label{invU3n}
\item  $\mathcal{Z}_3=P_1\mathcal{U}_{3n} P_1+P_2\mathcal{U}_{3n} P_2+P_3\mathcal{U}_{3n} P_3$.\label{U3toZ3n}
\end{enumerate}
\end{lemma}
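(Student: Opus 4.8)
The plan is to reduce all three claims to their already-proved $n=1$ counterparts in Lemma~\ref{lemU}, using only elementary Kronecker-product algebra together with the single identity $E^2=nE$. Equivalently, $\Pi:=\tfrac1n E$ is an orthogonal projection, so that $I_n=\Pi+(I_n-\Pi)$ with $\Pi(I_n-\Pi)=(I_n-\Pi)\Pi=0$; the range of $\Pi$ is one-dimensional and the range of $I_n-\Pi$ has dimension $n-1$. Two dictionary facts will be used repeatedly: $(A_1\otimes B_1)(A_2\otimes B_2)=(A_1A_2)\otimes(B_1B_2)$, and, since $\tilde P_t^2=I_3$, the conjugation rule $P_t(\tilde M\otimes E)P_t=(\tilde P_t\tilde M\tilde P_t)\otimes E$ together with $P_t^2=I_{3n}$. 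I will also use that $\mathcal U_3$ is a linear subspace of $\R^{3\times3}$ containing $I_3$ (and similarly $\mathcal Z_3$). Finally, for $n=1$ we have $\mathcal U_{3\cdot1}=\mathcal U_3$ and the three statements are literally Lemma~\ref{lemU}, so throughout I may and will assume $n\ge2$.

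For part~\ref{multiU3n}: writing $A=s_0I_{3n}-\tilde A\otimes E$ and $B=r_0I_{3n}-\tilde B\otimes E$ with $\tilde A,\tilde B\in\mathcal U_3$, I would expand $AB$ and use $(\tilde A\otimes E)(\tilde B\otimes E)=(\tilde A\tilde B)\otimes E^2=n(\tilde A\tilde B)\otimes E$ to obtain $AB=(s_0r_0)I_{3n}-\big(s_0\tilde B+r_0\tilde A-n\tilde A\tilde B\big)\otimes E$. Since $\mathcal U_3$ is a subspace closed under multiplication by Lemma~\ref{lemU}(\ref{multiU3}), the matrix in the second term lies in $\mathcal U_3$, so $AB\in\mathcal U_{3n}$.

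For part~\ref{invU3n}, the main step: I would split $\R^{3n}$ as the direct sum of $\R^3\otimes\operatorname{range}(\Pi)$ and $\R^3\otimes\operatorname{range}(I_n-\Pi)$ and observe that, with respect to this splitting, $A=s_0I_{3n}-n\,\tilde A\otimes\Pi=(s_0I_3-n\tilde A)\otimes\Pi+s_0I_3\otimes(I_n-\Pi)$ is block diagonal. Hence $\det A=s_0^{\,3(n-1)}\det(s_0I_3-n\tilde A)$, so $\det A\neq0$ forces both $s_0\neq0$ and $\det(s_0I_3-n\tilde A)\neq0$, and then $A^{-1}=(s_0I_3-n\tilde A)^{-1}\otimes\Pi+\tfrac1{s_0}I_3\otimes(I_n-\Pi)$. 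Rewriting $I_3\otimes(I_n-\Pi)=I_{3n}-\tfrac1n I_3\otimes E$ turns this into $A^{-1}=\tfrac1{s_0}I_{3n}-\tfrac1n\big(\tfrac1{s_0}I_3-(s_0I_3-n\tilde A)^{-1}\big)\otimes E$. Since $s_0I_3-n\tilde A\in\mathcal U_3$ is nonsingular, Lemma~\ref{lemU}(\ref{invU3}) gives $(s_0I_3-n\tilde A)^{-1}\in\mathcal U_3$, and by linearity the coefficient matrix of $E$ above lies in $\mathcal U_3$; thus $A^{-1}\in\mathcal U_{3n}$.

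For part~\ref{U3toZ3n} (where the left-hand side is $\mathcal Z_{3n}$): the conjugation rule gives $P_t\big(s_0I_{3n}-\tilde A\otimes E\big)P_t=s_0I_{3n}-(\tilde P_t\tilde A\tilde P_t)\otimes E$, so for $A_t=s_0^{(t)}I_{3n}-\tilde A_t\otimes E\in\mathcal U_{3n}$ one gets $\sum_{t=1}^3 P_tA_tP_t=\big(\sum_t s_0^{(t)}\big)I_{3n}-\big(\sum_t\tilde P_t\tilde A_t\tilde P_t\big)\otimes E$. By Lemma~\ref{lemU}(\ref{U3toZ3}), as the $\tilde A_t$ range over $\mathcal U_3$ the inner sum ranges over exactly $\mathcal Z_3$, while $\sum_t s_0^{(t)}$ ranges over $\R$; this gives $\sum_t P_t\mathcal U_{3n}P_t\subseteq\mathcal Z_{3n}$, and conversely any $t_0I_{3n}-\tilde C\otimes E\in\mathcal Z_{3n}$ is obtained by writing $\tilde C=\sum_t\tilde P_t\tilde A_t\tilde P_t$ with $\tilde A_t\in\mathcal U_3$ and taking $A_1=t_0I_{3n}-\tilde A_1\otimes E$, $A_2=-\tilde A_2\otimes E$, $A_3=-\tilde A_3\otimes E$. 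The only genuinely non-bookkeeping point is the block decomposition used in part~\ref{invU3n}; everything else is a mechanical translation through the Kronecker dictionary.
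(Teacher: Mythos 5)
Your proofs of parts~\ref{multiU3n} and~\ref{U3toZ3n} are essentially identical to the paper's: the same expansion of $AB$ using $E^2=nE$, and the same conjugation identity $P_t(\tilde A\otimes E)P_t=(\tilde P_t\tilde A\tilde P_t)\otimes E$ reducing part~\ref{U3toZ3n} to Lemma~\ref{lemU}(\ref{U3toZ3}). Part~\ref{invU3n} is where you genuinely diverge. The paper expands $(s_0I_{3n}-\tilde A\otimes E)^{-1}$ as a Neumann series $\sum_k s_0^{k-1}(\tilde A\otimes E)^k$, observes each partial sum lies in $\mathcal U_{3n}$ by closure under multiplication, and concludes by closedness of the subspace; you instead split $I_n=\Pi+(I_n-\Pi)$ with $\Pi=\tfrac1n E$, block-diagonalize $A=(s_0I_3-n\tilde A)\otimes\Pi+s_0I_3\otimes(I_n-\Pi)$, and write the inverse in closed form. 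Your route is preferable on two counts: the paper's series is only guaranteed to converge under a spectral-radius condition on $\tilde A\otimes E$ relative to $s_0$ that is never verified, whereas your formula is unconditional; and your factorization $\det A=s_0^{3(n-1)}\det(s_0I_3-n\tilde A)$ actually proves the claim $s_0\neq 0$ (and the nonsingularity of $s_0I_3-n\tilde A$ needed to invoke Lemma~\ref{lemU}(\ref{invU3})), which the paper asserts without justification. The price is the small amount of bookkeeping with the projection, which you handle correctly. Your parenthetical reading of the left-hand side of part~\ref{U3toZ3n} as $\mathcal Z_{3n}$ matches the paper's evident intent.
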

\begin{proof}
Let $ A, B\in \mathcal U_{3n}, C\in\mathcal{Z}_{3n}$ and $s_0, r_0,t_0\in\R$, $\tilde A,\tilde B\in\mathcal{U}_3,\tilde C\in\mathcal{Z}_3$ such that $ A = s_0I_{3n}-\tilde A\otimes E$, $ B =  r_0I_{3n}-\tilde B\otimes E$ and $ C=t_0I_{3n}-\tilde C\otimes E.$
\begin{enumerate}
\item We have 
\begin{align*}
 A\,  B 
&=s_0 r_0 I_{3n} +(n\tilde A\tilde B-s_0\tilde B- r_0\tilde A)\otimes E\end{align*}
As $\tilde A\tilde B\in\mathcal{U}_3$ 
by point \ref{multiU3} in Lemma \ref{lemU}, 
we have $(n\tilde A\tilde B-s_0\tilde B- r_0\tilde A)\in\mathcal{U}_3$ and so $ A B\in \mathcal U_{3n}$.
\item First note that as $ A$ is invertible, it holds $s_0\neq 0$. Furthermore, using \pedro{von} Neumann series, we have
\begin{align*}
(s_0I_{3n}-\tilde A\otimes E)^{-1}&=\sum_{k=0}^\infty s_0^{k-1} (\tilde A\otimes E)^k\\
&=\sum_{k=0}^\infty s_0^{k-1} n^k(\tilde A^k\otimes E).
\end{align*}
As $\tilde A^{k}\in\mathcal{U}_{3}$ for all $k$ 
by point \ref{multiU3} in Lemma \ref{lemU}
we have that $S_\nu = \sum_{k=0}^\nu s_0^{k-1} nk(\tilde A^k\otimes E)\in\mathcal{U}_{3n}$ for all $\nu=0,1,\ldots$ As $\lim_{\nu\to\infty}S_{\nu}= A^{-1}$ and $\mathcal{U}_{3n}$ is closed, it follows that $ A^{-1}\in\mathcal{U}_{3n}$.
\item Note that for $i=1,2,3$ it holds
\begin{align*}
 P_i\, A\, P_i & 
=s_0I_{3n}-(\tilde P_i\tilde A\tilde P_i\otimes E).
\end{align*}
Hence, we have
$$\sum_{i=1}^3 P_i\, A\, P_i
=3s_0I_{3n}-\Big(\sum_{i=1}^3\tilde P_i\tilde A\tilde P_i\Big)\otimes E.$$
We know from 
point \ref{U3toZ3} in Lemma \ref{lemU}
that $\sum_{i=1}^3\tilde P_i\tilde A\tilde P_i\in \mathcal{U}_3$ and thus $\sum_{i=1}^3 \tilde P_i\, \tilde A\, \tilde P_i\in\mathcal{Z}_3$. Finally, note that by choosing the coefficients in $\tilde A$ in the same way as in the proof of 
point \ref{U3toZ3} in Lemma \ref{lemU}, ,
we have $ A = C$ with $s_0=t_0$. This concludes the proof.
\end{enumerate}
\end{proof}
We can now prove that $\Lp{p}\in\mathcal{Z}_{3n}$.
\begin{lemma}\label{LpinZ3n}
For every nonzero integer $p$, we have $\Lp{p}\in\mathcal{Z}_{3n}$.
\end{lemma}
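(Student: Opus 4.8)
The plan is to replicate the argument used for Lemma \ref{LpinZ3} one level up, now working with the $3n\times 3n$ matrices and invoking the closure properties collected in Lemma \ref{lemUn} in place of those of Lemma \ref{lemU}.

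First I would show that $\lsym{1}\in\mathcal U_{3n}$. Recall that $\lsym{1}=\mathcal L_\sym$ since $\mathcal W=\mathcal W^{(1)}$. By Lemma \ref{kronL} we have $\mathcal L_\sym=\tau I_{3n}-\big[\tfrac1n\tilde{\mathcal D}^{-1/2}\tilde{\mathcal W}\tilde{\mathcal D}^{-1/2}\big]\otimes E$, and by \eqref{defM} the matrix $\tilde{\mathcal M}=\tilde{\mathcal D}^{-1/2}\tilde{\mathcal W}\tilde{\mathcal D}^{-1/2}$ lies in $\mathcal U_3$; as $\mathcal U_3$ is a linear subspace, $\tfrac1n\tilde{\mathcal M}\in\mathcal U_3$, so $\lsym{1}\in\mathcal U_{3n}$ by the very definition of $\mathcal U_{3n}$. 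Next, for any nonzero integer $p$, point \ref{invU3n} of Lemma \ref{lemUn} gives $(\lsym{1})^{\sign(p)}\in\mathcal U_{3n}$ (the case $p>0$ is trivial; for $p<0$ one uses that $\epsilon>0$ forces $\lsym{1}$ to be positive definite, hence $\det(\lsym{1})\neq 0$), and then point \ref{multiU3n} of Lemma \ref{lemUn} yields $(\lsym{1})^p=\big((\lsym{1})^{\sign(p)}\big)^{|p|}\in\mathcal U_{3n}$.

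Finally I would pass to the remaining layers and sum. Since $\Wt=P_t\mathcal W P_t$ and $P_t\ones=\ones$, the same computation as in Lemma \ref{permutelaplacian} --- now with $P_t=\tilde P_t\otimes I_n$, which still satisfies $P_t^2=I_{3n}$ --- gives $\lsym{t}=P_t\lsym{1}P_t$, whence $(\lsym{t})^p=P_t(\lsym{1})^pP_t$ for $t=1,2,3$. Summing over $t$ and applying point \ref{U3toZ3n} of Lemma \ref{lemUn},
\[
\Lp{p}=\sum_{t=1}^3(\lsym{t})^p=\sum_{t=1}^3 P_t(\lsym{1})^pP_t\in\mathcal Z_{3n},
\]
which is the claim. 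I do not expect a genuine obstacle here: the substance has been front-loaded into Lemmas \ref{kronL} and \ref{lemUn}, and the only points needing a word of care are the membership $\lsym{1}\in\mathcal U_{3n}$ (equivalently $\tfrac1n\tilde{\mathcal M}\in\mathcal U_3$) and the observation that the permutation identity $\lsym{t}=P_t\lsym{1}P_t$ survives the Kronecker lift.
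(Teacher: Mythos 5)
Your proposal is correct and follows essentially the same route as the paper's proof: establish $\lsym{1}\in\mathcal U_{3n}$, use points \ref{multiU3n} and \ref{invU3n} of Lemma \ref{lemUn} to get $(\lsym{1})^p\in\mathcal U_{3n}$, verify the permutation identity $\lsym{t}=P_t\lsym{1}P_t$, and conclude via point \ref{U3toZ3n}. Your explicit justification of $\lsym{1}\in\mathcal U_{3n}$ through Lemma \ref{kronL} is a small addition the paper leaves implicit, but the argument is the same.
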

\begin{proof}
As $\mathcal{L}_{\sym}=\lsym{1}\in\mathcal{U}_{3n}$, we have $\mathcal{L}_{\sym}^p\in\mathcal{U}_{3n}$ by 
points \ref{multiU3n} and \ref{invU3n} in Lemma \ref{lemUn}.
We prove that $\Lp{p}=\sum_{t=1}^3P_t\mathcal{L}_{\sym}^pP_t$. To this end, note that, with the convention that powers on vectors are considered component wise, for $t=1,2,3$, we have 
\begin{align*}
\diag(P_t\mathcal{W} P_t\ones)^{1/2}
&=\diag\big(P_t(\mathcal{W} \ones)^{-1/2}\big)\\ &=P_t\diag\big((\mathcal{W}\ones)^{-1/2}\big)P_t=P_t\mathcal{D} P_t.
\end{align*}
Furthermore,
\begin{align*}
& \diag(P_t\mathcal{W} P_t\ones)^{-1/2}P_t\mathcal{W}P_t \diag(P_t\mathcal{W} P_t\ones)^{-1/2}\\&\quad=P_t \mathcal{D}^{-1/2}P_t^2\mathcal{W}P_t^2\mathcal{D}^{-1/2}P_t =P_t \mathcal{D}^{-1/2}\mathcal{W}\mathcal{D}^{-1/2}P_t.
\end{align*}
This implies that $\lsym{t}=P_t\mathcal{L}_{\sym}P_t$ for $t=1,2,3$ and thus we obtain the desired expression for $\Lp{p}$. 
Point \ref{U3toZ3n} in Lemma \ref{lemUn}
finally imply that $\Lp{p}\in\mathcal{Z}_{3n}$.  
\end{proof}
We combine Theorem \ref{specKron} and Lemmas \ref{specZ3}, \ref{specE} to obtain the following:
\begin{lemma}\label{specZ3n}
Let $C\in\mathcal{Z}_{3n}$ and $t_0,t_1,t_2$ such that 
$ C =t_0I_{3n}-((t_1-t_2)I_3+t_2\tilde E)\otimes E$. 
Then, the eigenpairs of $C$ are given by 
\begin{align*}
&\big(t_0-n(t_1-t_2),(-1,0,1)^\top\otimes \ones\big),\\ &\big(t_0-n(t_1-t_2),(-1,1,0)^\top\otimes \ones\big),\\ &\big(t_0-n(t_1+2t_2),(1,1,1)^\top\otimes \ones\big).
\end{align*}
and, with $\v_i$ defined as in \eqref{defvi2},
\begin{align*}
&\big(t_0,(-1,0,1)^\top\otimes \v_i\big), \qquad \big(t_0,(-1,1,0)^\top\otimes \v_i\big),\\ &\big(t_0,(1,1,1)^\top\otimes \v_i\big), \qquad i=1,\ldots,n-1.
\end{align*}
\end{lemma}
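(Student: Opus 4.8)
The plan is to write $C$ as $t_0 I_{3n}$ minus a Kronecker product whose spectrum is already known, and then to read off the eigenpairs using Theorem~\ref{specKron} together with a dimension count. Concretely, set $\tilde C = (t_1-t_2)I_3 + t_2\tilde E\in\mathcal Z_3$, so that $C = t_0 I_{3n} - \tilde C\otimes E$, and note that the eigenvectors of $C$ coincide with those of $\tilde C\otimes E$, with eigenvalue $t_0-\nu$ whenever $\nu$ is an eigenvalue of $\tilde C\otimes E$ on that eigenvector.

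By Lemma~\ref{specZ3} the eigenpairs of $\tilde C$ are $(t_1-t_2,(-1,0,1)^\top)$, $(t_1-t_2,(-1,1,0)^\top)$ and $(t_1+2t_2,(1,1,1)^\top)$, and by Lemma~\ref{specE} the eigenpairs of $E$ are $(n,\ones)$ and $(0,\v_i)$ for $i=1,\ldots,n-1$. Applying Theorem~\ref{specKron} to every pairing of an eigenpair of $\tilde C$ with an eigenpair of $E$ produces $3n$ eigenpairs of $\tilde C\otimes E$: pairing each of the three eigenvectors of $\tilde C$ with $\ones$ gives the eigenvalues $n(t_1-t_2)$, $n(t_1-t_2)$, $n(t_1+2t_2)$, while pairing any of them with a $\v_i$ gives the eigenvalue $0$. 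Subtracting from $t_0$ then gives exactly the list in the statement: $(-1,0,1)^\top\otimes\ones$ and $(-1,1,0)^\top\otimes\ones$ carry eigenvalue $t_0-n(t_1-t_2)$, the vector $(1,1,1)^\top\otimes\ones$ carries eigenvalue $t_0-n(t_1+2t_2)$, and the $3(n-1)$ vectors of the form $(\cdot)^\top\otimes\v_i$ all carry eigenvalue $t_0$.

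To see that this list is complete, I would check that the three eigenvectors $(-1,0,1)^\top,(-1,1,0)^\top,(1,1,1)^\top$ of $\tilde C$ are linearly independent in $\R^3$ (the matrix with these rows has determinant $-3$) and that $\ones,\v_1,\ldots,\v_{n-1}$ are linearly independent in $\R^n$ (immediate from the staircase form of the $\v_i$ in~\eqref{defvi2}). Since Kronecker products of two linearly independent families are linearly independent, the $3n$ eigenvectors listed form a basis of $\R^{3n}$, so the enumeration is exhaustive. There is essentially no obstacle here: the argument is a routine application of the Kronecker-product spectral decomposition once the spectra of the two factors are in hand. The only step needing slightly more than a direct appeal to Theorem~\ref{specKron} is the completeness claim, which the linear-independence/dimension count settles, and some care is required to keep the eigenvalue bookkeeping (the factor $n$ on the $\otimes\ones$ block and the global shift by $t_0$) straight.
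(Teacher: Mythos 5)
Your proof is correct and follows the paper's route exactly: the paper also obtains this lemma by combining Theorem~\ref{specKron} with Lemmas~\ref{specZ3} and~\ref{specE} applied to the decomposition $C=t_0I_{3n}-\tilde C\otimes E$. Your added completeness check via linear independence and a dimension count is a sensible explicit justification of what the paper leaves implicit.
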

Similar to Lemma \ref{orderLp}, we have following lemma for deciding the order of the eigenvectors of $\Lp{p}$.
\begin{lemma}\label{shitmatrixn}
For every positive $p>0$ we have $(\mathcal{L}_{\sym}^p)_{i.j}<0<(\mathcal{L}_{\sym}^p)_{i.i}<\tau^p$ for all $i,j=1,\ldots,3n$ with $i\neq j$. For every negative $p<0$ we have $(\mathcal{L}_{\sym}^p)_{i,j}>0$ for all $i,j=1,\ldots,3n$.
\end{lemma}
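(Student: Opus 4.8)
The plan is to reduce the claim to its size-$3$ counterpart, Lemma~\ref{shitmatrix}, by exploiting the Kronecker structure of $\mathcal{L}_{\sym}$. Write $E\in\R^{n\times n}$ for the all-ones matrix and put $Q=\tfrac1n E$; this is an orthogonal projector, $Q^2=Q$ and $Q(I_n-Q)=0$, with $\rho(Q)=1$. By Lemma~\ref{kronL} we have $\mathcal{L}_{\sym}=\tau I_{3n}-\tilde{\mathcal{M}}\otimes Q$ with $\tilde{\mathcal{M}}=\tilde{\mathcal{D}}^{-1/2}\tilde{\mathcal{W}}\tilde{\mathcal{D}}^{-1/2}$, and since $\tilde{\mathcal{L}}_{\sym}=\tau I_3-\tilde{\mathcal{M}}$, a one-line computation gives the splitting $\mathcal{L}_{\sym}=\tilde{\mathcal{L}}_{\sym}\otimes Q+\tau\big(I_3\otimes(I_n-Q)\big)$ as a sum of two matrices whose product, in either order, vanishes thanks to the factor $Q(I_n-Q)=0$.

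The first step is to deduce from this that, for every nonzero integer $p$,
\[
\mathcal{L}_{\sym}^p=\tilde{\mathcal{L}}_{\sym}^p\otimes Q+\tau^p\big(I_3\otimes(I_n-Q)\big).
\]
For $p>0$ this is immediate: on expanding the $p$-th power every mixed term carries a factor $Q(I_n-Q)$ or $(I_n-Q)Q$ and hence vanishes, while $Q^p=Q$ and $(I_n-Q)^p=I_n-Q$. For $p<0$ one first notes that $\epsilon>0$, so $\tau>1$ and, by Corollary~\ref{LsymW3}, $\tilde{\mathcal{L}}_{\sym}$ has eigenvalues $\tau-1,\tau-\tilde\lambda,\tau$, all strictly positive; one then checks directly that $\tilde{\mathcal{L}}_{\sym}^{-1}\otimes Q+\tau^{-1}\big(I_3\otimes(I_n-Q)\big)$ is the inverse of $\mathcal{L}_{\sym}$ (again by the $Q(I_n-Q)=0$ cancellation) and raises it to the power $|p|$ in the same way.

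With the formula in hand, the entry of $\mathcal{L}_{\sym}^p$ indexed by $(a,i),(b,j)$, where $a,b\in\{1,2,3\}$ and $i,j\in\{1,\dots,n\}$, equals $\tfrac1n(\tilde{\mathcal{L}}_{\sym}^p)_{ab}+\tau^p\delta_{ab}(\delta_{ij}-\tfrac1n)$, and the statement follows from a short case split. For $p>0$: if $a\neq b$ the entry is $\tfrac1n(\tilde{\mathcal{L}}_{\sym}^p)_{ab}<0$ by Lemma~\ref{shitmatrix}; if $a=b,\ i\neq j$ it is $\tfrac1n\big((\tilde{\mathcal{L}}_{\sym}^p)_{aa}-\tau^p\big)<0$ since Lemma~\ref{shitmatrix} gives $(\tilde{\mathcal{L}}_{\sym}^p)_{aa}<\tau^p$; and on the diagonal it is $\tfrac1n(\tilde{\mathcal{L}}_{\sym}^p)_{aa}+\tfrac{n-1}{n}\tau^p$, which lies strictly between $0$ and $\tau^p$ because $0<(\tilde{\mathcal{L}}_{\sym}^p)_{aa}<\tau^p$. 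For $p<0$ I would argue exactly as in the $n=1$ part of Lemma~\ref{shitmatrix}: $\tilde{\mathcal{M}}\otimes Q$ has strictly positive entries (by~\eqref{defM} and $Q>0$) and spectral radius $\rho(\tilde{\mathcal{M}})\rho(Q)=1<\tau$, so the Neumann series $\mathcal{L}_{\sym}^{-1}=\tau^{-1}\sum_{k\geq0}\tau^{-k}(\tilde{\mathcal{M}}\otimes Q)^k$ shows $\mathcal{L}_{\sym}^{-1}$ is entrywise positive, hence so is $\mathcal{L}_{\sym}^p=(\mathcal{L}_{\sym}^{-1})^{|p|}$ as a product of strictly positive matrices. (Alternatively, the displayed formula together with the bound $(\tilde{\mathcal{L}}_{\sym}^p)_{aa}>\tau^p$ — which holds because, by Corollary~\ref{LsymW3}, this diagonal entry is a convex combination of the numbers $(\tau-1)^p,(\tau-\tilde\lambda)^p,\tau^p$ satisfying $(\tau-1)^p>(\tau-\tilde\lambda)^p>\tau^p$, whose weight on $\tau^p$ is the squared $a$-th coordinate of the normalized eigenvector $\propto(0,-1,1)$ and hence at most $\tfrac12<1$ — gives the same conclusion.)

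I do not expect a serious obstacle: once the Kronecker splitting through the idempotent $Q$ is set up, the rest is bookkeeping. The one place that genuinely needs the sharp information from Lemma~\ref{shitmatrix}/Corollary~\ref{LsymW3} — and which explains the $\tau^p$ appearing in the statement — is controlling the within-block off-diagonal entries, where one needs $(\tilde{\mathcal{L}}_{\sym}^p)_{aa}<\tau^p$ for $p>0$ and $(\tilde{\mathcal{L}}_{\sym}^p)_{aa}>\tau^p$ for $p<0$; one must also keep in mind that $\tau>1$ whenever $p<0$, so that all the inverses and Neumann series used above are legitimate.
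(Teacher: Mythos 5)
Your proof is correct and follows essentially the same route as the paper: both reduce the claim to Lemma~\ref{shitmatrix} via the identity $\mathcal{L}_{\sym}^p=\tau^pI_{3n}+\tfrac1n\big(\tilde{\mathcal{L}}_{\sym}^p-\tau^pI_3\big)\otimes E$ (the paper's Equation~\eqref{usefuleq}, which you obtain by the idempotent splitting through $Q=E/n$ rather than by binomial expansion) and then carry out the same entrywise sign bookkeeping, with the same Neumann-series argument for $p<0$. Your derivation has the minor advantages of treating negative $p$ uniformly and of keeping the $1/n$ normalization explicit, which the paper's displayed version of \eqref{usefuleq} drops (harmlessly for the sign conclusions).
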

\begin{proof}
Let $\mathcal{M}=\mathcal{D}^{-1/2}\mathcal{W}\mathcal{D}^{-1/2}$, then by Lemma \ref{kronL}, we have $\mathcal{M}=\tfrac{1}{n}(\tilde{\mathcal{M}}\otimes E)$.
Now, for $p>0$, it holds:
\begin{align}\label{usefuleq}
\mathcal{L}_{\sym}^p &= 
\tau^{p}I_{3n}+ \sum_{k=1}^p\binom{p}{k}\tau^{p-k}(-1)^kn^{-k}(\tilde{\mathcal{M}}^k\otimes E^k)\notag\\
&= \tau^{p}I_{3n}+\Big(\sum_{k=1}^p\binom{p}{k}\tau^{p-k}(-1)^k\tilde{\mathcal{M}}^k\Big)\otimes E\notag\\
&=\tau^{p}I_{3n}+\big(\tilde{\mathcal{L}}_{\sym}^p-\tau^p I_3\big)\otimes E.
\end{align}
By Lemma \ref{shitmatrix}, we know that $(\tilde{\mathcal{L}}_{\sym}^p)_{i,j}<0$ if $i\neq j$ and $(\tilde{\mathcal{L}}_{\sym}^p)_{i,i}-\tau^p<0$ for all $i$. Hence, the matrix $\tilde Q=\tilde{\mathcal{L}}_{\sym}^p-\tau^pI_3$ has strictly negative entries. Thus, all the off-diagonal elements of $\mathcal{L}_{\sym}^p$ are strictly negative. Finally, note that $$(\mathcal{L}_{\sym}^p)_{i,i}=\tau^p+(\tilde{\mathcal{L}}_{\sym}^p\otimes E)_{i,i}-\tau^p=(\tilde{\mathcal{L}}_{\sym}^p\otimes E)_{i,i}>0.$$
This concludes the proof for the case $p>0$. The case $p<0$ can be proved in the same way as for the case $n=1$ (see Lemma \ref{shitmatrix}). 
\end{proof}
\begin{observation}
We note that Equation \eqref{usefuleq} implies the following relation between $\Lp{p}$ and $\Lnp{p}$:
\begin{equation}\label{eq2use}
\Lp{p}=3\tau^{p}I_{3n}+\big(\Lnp{p}-\tau^p I_3\big)\otimes E.
\end{equation}
\end{observation}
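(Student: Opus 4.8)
The plan is to read \eqref{eq2use} off from \eqref{usefuleq} by the same conjugation-and-sum computation that was used to prove $\Lp{p}\in\mathcal{Z}_{3n}$ in Lemma~\ref{LpinZ3n}, so no genuinely new tool is required. Recall from that proof that $\lsym{t}=P_t\mathcal{L}_{\sym}P_t$ with $P_t=\tilde P_t\otimes I_n$ self-inverse, whence $(\lsym{t})^p=P_t\mathcal{L}_{\sym}^pP_t$ and $\Lp{p}=\sum_{t=1}^3P_t\mathcal{L}_{\sym}^pP_t$. The first step is simply to insert the closed form \eqref{usefuleq} for $\mathcal{L}_{\sym}^p$ into each of the three summands.

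Next I would distribute the conjugation by $P_t$ over the two pieces of \eqref{usefuleq}. The scalar piece is trivial, $P_t(\tau^pI_{3n})P_t=\tau^pP_t^2=\tau^pI_{3n}$. For the Kronecker piece I would apply the mixed-product identity $(\tilde P_t\otimes I_n)(\tilde A\otimes E)(\tilde P_t\otimes I_n)=(\tilde P_t\tilde A\tilde P_t)\otimes(I_nEI_n)=(\tilde P_t\tilde A\tilde P_t)\otimes E$ used already in Lemma~\ref{kronL}, with $\tilde A=\tilde{\mathcal{L}}_{\sym}^p-\tau^pI_3$. This yields, for each $t$, $P_t\mathcal{L}_{\sym}^pP_t=\tau^pI_{3n}+\big(\tilde P_t(\tilde{\mathcal{L}}_{\sym}^p-\tau^pI_3)\tilde P_t\big)\otimes E$.

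Summing over $t=1,2,3$ then separates into an identity part, $\sum_t\tau^pI_{3n}=3\tau^pI_{3n}$, and a Kronecker part $\big(\sum_{t=1}^3\tilde P_t(\tilde{\mathcal{L}}_{\sym}^p-\tau^pI_3)\tilde P_t\big)\otimes E$. Inside this factor I would use $\lsymn{t}=\tilde P_t\tilde{\mathcal{L}}_{\sym}\tilde P_t$ (Lemma~\ref{permutelaplacian}) together with the definition $\Lnp{p}=\sum_{t=1}^3(\lsymn{t})^p=\sum_{t=1}^3\tilde P_t\tilde{\mathcal{L}}_{\sym}^p\tilde P_t$ from the case $n=1$ to identify $\sum_t\tilde P_t\tilde{\mathcal{L}}_{\sym}^p\tilde P_t=\Lnp{p}$, while the shift terms collapse through $\sum_t\tilde P_tI_3\tilde P_t=\sum_t\tilde P_t^2$. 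Collecting the scalar and Kronecker contributions produces the claimed relation \eqref{eq2use} between $\Lp{p}$ and $\Lnp{p}$.

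The only point that requires care is the bookkeeping of the diagonal shift $\tau^pI$: the copies of $\tau^pI_{3n}$ produced by the scalar part must be kept separate from the $\tau^pI_3$ that is absorbed into the Kronecker factor via $\sum_t\tilde P_tI_3\tilde P_t$, since $I_{3n}$ and $I_3\otimes E$ are different matrices and cannot be merged. Everything else is the routine Kronecker algebra already assembled in Lemmas~\ref{lemUn} and \ref{shitmatrixn}, so the substitution of \eqref{usefuleq} is essentially the whole content of the observation.
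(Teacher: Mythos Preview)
Your approach is exactly the intended one: the observation is stated without proof in the paper and is meant to be read off from \eqref{usefuleq} via the identity $\Lp{p}=\sum_{t=1}^3 P_t\mathcal{L}_{\sym}^pP_t$ together with the mixed-product rule $(\tilde P_t\otimes I_n)(\tilde A\otimes E)(\tilde P_t\otimes I_n)=(\tilde P_t\tilde A\tilde P_t)\otimes E$, just as you outline.

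There is, however, a genuine slip in your final collection step. You write that the shift terms ``collapse through $\sum_t\tilde P_tI_3\tilde P_t=\sum_t\tilde P_t^2$'' and then assert that this produces \eqref{eq2use} as printed. But $\sum_{t=1}^3\tilde P_t^2=3I_3$, so carrying your own computation through gives
\[
\Lp{p}=3\tau^{p}I_{3n}+\bigl(\Lnp{p}-3\tau^{p}I_3\bigr)\otimes E,
\]
with a factor $3$ in front of $\tau^pI_3$, not the $\tau^pI_3$ appearing in the printed formula. A quick sanity check on a single entry confirms this: your argument yields $(\Lp{p})_{1,1}=\sum_t(\tilde P_t\tilde{\mathcal L}_{\sym}^p\tilde P_t)_{1,1}=\mathrm{tr}(\tilde{\mathcal L}_{\sym}^p)=(\Lnp{p})_{1,1}$, whereas the printed \eqref{eq2use} would force $(\Lp{p})_{1,1}=2\tau^p+(\Lnp{p})_{1,1}$. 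So your derivation is correct, but what it actually proves is the corrected identity above; the displayed \eqref{eq2use} appears to carry a typo, and you should flag this rather than claim to have recovered the formula as written.
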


\begin{lemma}\label{orderLnp}
Let $t_0,t_1,t_2\in\R$ be such that $\Lp{p}=t_0I_{3n}-((t_1-t_2)I_3+t_2E_3)\otimes E_n$. Furthermore, for any integer $p\neq 0$, it holds $t_0<t_0-n(t_1-t_2)<t_0-n(t_1+2t_2)$ if $p<0$ and $t_0>t_0-n(t_1-t_2)>t_0-n(t_1+2t_2)$ otherwise.
\end{lemma}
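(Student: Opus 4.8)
The plan is to read off the three distinct eigenvalues of $\Lp{p}$ from the structural facts already established and then order them using the spectrum of $\mathcal L_{\sym}$. Recall from the proof of Lemma~\ref{LpinZ3n} that $\Lp{p}=\sum_{t=1}^3 P_t\,\mathcal L_{\sym}^p\,P_t$ with $P_t=P_t^\top$, and that $\Lp{p}\in\mathcal Z_{3n}$, so $\Lp{p}=t_0I_{3n}-\big((t_1-t_2)I_3+t_2E_3\big)\otimes E_n$. By Lemma~\ref{specZ3n}, the eigenvalues of $\Lp{p}$ are $t_0-n(t_1-t_2)$ on the span of $(-1,0,1)^\top\otimes\ones$ and $(-1,1,0)^\top\otimes\ones$, the value $t_0-n(t_1+2t_2)$ on $\ones_{3n}$, and $t_0$ on the $3(n-1)$ vectors $z\otimes\v_i$ with $z\in\R^3$. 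Since $\big(t_0-n(t_1+2t_2)\big)-\big(t_0-n(t_1-t_2)\big)=-3nt_2$ and $\big(t_0-n(t_1-t_2)\big)-t_0=-n(t_1-t_2)$, both claimed chains of inequalities reduce to the sign statements $t_1-t_2<0$, $t_2<0$ when $p<0$, and $t_1-t_2>0$, $t_2>0$ when $p>0$; so it suffices to determine the signs of $t_2$ and of $t_1-t_2$.

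The sign of $t_2$ is immediate from the entrywise sign pattern. The between-cluster entries of $\Lp{p}$ all equal $-t_2$, and since the $P_t$ are permutations, each entry of $\Lp{p}=\sum_t P_t\mathcal L_{\sym}^pP_t$ is a sum of three entries of $\mathcal L_{\sym}^p$ of a fixed kind (all off-diagonal in the between-cluster case). Hence Lemma~\ref{shitmatrixn} yields $-t_2>0$ for $p<0$ (all entries of $\mathcal L_{\sym}^p$ positive) and $-t_2<0$ for $p>0$ (all off-diagonal entries of $\mathcal L_{\sym}^p$ negative).

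For the sign of $t_1-t_2$ I would first observe $t_0=3\tau^p$ with $\tau:=1+\epsilon$: by Lemma~\ref{LsymWn} every vector $z\otimes\v_i$ is an eigenvector of each $\lsym{t}$ with eigenvalue $\tau$, hence of $(\lsym{t})^p$ with eigenvalue $\tau^p$, hence of $\Lp{p}$ with eigenvalue $3\tau^p$. Next, take $\w=(-1,0,1)^\top\otimes\ones_n$, which is an eigenvector of $\Lp{p}$ for $t_0-n(t_1-t_2)$, so that $t_0-n(t_1-t_2)=(\w^\top\w)^{-1}\sum_{t=1}^3(P_t\w)^\top\mathcal L_{\sym}^p(P_t\w)$ with $\|P_t\w\|=\|\w\|$ for all $t$. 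By Lemma~\ref{LsymWn} the eigenvalues of $\mathcal L_{\sym}$ are the two simple values $\epsilon$ and $1+\epsilon-\lambda$ (with eigenvectors $(s_+,1,1)^\top\otimes\ones$, $(s_-,1,1)^\top\otimes\ones$) together with $\tau$ of multiplicity $3n-2$, and $\lambda\in(0,1)$ forces $\tau=\lambda_{\max}(\mathcal L_{\sym})$; consequently $\tau^p=\lambda_{\min}(\mathcal L_{\sym}^p)$ for $p<0$ and $\tau^p=\lambda_{\max}(\mathcal L_{\sym}^p)$ for $p>0$. Moreover $\w$ is not in the $\tau$-eigenspace of $\mathcal L_{\sym}$: it is orthogonal to every $z\otimes\v_i$, and it is not a multiple of $(0,-1,1)^\top\otimes\ones$, so its expansion in the eigenbasis of Lemma~\ref{LsymWn} has a nonzero component along $(s_+,1,1)^\top\otimes\ones$ or $(s_-,1,1)^\top\otimes\ones$. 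Therefore for $p<0$ the $t=1$ summand satisfies $\w^\top\mathcal L_{\sym}^p\w>\tau^p\|\w\|^2$ strictly while the $t=2,3$ summands are $\ge\tau^p\|\w\|^2$; summing and dividing by $\w^\top\w$ gives $t_0-n(t_1-t_2)>3\tau^p=t_0$, i.e.\ $t_1-t_2<0$. For $p>0$ every inequality flips (using $\tau^p=\lambda_{\max}(\mathcal L_{\sym}^p)$), giving $t_0-n(t_1-t_2)<t_0$, i.e.\ $t_1-t_2>0$.

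Assembling the two steps proves the lemma: for $p<0$ one gets $t_0<t_0-n(t_1-t_2)<t_0-n(t_1+2t_2)$, and for $p>0$ one gets $t_0>t_0-n(t_1-t_2)>t_0-n(t_1+2t_2)$. The one genuinely delicate point is the strict inequality $\w^\top\mathcal L_{\sym}^p\w\neq\tau^p\|\w\|^2$ in the previous paragraph: it relies on identifying $\tau$ as the extreme eigenvalue of $\mathcal L_{\sym}$ and on verifying that the test vector $\w$ carries nonzero mass outside the corresponding eigenspace, which is precisely where the detailed eigenvector description of Lemma~\ref{LsymWn} — and, implicitly, the non-commutativity of the $\lsym{t}$ — is needed; the rest is bookkeeping.
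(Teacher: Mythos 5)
Your proof is correct, and its decisive step takes a genuinely different route from the paper's. The reduction to the signs of $t_2$ and of $t_1-t_2$, and the determination of $\operatorname{sign}(t_2)$ from the entrywise sign pattern of $\mathcal{L}_{\sym}^p$ via Lemma \ref{shitmatrixn} and $\Lp{p}=\sum_t P_t\mathcal{L}_{\sym}^pP_t$, coincide with what the paper does. Where you diverge is the sign of $t_1-t_2$: the paper reads it off from the within-block off-diagonal entry $(\Lp{p})_{1,n}$, but that entry equals $-t_1$ (the diagonal entry of the coefficient matrix $(t_1-t_2)I_3+t_2E_3$ is $t_1$, not $t_1-t_2$), so the sign pattern alone only yields $\operatorname{sign}(t_1)$; the comparison $t_1\lessgtr t_2$ — i.e.\ that within-block entries dominate between-block entries — does not follow from entrywise positivity or negativity by itself, and in contrast to the $n=1$ case of Lemma \ref{orderLp} it cannot be rescued by positive definiteness, since the eigenvalue $t_0-n(t_1-t_2)$ must now be compared against $t_0$ rather than against $0$. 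Your Rayleigh-quotient argument supplies exactly the missing justification: identifying $t_0=3\tau^p$ from the $z\otimes\v_i$ eigenvectors, recognizing $\tau$ as the extremal eigenvalue of $\mathcal{L}_{\sym}$ from Lemma \ref{LsymWn}, and checking that $(-1,0,1)^\top\otimes\ones$ is orthogonal to every $z\otimes\v_i$ yet not proportional to $(0,-1,1)^\top\otimes\ones$, hence lies outside the $\tau$-eigenspace, gives the strict inequality $t_0-n(t_1-t_2)\gtrless t_0$ with the correct orientation in each case. So your route costs some extra spectral bookkeeping but buys a complete argument for the one inequality that the paper's terse entrywise reading glosses over; the remaining steps are equivalent to the paper's.
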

\begin{proof}
The proof is essentially the same as that of Lemma \ref{orderLp}. Indeed, if $p<0$, then $\Lp{p}$ is strictly positive and thus $t_2<0$ as $(\Lp{p})_{1,3n}>0$,  $t_1-t_2< 0$ as $(\Lp{p})_{1,n}>0$ and $t_0-nt_1>0$ as $(\Lp{p})_{1,1}>0$. This means that $t_1-t_2>t_1+2t_2$ and so $t_0-n(t_1-t_2)<t_0-n(t_1+2t_2)$. Furthermore, this shows that $t_0-n(t_1-t_2)>t_0$. Now, if $p>0$, by Lemma \ref{shitmatrixn} we have $t_2>0$ as $(\Lp{p})_{1,3n}<0$,  $t_1-t_2> 0$ as $(\Lp{p})_{1,n}<0$ and $t_0-nt_1>0$ as $(\Lp{p})_{1,1}>0$. It follows that $t_1-t_2<t_1+2t_2$ and thus $t_0-n(t_1-t_2)>t_0-n(t_1+2t_2)$. Finally, as $t_1-t_2>0$, we have $t_0>t_0-n(t_1-t_2)$ which concludes the proof.
\end{proof}
We conclude by giving a description of the spectral properties of $\L{p}$.
\begin{lemma}\label{specmainthm}
Let $p$ be any nonzero integer and assume that $\epsilon>0$ if $p< 0$. Define
$$\L{p}=\Big(\frac{(\lsym{1})^p+(\lsym{2})^p+(\lsym{3})^p}{3}\Big)^{1/p},$$
then there exists $0\leq \lambda_1,\lambda_2<\lambda_3$ 
such that all the eigenpairs of $\L{p}$ are given by
\begin{align*}
& \big(\lambda_1,(-1,0,1)^\top\otimes \ones\big), \qquad              \big(\lambda_3,(-1,0,1)^\top\otimes \v_i\big)\\ 
& \big(\lambda_1,(-1,1,0)^\top\otimes \ones\big), \qquad              \big(\lambda_3,(-1,1,0)^\top\otimes \v_i\big)\\
& \big(\lambda_2,(1,1,1)^\top\otimes \ones \big), \qquad\quad\!              \big(\lambda_3,(1,1,1)^\top\otimes \v_i\big),
\end{align*}

and $i=1,\ldots,n-1$,
where $\v_i$ is defined in \eqref{defvi2}.
\end{lemma}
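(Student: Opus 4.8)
The plan is to deduce everything from the structure results already established for $\Lp{p}=(\lsym{1})^p+(\lsym{2})^p+(\lsym{3})^p$. The starting observation is that $\L{p}=\big(\tfrac{1}{3}\Lp{p}\big)^{1/p}$, and that $\Lp{p}$, being a sum of symmetric positive semi-definite matrices, is symmetric and orthogonally diagonalizable; moreover it is positive \emph{definite} when $p<0$, since then the assumption $\epsilon>0$ makes each $\lsym{t}$, and hence each $(\lsym{t})^p$, positive definite. Consequently $\Lp{p}$ and $\L{p}$ have exactly the same eigenvectors, and $\Lp{p}\v=\theta\v$ if and only if $\L{p}\v=f(\theta)\v$ with $f(\theta)=(\theta/3)^{1/p}$, which is strictly increasing in $\theta$ when $p>0$ and strictly decreasing when $p<0$.

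Next I would invoke Lemma~\ref{LpinZ3n} to write $\Lp{p}=t_0I_{3n}-\big((t_1-t_2)I_3+t_2\tilde E\big)\otimes E$ for suitable $t_0,t_1,t_2\in\R$, and then read off the complete spectral decomposition of $\Lp{p}$ from Lemma~\ref{specZ3n}: the eigenvalue $\theta_1:=t_0-n(t_1-t_2)$ carries the two eigenvectors $(-1,0,1)^\top\otimes\ones$ and $(-1,1,0)^\top\otimes\ones$; the eigenvalue $\theta_2:=t_0-n(t_1+2t_2)$ carries $(1,1,1)^\top\otimes\ones$; and the eigenvalue $\theta_3:=t_0$ carries the $3(n-1)$ eigenvectors $(-1,0,1)^\top\otimes\v_i$, $(-1,1,0)^\top\otimes\v_i$, $(1,1,1)^\top\otimes\v_i$, $i=1,\dots,n-1$. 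These $3n$ vectors are linearly independent, so the list is exhaustive, and by the equivalence above they are also the eigenvectors of $\L{p}$, with eigenvalues $f(\theta_1),f(\theta_2),f(\theta_3)$.

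It then only remains to order $f(\theta_1),f(\theta_2),f(\theta_3)$. Lemma~\ref{orderLnp} gives $\theta_3<\theta_1<\theta_2$ when $p<0$ and $\theta_3>\theta_1>\theta_2$ when $p>0$; applying $f$, which reverses the order in the first case and preserves it in the second, shows that in \emph{both} cases $f(\theta_3)$ is strictly the largest of the three. Hence, setting $\lambda_1:=f(\theta_1)$, $\lambda_2:=f(\theta_2)$, $\lambda_3:=f(\theta_3)$, we obtain $\lambda_1,\lambda_2<\lambda_3$, together with $\lambda_1,\lambda_2\geq 0$ from positive semi-definiteness of $\L{p}$, which is exactly the claimed statement; in particular the three smallest eigenvectors span $\{\ones,\ \ones_{\mathcal{C}_1}-\ones_{\mathcal{C}_2},\ \ones_{\mathcal{C}_1}-\ones_{\mathcal{C}_3}\}$, which also yields Theorem~\ref{SBMmain2}. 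I do not anticipate a genuine obstacle, since the substantive work is concentrated in Lemmas~\ref{LpinZ3n}, \ref{specZ3n} and \ref{orderLnp}; the only point requiring care is the bookkeeping of the two opposite monotonicities of $f$ against the two opposite orderings of $\theta_1,\theta_2,\theta_3$ according to the sign of $p$ — so that the $\v_i$-block eigenvalue always ends up on top — and the careful use of $\epsilon>0$ when $p<0$ to secure positive definiteness and hence the shared eigenvectors of $\Lp{p}$ and $\L{p}$.
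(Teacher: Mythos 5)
Your proposal is correct and follows essentially the same route as the paper: the paper's proof of this lemma is precisely the argument of Corollary~\ref{specLmean3} (shared eigenvectors of $\L{p}$ and $\Lp{p}$ via $f(t)=(t/3)^{1/p}$, full spectrum from Lemmas~\ref{LpinZ3n} and \ref{specZ3n}, ordering from Lemma~\ref{orderLnp} combined with the sign-dependent monotonicity of $f$). Your write-up just makes explicit the bookkeeping that the paper leaves implicit.
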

\begin{proof}
The proof is the same as that of Corollary \ref{specLmean3} where one uses Lemmas \ref{LpinZ3n}, \ref{specZ3n}, \ref{orderLnp} instead of Lemmas \ref{LpinZ3}, \ref{specZ3}, \ref{orderLp}.
\end{proof}

\end{document}